\newcommand{\A}[0]{\mathcal{A}}
\newcommand{\V}[0]{\mathcal{V}}
\newcommand{\E}[0]{\mathcal{E}}
\newcommand{\pr}[0]{\mathbb{P}}
\newcommand{\G}[0]{\mathcal{G}}
\newcommand{\D}[0]{\mathcal{D}}
\newcommand{\kl}[1]{\text{KL}\left( #1 \right)}
\newcommand{\bmu}[0]{\boldsymbol{\mu}}
\newcommand{\bmuh}[0]{\hat{\bmu}}
\newcommand{\muh}[0]{\hat{\mu}}
\newcommand{\rh}[0]{\hat{\mathbf{R}}}
\newcommand{\rhi}[1]{\hat{\mathbf{R}}_i(#1)}
\newcommand{\rhj}[1]{\hat{\mathbf{R}}_j(#1)}
\newcommand{\rhdi}[1]{\hat{{R}}_i^d(#1)}
\newcommand{\rhdj}[1]{\hat{{R}}_j^d(#1)}
\newcommand{\dr}[0]{\text{d}(r)}
\newcommand{\br}[0]{\mathbf{R}}
\newcommand{\bw}[0]{\mathbf{w}}
\newcommand{\bn}[0]{\mathbf{n}}
\newcommand{\m}[2]{m \left( #1, #2 \right)}
\newcommand{\M}[2]{M \left( #1, #2 \right)}
\newcommand{\Meps}[2]{M_{\epsilon} \left( #1, #2 \right)}
\newcommand{\bet}[1]{\beta(#1, \delta)}
\newcommand{\betnd}[1]{\bet{n^d(#1)}}
\newcommand{\wlone}[0]{\lVert \bw \rVert_{1}}
\newcommand{\sumij}[0]{\sum_{\substack{i \in [N] \\ j \in [K]}}}
\newcommand{\istarmu}[0]{i^*(\bmu)}
\DeclareMathOperator*{\argmax}{arg\,max}
\theoremstyle{plain}
\newtheorem{theorem}{Theorem}[section]
\newtheorem{lemma}[theorem]{Lemma}
\theoremstyle{definition}
\newtheorem{definition}[theorem]{Definition}
\theoremstyle{remark}
\newtheorem{remark}[theorem]{Remark}
\title{Best Group Identification in Multi-Objective Bandits}
\author{%
  Mohammad Shahverdikondori \\
  College of Management of Technology\\
  EPFL, Lausanne, Switzerland \\
  \texttt{mohammad.shahverdikondori@epfl.ch} \\
  \AND
  Mohammad Reza Badri \\
  Department of Computer Engineering\\
  Sharif University of Technology, Tehran, Iran\\
  \texttt{mreza.badri@sharif.edu} \\
  \And
  Negar Kiyavash \\
  College of Management of Technology \\
  EPFL, Lausanne, Switzerland \\
  \texttt{negar.kiyavash@epfl.ch} \\
}
\begin{document}

\maketitle

\begin{abstract}
    We introduce the \textit{Best Group Identification} problem in a multi-objective multi-armed bandit setting, where an agent interacts with groups of arms with vector-valued rewards. The performance of a group is determined by an \textit{efficiency} vector which represents the group's best attainable rewards across different dimensions. The objective is to identify the set of \textit{optimal} groups in the fixed-confidence setting.
    We investigate two key formulations: \textit{group Pareto set identification}, where efficiency vectors of optimal groups are Pareto optimal and \textit{linear best group identification}, where each reward dimension has a known weight and the optimal group maximizes the weighted sum of its efficiency vector’s entries. For both settings, we propose elimination-based algorithms, establish upper bounds on their sample complexity, and derive lower bounds that apply to any correct algorithm. Through numerical experiments, we demonstrate the strong empirical performance of the proposed algorithms.
\end{abstract}

\section{Introduction} \label{sec: intro}

The multi-armed bandit (MAB) framework has been extensively applied to a wide range of problems in stochastic sequential decision-making \cite{mab1-bouneffouf2020survey, bandit-book1-lattimore2020bandit, bandit-book2-bubeck2012regret, mab2-tewari2017ads}. In this framework, an agent interacts with an environment in a sequential manner, selecting an action (arm) at each round and receiving a random reward in return. The primary objectives in MAB problems typically fall into two categories: maximizing cumulative rewards over time (\textit{regret minimization}) or identifying certain properties of the reward distributions as efficiently as possible (\textit{pure exploration}).  
While pure exploration problems in single-objective bandits have been widely studied \cite{pure-bubeck2009pure, bai-audibert2010best, SH-karnin2013almost, kaufmann2021mixture, shahverdikondori2025optimal}, the multi-objective setting, where each arm's reward is vector-valued, remains under-explored. In this work, we investigate a pure exploration problem in a multi-objective MAB setting, which presents new challenges due to the need to evaluate trade-offs across multiple reward dimensions.  

We introduce and study the \textit{best group identification (BGI)} problem in multi-objective bandits. In this problem, the set of arms is partitioned into several groups, and each group is characterized by an \textit{efficiency} vector, which represents the maximum mean rewards attainable from the arms within the group across different reward dimensions. The objective is to identify the set of \textit{optimal} groups given a predetermined error probability while minimizing the number of rounds, a setting which is referred to as \textit{fixed-confidence}.  
The definition of optimality depends on the agent’s decision-making goal. We consider \emph{group Pareto set identification (GPSI)}, where the objective is to identify the set of groups whose efficiency vectors are Pareto optimal, meaning that no other group uniformly outperforms them across all objectives. This problem generalizes the problem of Pareto set identification (PSI)  in multi-objective bandits \cite{psi-auer2016pareto, psi-budget-kone2024bandit}. Additionally, we consider \emph{linear best group identification}, where each reward dimension is assigned a known weight, and the goal is to identify the group with the highest weighted reward, computed using the group's efficiency vector. For a detailed discussion on the related work, refer to Appendix \ref{apx: related-work}.

As a motivating example, consider a marketing scenario where, a company organizes its offerings into bundles, each containing a variety of products. The best possible performance of each bundle is measured across multiple metrics such as profit, customer satisfaction, and product return rates, etc. The company can sequentially research individual products to evaluate their performance, with the goal of efficiently determining which product bundles should be focused on. This process naturally falls into the best group identification setting we study.

\subsection{Contributions} 
Our main contributions are as follows. 

\begin{itemize}  
    \item We formally introduce a general pure exploration setting in multi-objective MABs, which we shall call best group identification (BGI), where the objective is to identify the optimal set of groups based on their efficiency vectors while minimizing the sample complexity.
    
    \item As a key instance of BGI, we study group Pareto set identification (GPSI), where the goal is to identify the set of Pareto optimal groups. We propose the Triple Elimination (TE) algorithm for this problem and establish an instance-dependent upper bound on its sample complexity. Additionally, we derive a lower bound on the sample complexity of any correct algorithm, demonstrating that TE is near-optimal in some classes of problem instances.  
    
    \item We further investigate the Linear Best Group Identification problem (LBGI), in which each dimension of the reward vector has a known weight, and the objective is to identify the group with the highest weighted reward. In this setting, we introduce the Equal Effect Confidence Bound (EECB) algorithm and provide an instance-dependent sample complexity upper bound. We also establish a lower bound on the sample complexity of any algorithm that solves this problem.  
      
    \item Through numerical simulations, we validate the effectiveness of the proposed algorithms in different settings.
\end{itemize}

\paragraph{Paper Outline.} In Section \ref{sec: setup}, we present the formal definition of the BGI problem along with the necessary background. The results for the GPSI and  LBGI problems are presented in Sections \ref{sec: gpsi} and \ref{sec: lbgi}, respectively. Our numerical experiments are discussed in Section \ref{sec: exp}. A more detailed discussion of the related work, along with all the proofs, is deferred to the Appendix.

\section{Problem Setup} \label{sec: setup}

In this section, we introduce the \textit{best group identification (BGI)} problem, where a player interacts with a multi-objective multi-bandit environment $\V$ consisting of $N$ bandits $\G_1, \G_2, \ldots, \G_N$, referred to as groups. Each group $\G_i$ has $K$ arms \footnote{The assumption of an equal number of arms per group is made for simplicity. Extending this framework to allow different numbers of arms per group is straightforward.}. We denote the set of arms by $\A \triangleq [N] \times [K] \triangleq \{(i,j) \mid i \in \{1,2,\ldots,N\}, j \in \{1,2,\ldots,K\}\}$. 

At each round $t$, the player selects an arm $(i_t, j_t)$, which corresponds to pulling the $j_t$-th arm in group $\G_{i_t}$, and observes a sample $X_t$ from a $D$-dimensional stochastic reward vector with expectation $\mathbb{E}[X_t] = \bmu_{i_t,j_t} \in \mathbb{R}^D$.
This environment includes an $N \times K \times D$ tensor $\bmu$ called arm means tensor, where $\mu_{i,j}^d$ denotes the $d$-th component of the mean reward vector corresponding to arm $(i,j)$. Let $\mathbf{v} \in \mathbb{R}^D$ and $u \in \mathbb{R}$, for any $d \in [D]$, we use super script $d$ to show the $d$-th component of $\mathbf{v}$ and $\mathbf{v} + u \triangleq  (v^1 + u, v^2 + u, \ldots, v^D + u)$. 

For each group $\G_i$, we define the \textit{efficiency} vector $\br_i \in \mathbb{R}^D$ with $R_i^d = \max_{j \in [K]} \mu_{i,j}^d$ which contains the maximum expected reward for each reward component among the arms in $\G_i$. Define $\br \triangleq  (\br_1, \br_2, \ldots, \br_N)$.

\begin{definition}[Domination]\label{def: domination}
Given two vectors $\mathbf{u}, \mathbf{v} \in \mathbb{R^D}$, we say that $\mathbf{u}$ is weakly dominated by $\mathbf{v}$ (denoted by $\mathbf{u} \leq \mathbf{v}$) if $\forall d \in [D]: u^d \leq v^d$,
$\mathbf{u}$ is dominated by $\mathbf{v}$ ($\mathbf{u} \preceq \mathbf{v}$) if $\mathbf{u}$ is weakly dominated by $\mathbf{v}$ and there exists a $d \in [D]$ with $u^d < v^d$,
and $\mathbf{u}$ is strictly dominated by $\mathbf{v}$ ($\mathbf{u} \prec \mathbf{v}$) if $\forall d \in [D]: u^d < v^d$. We also say an arm (group) is dominated by another arm (group) if its reward (efficiency) vector is dominated by the reward (efficiency) vector of the other one.
\end{definition} 

Following the definition of the efficiency vector for each group, we can compare the groups based on various criteria and develop algorithms to identify the set of optimal groups.
The set $i^*(\bmu) \subseteq [N]$ of group indices that represents the optimal groups thus varies based on the chosen criterion.
~A sequential learning algorithm for solving the BGI problem aims to identify this set with high probability within a minimum number of rounds. Such an algorithm requires three components: A sampling rule $P_t$ (deterministic or stochastic), which selects the arm to be pulled in round $t$ based on the observations up to round $t-1$, denoted by $\mathcal{H}_{t-1} = ((i_1, j_1), X_1, (i_2, j_2), X_2, \ldots, (i_{t-1}, j_{t-1}), X_{t-1})$, a stopping rule $\tau$, which, based on $\mathcal{H}_t$, determines when to stop and recommend the set of optimal groups, and a final recommendation $\hat{i}_{\tau}$ for $i^*(\bmu)$.
We denote such an algorithm by $(P_t, \tau, \hat{i}_{\tau})$. 

In this paper, we assume that the output distribution of each arm is sub-Gaussian with parameter $1$ and that the reward means are bounded within the interval $[0,1]$. This is a standard assumption in the bandit literature \cite{bandit-book1-lattimore2020bandit}.

\section{Group Pareto Set Identification Problem} \label{sec: gpsi}

In this section, we first provide the necessary definitions and properties to formally define the \textit{group Pareto set identification (GPSI)} problem. Subsequently, we propose an algorithm and analyze its sample complexity.
Finally, we present a lower bound on the number of samples required for any algorithm that solves the GPSI problem.
\begin{definition} [$\epsilon$-Pareto] \label{def: epsilon-pareto}
    For any constant $\epsilon \geq 0$, the $\epsilon$-Pareto set of groups $\G^*_{\epsilon}(\bmu)$ is defined as 
    \begin{align*}
        \G^*_{\epsilon}(\bmu) \triangleq \{i \in [N] \big| \nexists j \in [N]:  \br_i + \epsilon \prec \br_j \}.
    \end{align*}
    This set is an $\epsilon$-approximation of the Pareto optimal groups, i.e.,  $\G^*_0(\bmu)$. When it is clear from the context, we use $\G^*_{\epsilon}$ and $\G^*$ to denote $\G^*_{\epsilon}(\bmu)$ and $\G^*_0(\bmu)$, respectively. 
\end{definition}
In the GPSI problem, the learner seeks to identify the set of Pareto optimal groups from the smallest number of samples possible with high probability. More precisely, given a confidence parameter $\delta \in (0,1)$ and an $\epsilon > 0$, the algorithm $(P_t, \tau, \hat{i}_{\tau})$ should be $(\epsilon, \delta)$-PAC for any environment $\V$ with arm means tensor $\bmu$, i.e., 
\begin{align*}
    \forall \bmu: \pr_{\bmu} \left( \tau < + \infty, (\G^*_0(\bmu) \subseteq \hat{i}_{\tau} \subseteq \G^*_{\epsilon}(\bmu))^c \right) \leq \delta,
\end{align*}
while minimizing the total number of samples $\tau$. 
Let us introduce a few important quantities to characterize the notion of optimality/sub-optimality of a group. For any two vectors $\mathbf{v}, \mathbf{u} \in \mathbb{R}^D$ and $\alpha \geq 0$, define
\begin{align*}
    m(\mathbf{v}, \mathbf{u}) \triangleq \left[\min_{d \in [D]} \mathbf{u}^d - \mathbf{v}^d \right]^+ \quad \text{and} \quad M_{\alpha}(\mathbf{v}, \mathbf{u}) \triangleq \left[\max_{d \in [D]} \mathbf{v}^d - \mathbf{u}^d + \alpha\right]^+,
\end{align*}
where $\forall x \in \mathbb{R} : [x]^+ \triangleq \max(x,0)$. We denote $M_0(\mathbf{v}, \mathbf{u})$ by $M(\mathbf{v}, \mathbf{u})$. 

Note that $m(\mathbf{v}, \mathbf{u})$ is the minimum amount that needs to be added component-wise to $\mathbf{v}$ such that $\mathbf{u}$ does not strictly dominate $\mathbf{v}$. Clearly, $m(\mathbf{v}, \mathbf{u})=0$ if $\mathbf{v} \nprec \mathbf{u}$. Similarly, $M_{\alpha}(\mathbf{v}, \mathbf{u})$ is the minimum amount that needs to be added component-wise to $\mathbf{u}$ such that $\mathbf{u}$ weakly dominates $\mathbf{v}+\alpha$, and if the domination already holds, $M_{\alpha}(\mathbf{v}, \mathbf{u})=0$.

For a group $\G_i$ with $i \notin \G^*$, the sub-optimality gap $\Delta_i$ is defined as
\begin{align*}
    \Delta_i \triangleq \max_{j \in \G^*} m(\br_i, \br_j).
\end{align*}
This gap denotes the minimum amount that must be added to the efficiency vector of the group $\G_i$ to make it Pareto optimal.
For a group $\G_i$ with $i \in \G^*$, the gap $\Delta_i$ is defined as
\begin{align*}
    \Delta_i \triangleq \min(\Delta_i^+, \Delta_i^-),
\end{align*}
where 
\begin{align*}
    \Delta_i^+ & \triangleq \min_{j \in \G^* \backslash \{i\}} \min (M(\br_i, \br_j), M(\br_j, \br_i)), \\
    \Delta_i^- & \triangleq \min_{j\notin \G^*} M(\br_j, \br_i) + 2\Delta_j.
\end{align*}
This gap denotes the minimum estimation error in entries of $\br_i$ that leads to misidentification of $\G^*$. 
Additionally, define an in-group arm gap $\Delta_{i,j}$ for each arm $(i,j)$ as
\begin{align} \label{eq: in-group gap}
    \Delta_{i,j} \triangleq m(\bmu_{i,j}, \br_i).
\end{align}
This gap denotes the sub-optimality of arm $(i,j) $'s reward vector compared to its group's efficiency vector. 

\subsection{Failure of Naive Approach} \label{sec: failure} 
    Since the comparison among groups is based solely on their efficiency vectors, a naive approach would suggest exploring each group adequately and estimating its efficiency vector with high accuracy, then comparing these estimates to determine the set of optimal groups. However, this approach is fundamentally flawed. For groups with large gaps $\Delta_i$, it is possible to determine whether they are Pareto optimal with a small number of samples without requiring high-accuracy estimates of their efficiency vectors. Consequently, an optimal algorithm should avoid excessive pulls from arms in such groups. Thus, a naive strategy that spends equal effort on all groups would clearly have suboptimal sample complexity.
    
\subsection{Triple Elimination Algorithm}

We now introduce our algorithm for the GPSI problem, called \textit{Triple Elimination (TE)}. This algorithm belongs to the class of elimination-based algorithms, which have been widely studied and applied in various identification problems in both single-objective and multi-objective bandit settings \cite{elimination-even2006action, psi-auer2016pareto, SH-karnin2013almost, garivier2016maximin}. These algorithms operate by maintaining an active set of arms and progressively eliminating those whose optimality can be determined with high probability.  
Our algorithm generalizes this approach by simultaneously maintaining
(i) an active set of groups,  
(ii) an active set of dimensions for each active group, and  
(iii) an active set of arms for each active dimension within each active group.
Pseudo-code of the TE algorithm is given in Algorithm \ref{algo: te}.
The algorithm proceeds in rounds. At round $r$, it collects one sample from each active arm and executes three elimination phases as follows.

\paragraph{Group Elimination.}  
The group elimination phase follows an accept/reject mechanism, meaning that at each round, the algorithm eliminates groups that have demonstrated poor performance and are unlikely to belong to $\G^*$. 
Additionally, the algorithm stops sampling arms from groups $\G_i$ that are likely to be optimal with high probability when higher estimation accuracy of their efficiency vectors is unnecessary for determining the optimality of another group. More precisely, if there exists a group $\G_j$ that appears non-optimal but has an efficiency vector close to that of $\G_i$, additional sampling from the arms in $\G_i$ is required to determine the optimality of $\G_j$. If no such groups exist, $\G_i$ can be safely removed from the active set of groups and added to the optimal set.

\paragraph{Dimension Elimination.}  
For each active group $\G_i$, the algorithm maintains the active dimensions set $\D_i$. This set represents the dimensions from which additional samples are needed to determine the optimality of $\G_i$. Before presenting a formal definition that characterizes these dimensions, let us denote by $\bmuh(r)$, the empirical arm means tensor formed from the samples collected up to round $r$, and by $\rh(r)$, the empirical efficiency vectors of the groups, computed based on $\bmuh(r)$.

\begin{definition}[Dimension Resolution] \label{def: resolution}  
At round $r$ of the algorithm, given the set of active groups $G$, for each $i \in G$, we say dimension $d$  of $\G_i$ is \textit{resolved}  if  
\begin{equation*}  
    \forall j \in G\setminus\{i\}: \left| \hat{R}^d_j(r) - \hat{R}^d_i(r)\right| \geq 4 \beta(r, \delta) + \epsilon,
\end{equation*}  
where for confidence parameter $\delta$ and round $r$,  $\beta(r, \delta)$ is defined as  
\begin{align} \label{eq: beta}
    \beta(r, \delta) \triangleq \sqrt{\frac{2 \log (4 NKD r^2/\delta)}{r}}. 
\end{align}  
\end{definition}  
According to this definition, if dimension $d$ is resolved for group $\G_i$ given an active set of groups, the algorithm can infer the sign of $R^d_j - R^d_i$ for each $j$ with high probability, eliminating the need for further sampling. At each round $r$, TE eliminates the resolved dimensions of each active group from its active dimensions set. Moreover, if a dimension $d$ is eliminated for a group $\G_i$ at round $r$, the algorithm \textit{adheres to} this value of $\rhdi{r}$ and does not further update it based on the samples collected for the other dimensions in subsequent rounds.

\paragraph{Arm Elimination.}  
In this phase, the algorithm removes arms that no longer appear to be influential. For each active group and each of its active dimensions, these are the arms with a large estimated gap between their mean reward and the efficiency vector of the group in that dimension. Note that the algorithm continues sampling an arm as long as it remains active in at least one active dimension.  

\begin{algorithm}[h]
    \caption{Triple Elimination (TE) Algorithm}\label{algo: te}
    \begin{algorithmic}[1]      
        \STATE \textbf{Input.} Set of arms $\A$ and the parameters $\delta$, $\epsilon$, $\beta(r, \delta)$.
        \STATE \textbf{Initialization.} Initialize $G=[N]$ and for each $i \in [N]$, set $\D_i = [D]$, and for each $i \in [N], d \in [D]$, set $\A_{i,d} = \{(i,1), (i,2), \ldots, (i,K)\}$, which denote the active groups, active dimensions for each group, and active arms for each dimension within each group, respectively. Initialize $r=1$ and $P = \emptyset$. 
        \WHILE{$G \neq \emptyset$}
            \STATE For each $i \in G$, pull each arm $(i,j)$ such that $\exists d: (i,j) \in \A_{i,d}$ once and update empirical arm means tensor $\bmuh(r)$ and efficiency vectors $\rh_i(r)$.
            \FOR {$i \in G $}
                \STATE Remove $i$ from $G$ if $\exists j \in G: m\left(\rh_i(r), \rh_j(r)\right) \geq 2\beta(r, \delta)$.
            \ENDFOR
            \FOR  {$i \in G \text{ and } d \in \D_i$}
                \STATE Remove dimension $d$ from $\D_i$ if dimension $d$ of $\G_i$ is resolved based on definition \ref{def: resolution}, and adhere to the value of $\rhdi{r}$.
            \ENDFOR
            \FOR {$i \in G, d \in D_i, \text{ and } (i,j) \in \A_{i,d}$}
                \STATE Remove $(i,j)$ from $\A_{i,d}$ if $ \muh^d_{i,j}(r) \leq \hat{R}^d_i(r) - 2\beta(r, \delta)$.
            \ENDFOR
            \STATE $P_1 \leftarrow \{i \in G \big| \forall j \in G \setminus \{i\}: M_{\epsilon}\left(\rh_i(r), \rh_j(r) \right) \geq 2 \beta(r, \delta) \}$.
            \STATE $P_2 \leftarrow \{ i \in P_1 \big| \forall j \in G \setminus P_1:  M_{\epsilon}\left(\rh_j(r), \rh_i(r) \right) \geq 2 \beta(r, \delta) \}$.
            \STATE $P \leftarrow P \cup P_2$.
            \STATE $G \leftarrow G \setminus P_2$.
            \STATE $r \leftarrow r+1$. 
        \ENDWHILE
        \STATE \textbf{Output.} Set $P$. 
    \end{algorithmic}
\end{algorithm}
The algorithm terminates when no active groups remain.
The following theorem proves the correctness of Algorithm \ref{algo: te} and provides an upper bound on the total number of arm pulls. 

\begin{restatable}[Correctness and Upper Bound for Algorithm \ref{algo: te}]{theorem}{teUpperBound} \label{thm : te upper}
    For any environment $\V$ with arm means tensor $\bmu$, any $\epsilon > 0$ and $\delta \in (0,1)$, Algorithm \ref{algo: te} with parameter $\beta(r, \delta)$ defined in \eqref{eq: beta} is $(\epsilon, \delta)$-PAC. Moreover, with probability $1 - \delta$ the total number of arm pulls is at most
    \begin{equation*}
        \sumij \frac{C}{\tilde{\Delta}_{i,j}^2} \log\left( \frac{NKD}{\delta \tilde{\Delta}_{i,j}} \right),
    \end{equation*}
    where $C$ is a universal constant and $\tilde{\Delta}_{i,j} \triangleq \max(\Delta_{i,j}, \Delta_i, \epsilon)$. 
\end{restatable}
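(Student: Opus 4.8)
The plan is to follow the standard two-part template for elimination algorithms: first establish a "good event" on which all empirical efficiency vectors and arm means are simultaneously close to their true values throughout the run, then argue correctness and the sample-complexity bound deterministically on that event.

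\textbf{Step 1: the concentration event.} I would define the event $\mathcal{E}$ on which, for every round $r$ and every arm $(i,j)$ that is still active at round $r$, the empirical mean satisfies $|\muh_{i,j}^d(r) - \mu_{i,j}^d| < \beta(r,\delta)$ for all $d\in[D]$. Since an arm active at round $r$ has been pulled exactly $r$ times, a sub-Gaussian tail bound gives failure probability at most $2\exp(-r\beta(r,\delta)^2/2) = \delta/(2NKDr^2)$ per (arm, dimension, round) triple; summing over the at most $NK$ arms, $D$ dimensions, and all $r\ge 1$ via $\sum_r r^{-2}=\pi^2/6<2$ yields $\pr(\mathcal{E}^c)\le\delta$. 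On $\mathcal{E}$, because $\rhdi{r}=\max_j\muh_{i,j}^d(r)$ is a max of quantities each within $\beta(r,\delta)$ of the corresponding true mean (and the true max $R_i^d$ is attained by one of the arms, which must still be active — this needs the invariant that the maximizing arm is never eliminated, proved below), we get $|\rhdi{r}-R_i^d|<\beta(r,\delta)$ for every active group, dimension, and round. The care point is that once a dimension is "adhered to", its value is frozen, so I must check the frozen value still satisfies this bound at all later rounds; that is immediate because freezing happens at some round $r_0$ where the bound held with $\beta(r_0,\delta)$, and the eliminated-dimension value is never used in a comparison requiring a tighter bound.

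\textbf{Step 2: correctness on $\mathcal{E}$.} Here I verify three invariants. (a) \emph{No good arm is eliminated}: if $(i,j)$ attains $R_i^d$ for some active dimension $d$, then $\muh_{i,j}^d(r)\ge R_i^d - \beta > \rhdi{r} - 2\beta$, so the arm-elimination test never fires on that dimension; hence $\rhdi{r}$ is always a genuine empirical max over a set containing the true maximizer. (b) \emph{No Pareto-optimal group is wrongly rejected and no over-bad group is wrongly accepted}: for the reject test $m(\rhi{r},\rhj{r})\ge 2\beta$, the $2\beta$ slack (using $|\rhdi{r}-R_i^d|<\beta$ on both groups) forces $\br_i\prec\br_j$ strictly, so $i\notin\G^*_0$; for the accept tests $P_1,P_2$ with threshold $2\beta$ on $M_\epsilon$, the slack forces the corresponding $\epsilon$-domination relations to hold for the true vectors, which is exactly what certifies $i\in\G^*_\epsilon$ and that dropping $i$ from the active set cannot cause any remaining group's status to be misjudged (this is where the two-stage $P_1$-then-$P_2$ structure matters: $P_2$ guarantees every not-yet-accepted group is already "clearly $\epsilon$-dominated by" $i$, so $i$'s future refinement is unneeded). (c) \emph{Termination}: I show every group leaves $G$ eventually — once $\beta(r,\delta)$ shrinks below a constant fraction of $\Delta_i$ (and of $\epsilon$), the relevant reject or accept test must trigger.

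\textbf{Step 3: the sample-complexity bound.} On $\mathcal{E}$, I bound for each arm $(i,j)$ the last round $r$ at which it is still active. An arm stays active only while it remains active in some dimension of some active group, so it suffices to bound (i) the round by which group $\G_i$ is removed from $G$, and (ii) for each dimension $d$, the round by which either $d$ is resolved or arm $(i,j)$ is removed from $\A_{i,d}$. A routine computation — plugging the concentration bounds into the three test thresholds — shows each of these rounds is $O\!\big(\beta^{-1}(\cdot)\big)$ evaluated at the appropriate gap, i.e. $O\!\big(\tilde\Delta_{i,j}^{-2}\log(NKD/(\delta\tilde\Delta_{i,j}))\big)$ with $\tilde\Delta_{i,j}=\max(\Delta_{i,j},\Delta_i,\epsilon)$: the $\Delta_{i,j}$ term governs arm elimination within a dimension, the $\Delta_i$ term governs dimension resolution and group elimination, and $\epsilon$ enters through the $\epsilon$-slack in the accept tests and the definition of resolution. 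Summing the per-arm bound over all $(i,j)\in[N]\times[K]$ gives the claimed expression. The delicate accounting is in the $i\in\G^*$ case: there the gap $\Delta_i=\min(\Delta_i^+,\Delta_i^-)$ and I must trace how $\Delta_i^-$, which involves $2\Delta_j$ for non-optimal neighbors $j$, arises precisely because $\G_i$ cannot be accepted until those neighbors $j$ have themselves been resolved — so the bookkeeping must chain the elimination time of $j$ (controlled by $\Delta_j$) into the accept condition for $i$, and the factor $2$ in $2\Delta_j$ comes from combining the $\beta$-slack needed to reject $j$ with the $\beta$-slack needed to certify $i$ relative to $j$'s frozen estimate.

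\textbf{Main obstacle.} The technical heart is Step 2(b)–(c) together with the $\Delta_i^-$ part of Step 3: correctly showing that the two-phase accept rule ($P_1$ then $P_2$) is both \emph{sound} (accepting $i$ never corrupts the identification of any other group) and \emph{timely} (it fires by round $O(\Delta_i^{-2}\log(\cdots))$), and that the interaction between a group's acceptance and its non-optimal neighbors' rejection produces exactly the $M(\br_j,\br_i)+2\Delta_j$ form of $\Delta_i^-$. Everything else — the concentration union bound, the "good arm survives" invariant, the per-dimension arm-elimination timing — is routine.
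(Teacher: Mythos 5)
Your proposal is correct and follows essentially the same route as the paper: a per-round concentration event via a union bound over arms, dimensions, and rounds; the invariant that the dimension-wise maximizing arm survives arm elimination (so $|\hat{R}^d_i(r)-R^d_i|<\beta(r,\delta)$); correctness from the $2\beta$ slack in the reject and two-stage accept tests; and sample complexity by bounding the round at which each test must fire in terms of $\max(\Delta_{i,j},\Delta_i,\epsilon)$, including the chaining through non-optimal neighbors that produces $\Delta_i^-$. The one point you dismiss as ``immediate'' --- that a frozen dimension estimate with stale error $\beta(r_0,\delta)$ remains safe against later comparisons using the tighter threshold $2\beta(r,\delta)$ --- is exactly where the paper needs a dedicated lemma exploiting the $4\beta+\epsilon$ resolution margin, but you correctly identify the mechanism, so this is a matter of detail rather than a gap.
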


\begin{remark}
    In the proof of Theorem \ref{thm : te upper} in Appendix \ref{sec: gpsi}, we demonstrate that the same sample complexity upper bound holds even without performing the dimension elimination phase. This suggests that in most instances where dimension elimination occurs, the algorithm can achieve better performance than this theoretical upper bound.
\end{remark}

\subsection{Lower Bound}

Next, we derive a lower bound on the sample complexity of any $(\epsilon, \delta)$-PAC algorithm that solves GPSI, and show that for some instances, TE is optimal up to constant and logarithmic factors. 

\begin{restatable}[GPSI Lower Bound]{theorem}{gpsiLower} \label{thm : gpsi lower} 
            For any $\delta, K , N$ and $D$ with $D,N > 1$, there exists a non-trivial class of environments $\V$ with $1$-Gaussian reward distributions such that for small $\epsilon > 0$, any $(\epsilon, \delta)$-PAC algorithm with almost-surely finite stopping time that solves GPSI on $\V$ requires 
            \begin{align*}
                \Omega \left( \sumij \frac{1}{\tilde{\Delta}_{i,j}^2} \log\left( \frac{1}{2.4\delta} \right) \right)
            \end{align*}
            expected number of samples, where $\tilde{\Delta}_{i,j}$ is defined in Theorem \ref{thm : te upper}.
\end{restatable}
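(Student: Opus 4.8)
The plan is to prove this through the standard change-of-measure argument for fixed-confidence lower bounds, instantiated on an explicit hard family in which every one of the $NK$ arms is \emph{pivotal}: a perturbation of arm $(i,j)$ alone, of $\ell_2$-magnitude of order $\tilde\Delta_{i,j}$, already forces a different correct answer. Throughout, let $N_{i,j}(\tau)$ denote the number of pulls of arm $(i,j)$ before the (almost surely finite) stopping time $\tau$, so that $\mathbb{E}_{\bmu}[\tau]=\sum_{(i,j)\in\A}\mathbb{E}_{\bmu}[N_{i,j}(\tau)]$; we may assume this is finite, since otherwise the bound is vacuous.

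\textbf{Change-of-measure inequality.} For a $(\epsilon,\delta)$-PAC algorithm, a base environment $\bmu$, and any alternative environment $\bmu'$ (both admissible, having $1$-Gaussian arms with means in $[0,1]$) that admits \emph{no} common valid output with $\bmu$ --- i.e.\ there is no set $S$ with $\G^*_0(\bmu)\subseteq S\subseteq\G^*_\epsilon(\bmu)$ and $\G^*_0(\bmu')\subseteq S\subseteq\G^*_\epsilon(\bmu')$ simultaneously --- the transportation lemma of Kaufmann, Capp\'e and Garivier yields
\begin{equation*}
    \sum_{(i,j)\in\A}\mathbb{E}_{\bmu}[N_{i,j}(\tau)]\,\mathrm{KL}\!\big(\bmu_{i,j}\,\Vert\,\bmu'_{i,j}\big)\ \geq\ \mathrm{kl}(\delta,1-\delta)\ \geq\ \log\frac{1}{2.4\delta},
\end{equation*}
where, for $1$-Gaussian vectors, $\mathrm{KL}(\bmu_{i,j}\Vert\bmu'_{i,j})=\tfrac12\lVert\bmu_{i,j}-\bmu'_{i,j}\rVert_2^2$. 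When $\bmu'$ differs from $\bmu$ only in arm $(i,j)$, the left-hand side collapses to $\tfrac12\,\mathbb{E}_{\bmu}[N_{i,j}(\tau)]\,\lVert\bmu_{i,j}-\bmu'_{i,j}\rVert_2^2$.

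\textbf{The hard family and per-arm alternatives.} I would fix $D\ge 2$, $N\ge 2$, and build a base $\bmu$ whose group efficiency vectors $\br_1,\dots,\br_N$ form a mutually non-dominating antichain --- so that $\G^*=[N]$, a choice one verifies is non-trivial --- with consecutive vectors close in all coordinates but one, calibrated so that each $\Delta_i$ (checking $\min(\Delta_i^+,\Delta_i^-)=\Delta_i$) equals a prescribed positive value; within each group the $K$ arm-mean vectors are chosen so that a single ``extremal'' arm attains each coordinate maximum $R_i^d$ (no ties) and every remaining arm is strictly interior with $m(\bmu_{i,j},\br_i)$ equal to a prescribed $\Delta_{i,j}$, all means lying well inside $[0,1]$. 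Taking $\epsilon$ smaller than every $\Delta_i$ and every $\Delta_{i,j}$ gives $\tilde\Delta_{i,j}=\max(\Delta_{i,j},\Delta_i)$ on this family. For each arm $(i,j)$ I then produce $\bmu^{(i,j)}$ agreeing with $\bmu$ except on arm $(i,j)$: if $(i,j)$ is interior, I raise its weakest coordinate (the one achieving $m(\bmu_{i,j},\br_i)$) by $\Theta(\Delta_{i,j}+\Delta_i)$ so that it becomes the new group maximum there and drags $\br_i$ far enough to strictly dominate a neighbouring group; if $(i,j)$ is extremal for some coordinate, I move that coordinate by $\Theta(\Delta_i)$ --- decreasing it so that a neighbour strictly dominates $\G_i$, or, when that direction is unavailable (e.g.\ at the boundary of the antichain), increasing it so that $\G_i$ strictly dominates a neighbour. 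In every case $\G^*_0$ changes by more than $\epsilon$, so no common valid output exists, while $\lVert\bmu_{i,j}-\bmu^{(i,j)}_{i,j}\rVert_2\le c\,\tilde\Delta_{i,j}$ for a universal $c$ and the perturbed means remain in $[0,1]$.

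\textbf{Assembling the bound and the obstacle.} Plugging $\bmu$ and $\bmu^{(i,j)}$ into the transportation inequality gives $\tfrac{c^2}{2}\,\tilde\Delta_{i,j}^2\,\mathbb{E}_{\bmu}[N_{i,j}(\tau)]\ge\log\frac{1}{2.4\delta}$, hence $\mathbb{E}_{\bmu}[N_{i,j}(\tau)]\ge\frac{2}{c^2}\,\tilde\Delta_{i,j}^{-2}\log\frac{1}{2.4\delta}$; summing over all $(i,j)$ and using $\mathbb{E}_{\bmu}[\tau]=\sum_{(i,j)}\mathbb{E}_{\bmu}[N_{i,j}(\tau)]$ yields the claimed $\Omega\!\big(\sum_{i\in[N],\,j\in[K]}\tilde\Delta_{i,j}^{-2}\log\frac{1}{2.4\delta}\big)$. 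The main obstacle is the construction in the third paragraph: exhibiting a \emph{single} base instance, valid for arbitrary $K$ and arbitrary $N,D>1$ with freely prescribed gaps, in which \emph{simultaneously} every extremal arm \emph{and} every interior dominated arm is pivotal under a perturbation confined to that one arm of size $O(\tilde\Delta_{i,j})$ that provably (not merely infinitesimally) changes the $\epsilon$-Pareto answer. This demands careful control of the Pareto-frontier geometry in $D$ dimensions, ruling out ties that would decouple an arm from its group's efficiency vector, matching $\Delta_i$ with $\min(\Delta_i^+,\Delta_i^-)$, and keeping all $NK$ perturbed instances inside $[0,1]^D$.
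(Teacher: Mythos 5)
Your high-level framework (transportation lemma of Kaufmann et al., one alternative instance per arm, $d_B(\delta,1-\delta)\ge\log(1/(2.4\delta))$, summing the per-arm bounds) coincides with the paper's. But the construction you defer as ``the main obstacle'' is exactly where the only nontrivial idea of the proof lives, and the specific fix you sketch does not work. You propose to flip the Pareto answer by perturbing a \emph{single} coordinate of arm $(i,j)$ by $\Theta(\tilde\Delta_{i,j})$. This fails because $\Delta_i$ is built from $m(\br_i,\br_j)=[\min_d (R_j^d-R_i^d)]^+$, a \emph{minimum} over coordinates, whereas escaping strict domination through one fixed coordinate $d_0$ requires lifting $R_i^{d_0}$ above $R_j^{d_0}$ for every dominating group $j$, a coordinate-specific gap that can be arbitrarily larger than $\Delta_i$: with $D=2$, $\br_i=(0,0)$, $\br_1=(1,0.1)$, $\br_2=(0.1,1)$ one has $\Delta_i=0.1$ yet any single-coordinate lift must have size $1$. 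The natural repair is a \emph{uniform} shift of $\Theta(\tilde\Delta_{i,j})$ across all $D$ coordinates of the arm, which does change the answer; but under your independent-coordinate Gaussian model its KL cost is $D\tilde\Delta_{i,j}^2/2$, degrading the bound by a factor of $D$. The paper's key device, absent from your proposal, is to populate the hard class with \emph{fully-dependent} arms (all coordinates deterministic affine functions of the first), for which a uniform shift by $\alpha$ has KL exactly $\alpha^2/2$ independently of $D$; this is what makes the per-arm bound $\Omega(\tilde\Delta_{i,j}^{-2}\log(1/(2.4\delta)))$ attainable.

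A secondary issue is that you aim for a base instance with $\G^*=[N]$ and freely prescribed gaps, which is more than the theorem requires (it only asserts existence of a nontrivial class) and is part of why your construction does not close. The paper instead fixes one calibrated family: two Pareto-optimal groups with efficiency vectors $(a_1,a_2,a_3,\ldots,a_3)$ and $(a_2,a_1,a_3,\ldots,a_3)$ and all other groups at $(a_4,a_4,a_5,\ldots,a_5)$, so that a uniform lift of any suboptimal arm up to level $a_2$ in its first two coordinates makes its group non-dominated (it then ties one of the two optimal groups in a coordinate), while a uniform lift of an optimal-group arm by $a_1-\min(\mu^1,\mu^2)+\epsilon$ makes $\G_1$ dominate $\G_2$ by more than $\epsilon$ and hence forces $\G_2$ out of any valid answer. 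The same calibration guarantees that each shift magnitude is at most $2\tilde\Delta_{k,l}$ (respectively $3\tilde\Delta_{1,l}$) and that $\tilde\Delta_{k,l}=\Delta_k$ on this family, which is how the hidden constants are controlled. Without the fully-dependent reward construction and a concrete calibrated family of this sort, your argument has a genuine gap.
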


\section{Linear Best Group Identification Problem} \label{sec: lbgi}

This section addresses the \textit{ linear best group identification (LBGI)} problem. We begin by formally defining the problem and subsequently present our proposed algorithm, equal effect confidence bound, along with its sample complexity upper bound. Lastly, we establish an instance-dependent lower bound on the sample complexity of any algorithm that solves this problem. 

As discussed in the introduction, the LBGI problem arises when different dimensions of the reward are weighted differently, with the weights known to the learner. Let $\mathbf{w} \in \mathbb{(R^+)}^D$ represent the weight vector, where the weight associated with dimension $d$ of the reward is $w^d$. Hence, the expected weighted reward of a group $\G_i$ with efficiency vector $\br_i$ is given by $\br_i^{\top} \bw$.

We assume the tensor $\bmu$ is such that the group with the highest expected reward, denoted by $\istarmu$, is unique. 
Consistent with the definitions in the previous section, an algorithm $(P_t, \tau, \hat{i}_{\tau})$ is said to be $(\epsilon, \delta)$-PAC for the LBGI problem if, for any $\bmu$ that implies a unique best group, it satisfies:
\begin{align*}
    \pr_{\bmu} \left(\tau < + \infty, (\br_{\hat{i}_{\tau}} + \epsilon)^{\top} \bw < \br_{\istarmu} ^{\top} \bw \right) \leq \delta.
\end{align*}
In this paper, we focus on the case with $\epsilon = 0$, and say an algorithm is $\delta$-correct if it is $(0, \delta)$-PAC.

For a non-optimal group $\G_i$, the sub-optimality gap $\Delta_i$ is defined as
\begin{align*}
    \Delta_i \triangleq \inf\{\alpha > 0 \mid (\br_i + \alpha)^{\top}\bw > \br_{\istarmu}^{\top} \bw \} = \frac{\br_{\istarmu}^{\top} \bw - \br_i^{\top} \bw}{\wlone},
\end{align*}
which shows the minimum amount that needs to be added to entries of $\br_i$ to make $\G_i$ the best group.
The optimal group $\G_{\istarmu}$, has the smallest gap among all other groups, i.e., $\Delta_{\istarmu} \triangleq \min_{i \neq \istarmu} \Delta_i$.
For any arm $(i,j)$ in group $\G_i$ and for any real number $\alpha > 0$, we use the notation $\br(i,j,\alpha)$ to denote the efficiency vector of group $\G_i$ when $\alpha$ is added to all dimensions of the reward means of arm $(i,j)$ ($\bmu_{i,j} \leftarrow \bmu_{i,j} + \alpha)$. Using this notation, for each arm $(i,j)$ we define
\begin{align*}
    \alpha_{i,j} \triangleq \inf \{ \alpha > 0 \mid \br(i,j,\alpha)^{\top} \bw >  \br_{\istarmu}^{\top} \bw \}.
\end{align*}
This value is equal to the minimum value that must be added to the reward means of arm $(i,j)$ to make the reward of its group (i.e., $\G_i$) higher than the reward of $\G_{\istarmu}$. Clearly for arms in $\G_{\istarmu}$, $\alpha_{i,j} = m(\bmu_{\istarmu, j}, \br_{\istarmu})$. Now for each arm $(i,j)$, we define the gap $\Delta_{i,j} = \max(\alpha_{i,j}, \Delta_i)$. 
Note that for the arms with $i \neq \istarmu$, since $\br_i \geq \bmu_{i,j}$, $\alpha_{i,j} \geq \Delta_i$ which implies $\Delta_{i,j} = \alpha_{i,j}$. 

The naive approach, discussed in Section \ref{sec: failure}, also fails for the LBGI problem for the same reason. That is, groups with large gaps $\Delta_i$ can be determined as suboptimal with relatively few samples, and an optimal algorithm should avoid unnecessarily drawing samples from such groups.

\subsection{Equal Effect Confidence Bound Algorithm}

Now we describe our algorithm called \textit{equal effect confidence bound (EECB)} for the LBGI problem. 
Similar to Triple Elimination (TE), EECB is an elimination-based algorithm. 
The algorithm operates in rounds, maintaining a set of active groups $G$ and a set of active arms $\A_d$ for each dimension $d$ (this includes arms from all groups). The set $G$ consists of groups whose non-optimality has not yet been determined, and for each dimension $d$, the set $\A_d$ contains arms whose estimated mean reward is close to the estimated efficiency vector of their group in dimension $d$.
Similar to the previous section, let $\bmuh(r)$ and $\rh_i(r)$ denote the empirical arm means tensor and empirical efficiency vector of $\G_i$ computed based on the samples collected until round $r$. Moreover, let $N_{i,j}(r)$ denote the number of samples seen from arm $(i,j)$ until that round. 

At each round $r$, the algorithm focuses on a dimension $\dr$ which maximizes the value $w^d \betnd{r}$, where $\bet{r}$ is defined in \eqref{eq: beta} and the vector $\bn(r) = [n^1(r), \ldots , n^D(r)]$ represents the number of rounds each dimension has been focused on so far. Then, the algorithm pulls each arm $(i,j) \in \A_{\dr}$ that satisfies $N_{i,j}(r) = n^{\dr}(r)$ once and updates $\bmuh$ and $\rh$.  

By selecting dimensions in this manner,
$n^{\dr}(r+1) = n^{\dr}(r)+1$ which implies that $\beta(n^{\dr}(r+1), \delta) < \beta(n^{\dr}(r), \delta)$ since $\beta(r, \delta)$ is decreasing in $r$. Thus,  at each round $r$, the algorithm decreases the value of $w^{\dr} \beta(n^{\dr}(r), \delta)$, which is the maximum across the dimensions.
This results in nearly equal values $w^d \betnd{r}$ for different dimensions $d$ in the long run.

The algorithm performs two elimination phases in each round. In the first phase, it eliminates the groups with a large estimated gap, and in the second phase, for each dimension $d$, the algorithm eliminates the arms with a large gap in the $d$-th dimension of their estimated reward means compared to the best arm in the group. Algorithm \ref{algo: eecb} contains the pseudo-code of EECB. The algorithm halts when only one active group remains.
The following theorem establishes the correctness of Algorithm \ref{algo: eecb} and provides an upper bound on the total number of arm pulls.

\begin{algorithm}[t]
    \caption{Equal Effect Confidence Bound (EECB) Algorithm}\label{algo: eecb}
    \begin{algorithmic}[1]      
        \STATE \textbf{Input.} Set of arms $\A$ and the parameters $\delta, \beta(r, \delta)$.
        \STATE \textbf{Initialization.} Set $G = [N]$ and for each $d \in [D]$, $\A_d = \A $ to show the active groups and the active arms in dimension $d$. Pull each arm once, build $\hat{\bmu}(1)$ and $\hat{\br}(1)$. Set $r=1$ and $\mathbf{n}(1) = (1, 1, \ldots, 1)$.
        \WHILE{$ \lvert G \rvert > 1$}
            \FOR {$i \in G $}
                \STATE Remove $i$ from $G$ if $\exists j \in G: \rh_j(r)^{\top} \bw - \rh_i(r)^{\top} \bw > 2 \sum_{d \in [D]} w^d \betnd{r}$.
            \ENDFOR
            \FOR {$d \in [D]$}
                \FOR {$(i,j) \in \A_d$}
                    \STATE Remove $(i,j)$ from $\A_d$ if $\exists (i,l) \in \A_d: \muh_{i,l}^d(r) > \muh_{i,j}^d(r) + 2 \betnd{r} $.
                \ENDFOR
            \ENDFOR
            \STATE $\dr = \arg \max_{d \in [D]} w^d \betnd{r}$. (In the case of equal values, choose one arbitrarily) 
            \FOR {$i \in G $}
                \STATE Pull each arm $(i,j) \in \A_{\dr}$ with $N_{i,j}(r) = n^{\dr}(r)$ once and update $\bmuh$ and $\hat{\br}$ .
            \ENDFOR
            \STATE $\mathbf{n}(r+1) \leftarrow \mathbf{n}(r) + \mathbf{e}_{\dr}$, where $\mathbf{e}_{\dr}$ is the unit vector in $\mathbb{R}^D$ with $1$ in $\dr$-th element.
            \STATE $r \leftarrow r+1$. 
        \ENDWHILE
        \STATE \textbf{Output.} Set $G$.
    \end{algorithmic}
\end{algorithm}

\begin{restatable}[Correctness and Upper Bound for Algorithm \ref{algo: eecb}]{theorem}{eecbUpperBound} \label{thm : eecb upper}
    For any environment $\V$ with arm means tensor $\bmu$ and $\delta \in (0,1)$, Algorithm \ref{algo: eecb} with parameter $\beta(r, \delta)$ defined in \eqref{eq: beta} is $\delta$-correct. Moreover, with probability $1 - \delta$ the total number of arm pulls is as most
    \begin{equation} \label{eq: eecb-upper}
        \sumij \frac{C'}{(\Delta_{i,j}/D)^2} \log\left( \frac{NKD}{\delta \Delta_{i,j}} \right),
    \end{equation}
    where $C'$ is a universal constant. 
\end{restatable}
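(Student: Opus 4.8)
The plan is to prove both claims on a single high‑probability event. Let $\mathcal E$ be the event that $|\hat\mu_{i,j}^d(n)-\mu_{i,j}^d|\le\beta(n,\delta)$ holds simultaneously for every arm $(i,j)$, every dimension $d\in[D]$, and every pull count $n\ge1$, where $\hat\mu_{i,j}^d(n)$ is the empirical mean of coordinate $d$ after $n$ pulls of $(i,j)$. Since each coordinate is $1$-sub‑Gaussian, $\pr(|\hat\mu_{i,j}^d(n)-\mu_{i,j}^d|>\beta(n,\delta))\le 2e^{-n\beta(n,\delta)^2/2}=\delta/(2NKDn^2)$, so a union bound over the $NKD$ arm--dimension pairs and over $n\ge1$ (using $\sum_{n\ge1}n^{-2}<2$) gives $\pr(\mathcal E)\ge1-\delta$. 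Everything below is argued deterministically on $\mathcal E$; note also that the sequence of focused dimensions $\dr,\;r=1,2,\dots$ (hence the vector $\bn(r)$ and the quantity $b_r:=\max_{d}w^d\beta(n^d(r),\delta)$) is completely deterministic, with $\bn(r)\to\infty$ coordinatewise, $b_r$ non‑increasing, and $b_r\to0$.

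First I would record a few structural facts valid on $\mathcal E$. (i) \emph{Invariant:} whenever $\G_i$ is active and $(i,j)\in\A_d$, one has $N_{i,j}(r)\ge n^d(r)$; this follows by induction on $r$ from the rule that an arm is pulled in round $r$ only if $N_{i,j}(r)=n^{\dr}(r)$, so $N_{i,j}$ can never fall behind a dimension in which the arm is still active. (ii) The empirical‑best arm of a dimension $d$ of an active $\G_i$ is never removed from $\A_d$ (on $\mathcal E$ its estimated gap to any competitor is at most $2\beta(n^d(r),\delta)$), hence $R_i^d-\beta(n^d(r),\delta)\le\hat R_i^d(r)\le R_i^d+\beta(n^d(r),\delta)$ while $\G_i$ is active. (iii) \emph{Sound group elimination:} since $\sum_d w^d\beta(n^d(r),\delta)\le D\,b_r$, fact (ii) gives $\hat R_j^{\top}\bw-\hat R_i^{\top}\bw\ge(\br_j^{\top}\bw-\br_i^{\top}\bw)-2\sum_d w^d\beta(n^d(r),\delta)$ for all $i,j\in G$; hence the test never fires for $i=i^*$ (as $\br_j^{\top}\bw\le\br_{i^*}^{\top}\bw$ for all $j$), while any $i\neq i^*$ is removed from $G$ as soon as $b_r<\Delta_i\wlone/(4D)$, so an active $i\neq i^*$ always has $b_r\ge\Delta_i\wlone/(4D)$. (iv) \emph{Sound arm elimination:} by (ii), if $(i,j)$ is still in $\A_d$ at round $r$ then $R_i^d-\mu_{i,j}^d\le4\beta(n^d(r),\delta)$. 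Correctness follows at once: on $\mathcal E$, $b_r\to0$, so every $i\neq i^*$ is removed in finitely many rounds, the loop terminates with $G=\{i^*\}$, and therefore $\pr_{\bmu}(\tau<+\infty,\,\hat i_\tau\neq i^*)\le\pr(\mathcal E^c)\le\delta$.

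The core of the argument is the per‑arm pull bound. Fix $(i,j)$ and index its pulls inside the while loop by $m=1,2,\dots$; the $m$‑th such pull occurs at a round $r$ with $N_{i,j}(r)=m$, hence --- by the pull rule --- in the focused dimension $\dr$ with $n^{\dr}(r)=m$, so $b_r=w^{\dr}\beta(m,\delta)$. At that round: $(i,j)\in\A_{\dr}$, so by (iv) $\rho_{\dr}\le4\beta(m,\delta)$, where $\rho_d:=R_i^d-\mu_{i,j}^d\ge0$; and a suitable group is still active, so by (iii) $b_r\ge\Delta\wlone/(4D)$ --- with $\Delta=\Delta_i$ if $i\neq i^*$, and, if $i=i^*$, using that $|G|>1$ forces some active $\G_{i'}$, $i'\neq i^*$, so $b_r\ge\Delta_{i'}\wlone/(4D)\ge\Delta_{i^*}\wlone/(4D)$, i.e.\ $\Delta=\Delta_{i^*}$. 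Combining the two inequalities,
\begin{equation*}
\beta(m,\delta)\ \ge\ \max\left(\tfrac14\rho_{\dr},\ \tfrac{\Delta\wlone}{4D\,w^{\dr}}\right)\ \ge\ \min_{d\in[D]}\ \max\left(\tfrac14\rho_{d},\ \tfrac{\Delta\wlone}{4D\,w^{d}}\right).
\end{equation*}
It remains to lower‑bound the right‑hand side by a constant times $\Delta_{i,j}/D$. For $i\neq i^*$ I would use the identity $\Delta_i\wlone=\sum_d w^d[\alpha_{i,j}-\rho_d]^+$ (adding $\alpha$ to $(i,j)$ raises $R_i^d$ by exactly $[\alpha-\rho_d]^+$, which is the defining equation of $\alpha_{i,j}$), which gives $\Delta_i\wlone/(D\,w^d)\ge(\alpha_{i,j}-\rho_d)/D$, hence $\max(\rho_d,\Delta_i\wlone/(D\,w^d))\ge\tfrac1D\max(\rho_d,\alpha_{i,j}-\rho_d)\ge\alpha_{i,j}/(2D)=\Delta_{i,j}/(2D)$ (using $\Delta_{i,j}=\alpha_{i,j}$ when $i\neq i^*$). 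For $i=i^*$, since $w^d\le\wlone$ and $\rho_d\ge\m{\bmu_{i^*,j}}{\br_{i^*}}=\alpha_{i^*,j}$ for every $d$, one gets $\max(\rho_d,\Delta_{i^*}\wlone/(D\,w^d))\ge\max(\alpha_{i^*,j},\Delta_{i^*}/D)\ge\tfrac1D\max(\alpha_{i^*,j},\Delta_{i^*})=\Delta_{i^*,j}/D$. Either way $\beta(m,\delta)\ge\Delta_{i,j}/(8D)$, and inverting the definition of $\beta$ (absorbing constants and lower‑order factors) yields $m\le C'(D/\Delta_{i,j})^2\log(NKD/(\delta\Delta_{i,j}))$. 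The total number of pulls equals $\sum_{i,j}N_{i,j}(\tau)\le\sum_{i,j}(1+m)$, which gives the stated bound.

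The concentration/union bound and the soundness of the two elimination tests are routine. The main obstacle is the pull‑count argument, with two nontrivial points. First, one must observe that the $m$‑th pull of an arm necessarily takes place in a dimension that is \emph{simultaneously} the current maximizer of $w^d\beta(n^d(\cdot),\delta)$ and one in which the arm is still active with exactly $m$ samples; this is precisely what collapses the joint ``keep sampling'' condition into the single scalar inequality above, and it relies on the invariant $N_{i,j}(r)\ge n^d(r)$ to match each confidence width to the number of samples actually available. Second, one must convert the coordinatewise gaps $\rho_d=R_i^d-\mu_{i,j}^d$ and the group gap $\Delta_i$ into the single gap $\Delta_{i,j}=\max(\alpha_{i,j},\Delta_i)$; this goes through the decomposition $\Delta_i\wlone=\sum_d w^d[\alpha_{i,j}-\rho_d]^+$ together with the elementary bound $\max(a,b)\ge(a+b)/2$. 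Finally, obtaining the factor $D$ inside $(\Delta_{i,j}/D)^2$ --- rather than $D^2$ --- comes from the inequality $\sum_d w^d\beta(n^d(r),\delta)\le D\,b_r$ used in the group‑elimination test, coupled with this per‑dimension bookkeeping.
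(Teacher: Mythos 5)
Your proof is correct and follows essentially the same route as the paper's: the same good event and union bound, the same invariants (the per-dimension best arm survives, $N_{i,j}(r)\ge n^d(r)$, hence $|\hat R_i^d(r)-R_i^d|<\beta(n^d(r),\delta)$), the same key inequality $\beta(n^{\dr}(r),\delta)\gtrsim \frac{\Delta_i\wlone}{Dw^{\dr}}+(R_i^{\dr}-\mu_{i,j}^{\dr})$ at any round an arm is pulled, and the same final reduction to $\Delta_{i,j}/D$. Your only departure is cosmetic but welcome: you justify $\frac{\Delta_i\wlone}{w^d}+(R_i^d-\mu_{i,j}^d)\ge\alpha_{i,j}$ via the explicit identity $\Delta_i\wlone=\sum_d w^d[\alpha_{i,j}-(R_i^d-\mu_{i,j}^d)]^+$, where the paper argues this step verbally.
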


\subsection{Lower Bound}

In the following theorem, we derive a lower bound on the sample complexity of any $\delta$-correct algorithm that solves LBGI.

\begin{restatable}[LBGI Lower Bound]{theorem}{lbgiLower} \label{thm : lbgi lower} 
            For any $\delta \in (0,1)$ and tensor $\bmu$, there exists an environment $\V$ with arm means tensor $\bmu$ and $1$-Gaussian reward distributions, such that any $\delta$-correct algorithm with almost-surely finite stopping time that solves the LBGI problem, requires 
            \begin{align*}
                \Omega \left( \frac{1}{\Delta_{i,j}^2} \log\left( \frac{1}{2.4 \delta} \right) \right)
            \end{align*}
            expected number of samples from each arm $(i,j)$ with $i \neq i^*(\bmu)$. Moreover, the algorithm requires 
            \begin{align*}
                \Omega \left( \frac{1}{\Delta_{\istarmu}^2} \log\left( \frac{1}{2.4 \delta} \right) \right)
            \end{align*}
            expected number of samples in total from arms in group $\G_{\istarmu}$.
\end{restatable}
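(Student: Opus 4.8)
The plan is to prove the lower bound via a change-of-measure argument in the style of the transportation lemma of \cite{garivier2016maximin} (Lemma 1 there), which for any $\delta$-correct algorithm with almost-surely finite stopping time gives, for any alternative instance $\bmu'$ in which the correct answer differs,
\begin{align*}
    \sum_{(i,j)} \mathbb{E}_{\bmu}[N_{i,j}(\tau)] \cdot \kl{\mathbb{P}_{\mu_{i,j}} \| \mathbb{P}_{\mu'_{i,j}}} \geq \log\!\left(\frac{1}{2.4\delta}\right),
\end{align*}
and for $1$-Gaussian distributions the per-arm KL term is $\tfrac12 \lVert \bmu_{i,j} - \bmu'_{i,j} \rVert_2^2$ (or, if we only perturb one dimension, $\tfrac12 |\mu^d_{i,j} - \mu'^d_{i,j}|^2$). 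The strategy is then to construct, for each arm $(i,j)$ with $i \neq \istarmu$, an alternative $\bmu'$ that differs from $\bmu$ only on arm $(i,j)$ and flips the identity of the best group, with the perturbation magnitude controlled by $\Delta_{i,j}$; plugging into the transportation inequality then isolates $\mathbb{E}_{\bmu}[N_{i,j}(\tau)]$ and yields the first bound.

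For the first claim I would take $(i,j)$ with $i \neq \istarmu$ and consider raising the reward mean of arm $(i,j)$ by $\alpha_{i,j} + \eta$ in every coordinate, for arbitrarily small $\eta > 0$. By definition of $\alpha_{i,j}$, this makes $\br(i,j,\alpha_{i,j}+\eta)^\top \bw > \br_{\istarmu}^\top \bw$, so on the new instance $\bmu'$ group $\G_i$ is (strictly) the best, hence $\istarmu' \neq \istarmu$ and a $\delta$-correct algorithm must distinguish the two. Only arm $(i,j)$ changed, the per-coordinate change is $\alpha_{i,j}+\eta$, and the $\ell_2^2$ distance is at most $D(\alpha_{i,j}+\eta)^2$; actually it suffices to increase only the coordinates that matter, but the coarse bound $\tfrac{D}{2}(\alpha_{i,j}+\eta)^2 \cdot \mathbb{E}_{\bmu}[N_{i,j}(\tau)] \geq \log(1/2.4\delta)$ already gives $\mathbb{E}_{\bmu}[N_{i,j}(\tau)] = \Omega\!\big(\tfrac{1}{(\alpha_{i,j})^2}\log(1/2.4\delta)\big)$ after letting $\eta \to 0$; since $\Delta_{i,j} = \alpha_{i,j}$ for $i \neq \istarmu$, this is the stated bound (the constant $D$ is absorbed into $\Omega$, consistent with how the upper bound in Theorem \ref{thm : eecb upper} also carries a $D$ dependence). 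One must be slightly careful that the perturbed means stay in $[0,1]$; this is where "there exists an environment $\V$ with arm means tensor $\bmu$" is used — we may assume $\bmu$ is bounded away from the boundary, or rescale, so small perturbations remain admissible.

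For the second claim — an $\Omega\!\big(\Delta_{\istarmu}^{-2}\log(1/2.4\delta)\big)$ total-sample bound on arms in $\G_{\istarmu}$ — the alternative instance should instead lower the efficiency vector of $\G_{\istarmu}$ until some other group overtakes it. Lowering $\br_{\istarmu}$ by $\Delta_{\istarmu} + \eta$ makes the group $\G_k$ achieving $\min_{i\neq\istarmu}\Delta_i$ strictly best, so again the answer flips. But lowering the efficiency vector requires lowering \emph{every} coordinate of \emph{every} arm in $\G_{\istarmu}$ that currently attains the max in some coordinate — in the worst case all $K$ arms in that group. Each such arm's perturbation contributes $\mathbb{E}_{\bmu}[N_{i,j}(\tau)]\cdot \tfrac12\lVert\cdot\rVert_2^2$, and summing over $j \in [K]$ the transportation inequality becomes $\sum_{j\in[K]} \mathbb{E}_{\bmu}[N_{\istarmu,j}(\tau)] \cdot O\big(D(\Delta_{\istarmu}+\eta)^2\big) \geq \log(1/2.4\delta)$, which rearranges to the claimed bound on the total number of pulls in $\G_{\istarmu}$ after $\eta \to 0$.

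The main obstacle I anticipate is the construction of the alternative instance for the second claim: to decrease the efficiency vector $\br_{\istarmu}$ by a prescribed amount one must perturb the arm(s) attaining the coordinate-wise maxima downward, and if several arms tie for the max in a coordinate, \emph{all} of them must be lowered, and one must verify that no other coordinate-wise maximum drifts upward — i.e., that the new efficiency vector is exactly $\br_{\istarmu} - (\Delta_{\istarmu}+\eta)$ (or at least $\preceq$ it) so that $\G_k$ genuinely becomes optimal while the algorithm's recommendation (which must still be a single group) is forced to be wrong. Handling ties cleanly, and making sure the perturbed instance still has a \emph{unique} best group (required for LBGI to be well-posed), is the delicate part; a safe route is to break ties by perturbing each tied arm by slightly different amounts, all within $\Delta_{\istarmu} + \eta$, so the $\ell_2$ bound is unaffected in the $\eta \to 0$ limit. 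The remaining steps — invoking the transportation lemma, computing Gaussian KL, and rearranging — are routine.
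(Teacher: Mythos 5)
Your overall skeleton matches the paper's: apply the transportation lemma of Kaufmann et al.\ to a single-arm upward perturbation of size $\alpha_{i,j}$ for each arm in a non-optimal group, and to a uniform downward perturbation of group $\G_{\istarmu}$ by $\Delta_{\istarmu}$ for the second claim. However, there is one genuine missing idea, and it is exactly the one the theorem's phrasing ``there exists an environment $\V$ with arm means tensor $\bmu$'' is designed to license. You compute the Gaussian KL as $\tfrac12\lVert\bmu_{i,j}-\bmu'_{i,j}\rVert_2^2$, i.e.\ you implicitly take the $D$ reward coordinates to be independent, so a uniform shift by $\alpha\mathbbm{1}$ costs $\tfrac{D}{2}\alpha^2$ and your bound degrades to $\Omega\bigl(\tfrac{1}{D\,\Delta_{i,j}^2}\log\tfrac{1}{2.4\delta}\bigr)$. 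The paper instead chooses the \emph{dependency structure} adversarially: every arm is made \emph{fully-dependent} (all coordinates are deterministic shifts of the first), so the reward distribution lives on a one-dimensional affine subspace and a uniform shift by $\alpha\mathbbm{1}$ has KL exactly $\tfrac{\alpha^2}{2}$, with no $D$ factor (Definition \ref{def: fully-dependent} and Lemma \ref{lem: fully-dependent-kl}). Writing off the lost factor as ``absorbed into $\Omega$'' is not satisfactory here: $D$ is a problem parameter, not a constant, and the gap between the upper bound's $(\Delta_{i,j}/D)^2$ and the lower bound's $\Delta_{i,j}^2$ is precisely what Remark \ref{rem: LBGI-upper} is about, so a lower bound that itself carries a $1/D$ would weaken that comparison. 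You half-notice that the existential quantifier over $\V$ must be doing work, but you spend it on keeping means inside $[0,1]$ rather than on the covariance structure.

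For the second claim, your worry about perturbing only the max-attaining arms, handling ties, and verifying the new efficiency vector is a self-inflicted complication: the paper simply subtracts $\Delta_{\istarmu}+\alpha$ from \emph{every} coordinate of \emph{every} arm in $\G_{\istarmu}$, so the coordinate-wise maxima all drop by exactly that amount, $\br_{\istarmu}$ decreases by $\Delta_{\istarmu}+\alpha$, the answer flips, and the transportation inequality directly yields a bound on $\sum_{j}\mathbb{E}_{\V}[N_{\istarmu,j}(\tau)]$ with every arm contributing the same KL $\tfrac{(\Delta_{\istarmu}+\alpha)^2}{2}$ (again relying on full dependence to avoid the $D$ factor). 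With that uniform-shift construction and the fully-dependent KL lemma in place, the rest of your argument goes through as you describe.
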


\begin{remark} \label{rem: LBGI-upper}  
    To compare the lower and upper bound, in Appendix \ref{apx: eecb-upper}, we show that the term $\frac{\Delta_{i,j}}{D}$ in the denominator of \eqref{eq: eecb-upper} can be replaced with
    \begin{align} \label{eq: lbgi-larger-gaps}  
        \min_{d \in [D]}{\frac{\Delta_{i} \wlone}{D w^d} \; + (R^d_i - \mu^d_{i,j})},  
    \end{align}  
    which is significantly larger in some cases. To compare these two quantities, note that even after dividing the second term in \eqref{eq: lbgi-larger-gaps} by $D$, i.e.,  
    $$  
    \min_{d \in [D]} \frac{1}{D} \left( \frac{\Delta_i \wlone}{w^d} + (R^d_i - \mu^d_{i,j}) \right),  
    $$  
    the result remains greater than or equal to $\frac{\Delta_{i,j}}{D}$. This difference arises because, by definition, $\Delta_{i,j}$ represents the minimum value that must be added uniformly to \emph{all} dimensions of $\bmu_{i,j}$ to make $\G_i$ the best group. In contrast, $\frac{\Delta_i \wlone}{w^d} + (R^d_i - \mu^d_{i,j})$ represents the minimum value required to be added to $d$-th dimension of $\bmu_{i,j}$ to achieve the same effect. Clearly, the latter is larger and can be up to $D$ times larger in some cases. This result indicates that, in certain instances, our algorithm is near-optimal (up to constant and logarithmic factors) for all non-optimal groups.  
\end{remark}

\section{Experiments} \label{sec: exp}

In this section, we present some numerical evaluations of our two algorithms on various instances. For all experiments, we set $\delta = 0.01$, and the reward distribution of each arm is a random vector where each dimension follows an independent standard Gaussian distribution. For each combination of $N, K,$ and $D$ in both GPSI and LBGI problems, we generate the entries of the reward means vector $\bmu$ randomly from the interval $[0,1]$, ensuring that  $\Delta_{\min} = \min_{i \in [N]}$ is not significantly small to avoid extremely difficult instances.
The results are averaged over $20$ runs, with the standard deviation of the stopping time displayed in the plots. Note that in all experimental setups, we used the exact value of $\bet{r}$ as defined in \eqref{eq: beta}. While this choice ensures the theoretical guarantees of correctness, it tends to be overly conservative in practice. Empirically, using smaller values than the prescribed $\bet{r}$ results in error rates still below $\delta$, but with fewer samples.


\subsection{GPSI Experiments}  

For GPSI experiments, in all instances, we set $\epsilon = 0.01$ and $\Delta_{\min} > 3 \epsilon$. Since no existing algorithm in the literature addresses this problem, we compare our triple elimination (TE) algorithm with two simplified versions of it and a naive baseline method:

\textbf{Arm-Group Elimination (AGE).}  
This algorithm follows a similar approach to TE but omits the dimension elimination phase, performing only group and arm eliminations. Comparing AGE with TE highlights the practical impact of dimension elimination.

\textbf{Group Elimination (GE).}  
GE further simplifies AGE by discarding the arm elimination phase as well and performing elimination solely at the group level. 

\textbf{Uniform Sampling (UniS).}  
This naive algorithm pulls each arm uniformly $\frac{8}{\epsilon^2} \log\left( \frac{2NKD}{\delta} \right)$ times and, based on the estimated arm means tensor, outputs the set of all optimal arms along with non-optimal arms whose gap is less than $\epsilon$.

In Appendix \ref{apx: exp-baselines-correctness}, we provide a theoretical justification demonstrating that all three baseline algorithms are $(\epsilon, \delta)$-PAC. We compare their performance under two different scenarios.  

\textbf{Varying $N$.}  
In this experiment, we evaluate the average sample complexity of the algorithms on instances with $K = 6$, $D = 3$, and varying values of $N$. In all instances, the number of Pareto optimal groups is set to $\lceil 0.3N \rceil$. Figure \ref{fig: vary-n} demonstrates the performance gap between TE and other baselines.

\textbf{Varying $K$.}  
This experiment analyzes the average sample complexity of the algorithms on instances with $N = 5$, $D = 3$, and varying values of $K$. The number of Pareto optimal groups is fixed at $2$ across all instances. Figure \ref{fig: vary-k} illustrates the average sample complexity of the different algorithms, showing that TE consistently achieves the best performance. 


\begin{figure}[t!]
        \centering
        \begin{subfigure}[b]{0.45\textwidth}
            \centering
            \includegraphics[width = \textwidth]{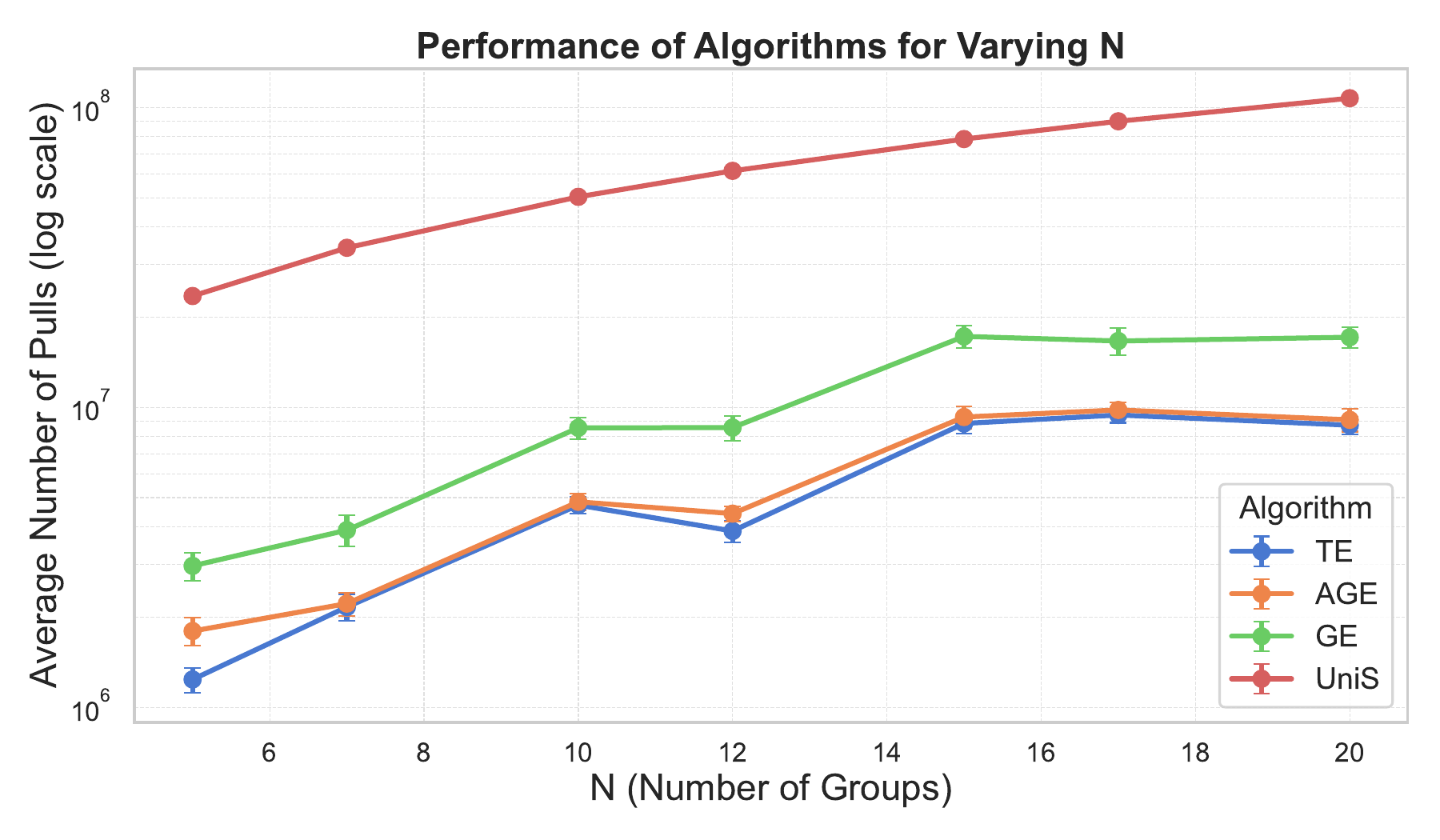}
            \caption{Average stopping time of different algorithms on instances with varying $N$.}
            \label{fig: vary-n}
        \end{subfigure}
        \begin{subfigure}[b]{0.45\textwidth}
            \centering
            \includegraphics[width = \textwidth]{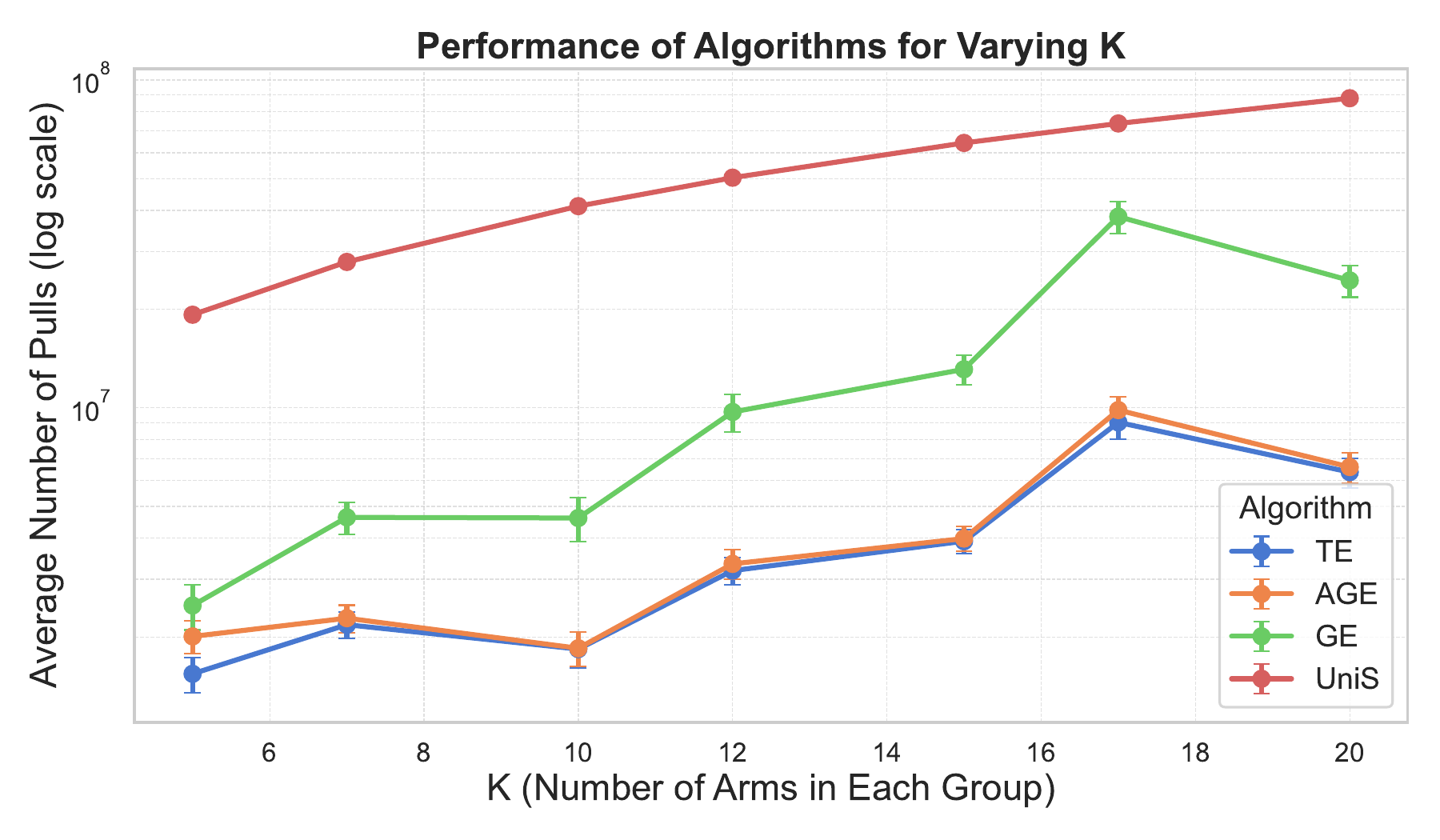}
            \caption{Average stopping time of different algorithms on instances with varying $K$.}
            \label{fig: vary-k}
        \end{subfigure}
        \caption{Results of the experiments for the GPSI problem.}
        \vspace{-0.5cm}
        \label{fig: gpsi}
\end{figure}

\subsection{LBGI Experiments}

For LBGI experiments, there is no existing state-of-the-art algorithm for direct comparison. Therefore, we compare our proposed algorithm (EECB) with one baseline algorithm.  
\paragraph{Triple Elimination for LBGI (TEL).}  
Since the weight vector $\bw$ is positive, the efficiency vector of the optimal group, $\br_{\istarmu}$, is necessarily among the set of Pareto optimal efficiency vectors. Inspired by this observation, the TEL algorithm initially ignores $\bw$ and applies the triple elimination algorithm with $\epsilon = 0.01$ to identify the set of Pareto optimal groups $\G^*$ with high probability. Then, using the estimated efficiency vectors of these groups at the end of the process, it selects the group $\G_i$ that maximizes $\hat{\br}_i^{\top} \bw$.  
Unlike EECB, TEL has no theoretical guarantee for $\delta$-correctness.

Table \ref{tab: lbgi} compares the average stopping time of EECB and TEL on a random instance with $N=5$, $K=5$, and $D=3$, using five different weight vectors:  
$
\bw_1 = (0.1, 0.1, 1), \quad \bw_2 = (1, 1, 0.1), \quad \bw_3 = (0.1, 1, 1), \quad \bw_4 = (1,1,1), \quad \text{and} \quad \bw_5 = (1,2,3).
$  
The results indicate that EECB significantly outperforms TEL across all weight vectors.
\begin{table}[ht]
\centering
\begin{tabular}{|c|c|c|c|c|c|}
\hline
\textbf{Algorithm} & \textbf{$\bw_1$} & \textbf{$\bw_2$} & \textbf{$\bw_3$} & \textbf{$\bw_4$} & \textbf{$\bw_5$} \\
\hline
\textbf{EECB} & 225766 & 534801 & 209423 & 124603 & 141455 \\
\textbf{TEL}  & 7176441 & 7176441 & 7176441 & 7176441 & 7176441 \\
\hline
\end{tabular}
\vspace{0.5cm}
\caption{Comparison of the average stopping time of EECB and TEL algorithms on a single instance with varying weight vectors $\bw$. Since TEL ignores $\bw$ during sampling, the average stopping time remains constant across different weight vectors.}
\label{tab: lbgi}
\end{table}

\vspace{-0.5cm}
\section{Conclusion} \label{sec: conclusion}

In this work, we introduced the best group identification problem in multi-objective MAB setting and proposed efficient algorithms for two key formulations in the fixed confidence setting: group Pareto set identification and linear best group identification. We established sample complexity upper and lower bounds for both settings. Our experiments showcase the strong performance of our proposed algorithms. These results advance pure exploration in multi-objective MAB settings and open new directions for future research.

An interesting direction for future work is to extend the LBGI problem to scenarios where the weight vector $\bw$ is not fully known, but only partially specified. In real-world applications, the relative importance of different reward dimensions is often uncertain or subject to estimation, making it valuable to develop algorithms that can adapt to incomplete or uncertain weight information.


\bibliographystyle{alpha}
\bibliography{bibliography}

\newpage
\onecolumn

\title{Appendix}
\maketitle

\appendix

The appendix is organized as follows. In Appendix \ref{apx: related-work}, we discuss the related work in more detail. Appendix \ref{apx: gpsi} contains the proofs related to the GPSI problem. Appendix \ref{apx: lbgi-proofs} presents the proofs for the LBGI problem. Appendix \ref{apx: exp-baselines-correctness} provides a discussion on the correctness of the baseline algorithms used in the experiments.

\section{Discussion on Related Work} \label{apx: related-work}

\paragraph{Pure Exploration in Multi-Objective Bandits.}
Multi-objective online learning problems have been studied in different settings such as Pareto Set Identification (PSI) in black-box optimization \cite{blackbox1-deb2002fast, blackbox2-knowles2006parego, blackbox3-zuluaga2013active}, reinforcement learning \cite{rl1-van2014multi, rl2-roijers2013survey, rl3-russo2024multi}, and multi-armed bandits \cite{regret1-firsrt-drugan2013designing, psi-auer2016pareto, regret-adversary-xu2023pareto}.

In the MAB setting, \cite{regret1-firsrt-drugan2013designing} was the first to consider the multi-objective environment, where the goal is to minimize regret by fairly playing the Pareto optimal arms. Regret minimization in multi-objective bandits has also been studied in the generalized linear setting \cite{glinear-lu2019multi} and the adversarial setting \cite{regret-adversary-xu2023pareto}.

In the domain of pure exploration problems, \cite{MOBAI-chen2025optimal} recently studied the multi-objective best arm identification, where the goal is to identify the best arm in each individual dimension. They proposed an asymptotically optimal algorithm under the assumption that reward dimensions are independent. 
PSI is the most extensively studied pure exploration problem in multi-objective bandits. The first PSI algorithm, based on uniform sampling and an accept/reject mechanism, was introduced in \cite{psi-auer2016pareto}. \cite{psi-lucb-kone2024bandit} proposed an LUCB-based algorithm for PSI and several relaxed variants of the problem. These works primarily address the fixed-confidence setting. The fixed-budget variant of the PSI problem was explored in \cite{psi-budget-kone2024bandit}.

The PSI problem in multi-objective linear bandits, under both fixed-confidence and fixed-budget settings, was studied in \cite{psi-linear-konebandit}. 
Although the PSI problem without any condition is not yet optimally solved, asymptotically optimal algorithms have been proposed for special settings recently. In \citep{psi-garivier2024sequential}, the authors proposed an asymptotically optimal PSI algorithm for Gaussian rewards with an identity covariance matrix based on the Track-and-Stop algorithm in the best arm identification literature \cite{track-stop-garivier2016optimal}. Furthermore,\cite{psi-posterior-kone2024pareto} addresses both the unstructured PSI and the transductive linear setting with Gaussian rewards and a known covariance matrix and proposes an asymptotically optimal algorithm using posterior sampling.

\paragraph{Single-Objective Case.} 
In the single-objective case, our problem reduces to a multi-armed bandit problem where the arms are partitioned into groups, and the objective is to identify the best group. Several studies have explored this problem and closely related ones, particularly focusing on identifying a good subset of arms for a fixed error probability  \cite{garivier2016maximin, maxmin-grouped-wang2022max, subset-selection1-kalyanakrishnan2012pac, subset-selection2-kaufmann2013information, quantile-lau2023max}. 

\citep{garivier2016maximin} modeled an action identification problem in game theory as a MAB problem with grouped arms, where the objective is to find the group with the highest minimum arm mean. Similarly, \cite{maxmin-grouped-wang2022max} studies the same problem but allows for overlapping groups. In \cite{quantile-lau2023max}, the focus is on a scenario where each group contains infinitely many arms, and the goal is to identify the group with the highest $(1 - \alpha)$-quantile.

These subset selection problems in single-objective bandits can all be viewed as special cases of a general identification problem. In this problem, the space of arm mean vectors is partitioned into non-overlapping regions (the case of overlapping regions is studied in \cite{mutiple-correct-answers-degenne2019pure} and requires more involved analysis). The objective is to minimize the number of samples required to identify the region the actual vector of means belongs to. This generalized problem has been studied under various names and assumptions. In \cite{kaufmann2021mixture, frank-wolf-wang2021fast, partition-id-juneja2019sample}, the authors proposed algorithms with asymptotically optimal sample complexity. Additionally, \cite{general-samp-chen2017nearly} provides a non-asymptotic analysis and corresponding bounds for this problem. Studying this general problem in the multi-objective setting can be seen as an important future direction of this work.

\section{Proofs of Section \ref{sec: gpsi}} \label{apx: gpsi}

First, we introduce some notation. Let $G_1(r)$ denote the set of active groups at round $r$ of Algorithm \ref{algo: te} after eliminating the suboptimal groups (lines $6$ to $8$ of the pseudo-code). We refer to this elimination phase as the group rejection phase.  
Additionally, for each group $\G_i$ and dimension $d$, let $(i,i^*_d)$ represent the arm with the highest mean value in dimension $d$ among all arms in group $\G_i$ (if there are several arms with maximum value of dimension $d$, we define $(i, i^*_d)$ to be the one with the smallest ingroup index). More precisely,  
\begin{align} \label{def: istard}  
    i^*_d \triangleq \argmax_{j \in [K]} \mu^d_{i,j}.  
\end{align}

\subsection{Proof of Theorem \ref{thm : te upper}}

\teUpperBound*

\begin{proof}
    To prove the theorem, first, we provide two lemmas. 

    \begin{lemma} \label{lem: sub-gaussian}
        For a $1$-sub-Gaussian random variable $X$ with $\mathbb{E}[X] = \mu$ and for any $\delta \in (0,1)$, the following holds:
        \begin{equation*}
            \pr \left(\forall r \in \mathbb{N}: \left|\muh(r) - \mu \right| < \sqrt{\frac{2 \log(4r^2/\delta)}{r}} \right) \geq 1 - \delta,
        \end{equation*}
        where $\muh(r)$ is the empirical mean estimated using $r$ samples of $X$. 
    \end{lemma}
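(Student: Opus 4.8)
The plan is to apply a standard time-uniform concentration argument: control the deviation at each fixed sample size $r$ via a sub-Gaussian tail bound, calibrate the threshold so that the failure probability at size $r$ decays like $1/r^2$, and then take a union bound over all $r \in \mathbb{N}$, using that $\sum_{r \ge 1} 1/r^2$ is a finite constant strictly less than $2$.

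Concretely, I would first fix $r \in \mathbb{N}$. Since $X$ is $1$-sub-Gaussian and the $r$ samples are i.i.d., the empirical mean $\muh(r)$ is $(1/r)$-sub-Gaussian around $\mu$, so for any $x > 0$,
$$\pr\left(|\muh(r) - \mu| \ge x\right) \le 2\exp\left(-\frac{r x^2}{2}\right).$$
Choosing $x = x_r \triangleq \sqrt{2\log(4r^2/\delta)/r}$ makes $r x_r^2 / 2 = \log(4r^2/\delta)$, hence
$$\pr\left(|\muh(r) - \mu| \ge x_r\right) \le \frac{2\delta}{4r^2} = \frac{\delta}{2r^2}.$$
Then a union bound over all $r \in \mathbb{N}$ gives
$$\pr\left(\exists r \in \mathbb{N}: |\muh(r) - \mu| \ge x_r\right) \le \sum_{r=1}^{\infty} \frac{\delta}{2r^2} = \frac{\delta}{2} \cdot \frac{\pi^2}{6} = \frac{\pi^2}{12}\,\delta \le \delta,$$
where the last inequality uses $\pi^2/12 < 1$. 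Complementing this event yields exactly the claimed statement $\pr(\forall r : |\muh(r) - \mu| < x_r) \ge 1 - \delta$.

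Since every step is a textbook estimate, there is no substantive obstacle here; the only point requiring (minimal) care is the calibration of $x_r$ so that the per-round failure probabilities are summable with total mass at most $\delta$ — which is precisely why the slightly inflated logarithmic factor $\log(4r^2/\delta)$, rather than $\log(1/\delta)$, appears in the definition of $\beta(r,\delta)$ in \eqref{eq: beta}. In the main proof this lemma will be invoked simultaneously for all $NKD$ scalar coordinates $\mu^d_{i,j}$ with confidence $\delta/(NKD)$ each, which accounts for the factor $4NKDr^2$ inside the logarithm of $\beta(r,\delta)$.
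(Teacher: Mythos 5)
Your proof is correct and follows exactly the same route as the paper: a per-$r$ sub-Gaussian (Hoeffding) tail bound with the threshold calibrated to give failure probability $\delta/(2r^2)$, followed by a union bound over $r \in \mathbb{N}$ and the evaluation $\sum_{r\ge 1} 1/r^2 = \pi^2/6$ so that the total failure mass is $\pi^2\delta/12 < \delta$. Your closing remark about the additional $NKD$ factor inside the logarithm of $\beta(r,\delta)$ also correctly anticipates how the lemma is used in the main proof.
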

    \begin{proof}
        By Hoeffding's inequality and a union bound on all natural numbers
        \begin{align*}
            \pr \left(\exists r \in \mathbb{N}: \left| \muh(r) - \mu \right| \geq \sqrt{\frac{2 \log(4r^2/\delta)}{r}} \right) &\leq 2 \sum_{r \in \mathbb{N}} \exp \left( -\frac{r \left(\frac{2 \log(4r^2/\delta)}{r} \right)}{2} \right) \\
            & = \sum_{r \in \mathbb{N}} 2 \left( \frac{\delta}{4r^2} \right) = \frac{\delta}{2} \sum_{r \in \mathbb{N}} \frac{1}{r^2} = \frac{\delta}{2} \times \frac{\pi^2}{6} < \delta.
        \end{align*}
    \end{proof}

    \begin{lemma} \label{lem: n-beta}
        For each $ 0 < \Delta \leq 1$ and $\delta \in (0,1)$, if $r \geq \frac{30 \log\left( \frac{NKD}{\delta \Delta} \right)}{\Delta^2}$, then $\bet{r} < \Delta$.
    \end{lemma}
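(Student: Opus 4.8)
The plan is to square the definition of $\beta$ in \eqref{eq: beta} and reduce the claim to the purely elementary inequality
\[
  2\log\!\left(\frac{4NKDr^2}{\delta}\right) < r\Delta^2
  \qquad\text{for every real } r \ge r_0 := \frac{30\log\!\left(\frac{NKD}{\delta\Delta}\right)}{\Delta^2}.
\]
Set $L := \log\!\left(\frac{NKD}{\delta\Delta}\right)$, so that $r_0 = 30L/\Delta^2$, and note $\log(1/\Delta)\le L$ (since $NKD/\delta\ge 1$) and that $L$ is bounded below by an absolute constant in the regimes of interest (e.g.\ $L\ge\log 8$ whenever $NKD\ge 4$ and $\delta\le 1/2$, which in particular covers the GPSI setting $N,D>1$). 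Since $\Delta\le 1$, I would first split the logarithm as
\[
  \log\!\left(\frac{4NKDr^2}{\delta}\right)
  = L + \log\!\left(4\Delta r^2\right) \le L + \log 4 + 2\log r ,
\]
so that it suffices to prove $2L + 2\log 4 + 4\log r < r\Delta^2$ for $r\ge r_0$.

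To control the $\log r$ term I would apply the tangent-line bound $\log r \le a r - 1 - \log a$ (valid for all $r,a>0$) with $a = \Delta^2/8$, which gives $4\log r \le \tfrac12 r\Delta^2 - 4 + 4\log 8 + 8\log(1/\Delta)$. Substituting this, it remains to show $2L + 8\log(1/\Delta) + c_0 < \tfrac12 r\Delta^2$ with the absolute constant $c_0 = 2\log 4 - 4 + 4\log 8 < 8$. Using $r\ge r_0$ gives $\tfrac12 r\Delta^2 \ge \tfrac12 r_0\Delta^2 = 15L$, and using $\log(1/\Delta)\le L$ the left-hand side is at most $10L + c_0$; hence the claim follows from $10L + c_0 < 15L$, i.e.\ $L > c_0/5$, which holds by the lower bound on $L$. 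Assembling the three pieces yields $\beta(r,\delta) < \Delta$, with slack to spare, consistent with the remark that $\beta(r,\delta)$ is conservative in practice.

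The one genuine obstacle is the $\log r$ coming from the $r^2$ inside $\beta$: one cannot simply evaluate at $r = r_0$, since that term keeps growing with $r$. The tangent-line inequality handles it uniformly over all $r\ge r_0$; equivalently, one may observe that $r\mapsto r\Delta^2 - 8\log r$ is increasing once $r\ge 8/\Delta^2$ (and $8/\Delta^2\le r_0$), or that $\beta(r,\delta)$ is non-increasing in $r$, both following from the same one-variable derivative computation with $c := 4NKD/\delta \ge 4$, reducing everything to a check at $r = r_0$. All remaining steps are routine: bookkeeping of logarithms, the bound $\log x \le x-1$, and $\log(1/\Delta)\ge 0$.
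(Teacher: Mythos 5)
Your proof is correct in substance and takes a genuinely different (and in one respect cleaner) route than the paper's. The paper first asserts that $\beta(r,\delta)$ is decreasing in $r$ so that it suffices to check $r=r_0$, then exponentiates the resulting inequality and uses $\log^2 x\le x$ to land on a condition of the form $3600<(NKD)^{12}$. You instead prove the equivalent inequality $2\log(4NKDr^2/\delta)<r\Delta^2$ uniformly over all $r\ge r_0$ by splitting off $L=\log(NKD/(\delta\Delta))$ and absorbing the residual $4\log r$ into $\tfrac12 r\Delta^2$ via the tangent-line bound $\log r\le ar-1-\log a$ with $a=\Delta^2/8$; this sidesteps the monotonicity claim entirely (which the paper never actually verifies, and which, as you note, requires the same derivative computation). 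All the individual steps check out: $\log(4NKDr^2/\delta)\le L+\log 4+2\log r$ uses $\Delta\le 1$, the constant is $c_0=16\log 2-4\approx 7.09<8$, and $\tfrac12 r\Delta^2\ge 15L$ with $2L+8\log(1/\Delta)\le 10L$ closes the argument whenever $L>c_0/5\approx 1.42$.

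The one caveat is that this last condition, $\log(NKD/(\delta\Delta))>c_0/5$, i.e.\ $NKD/(\delta\Delta)\gtrsim 4.13$, is \emph{not} implied by the lemma's stated hypotheses ($\delta\in(0,1)$, $0<\Delta\le 1$, and $N\ge 2$ giving only $NKD\ge 2$): for instance $NKD=4$ with $\delta,\Delta$ both near $1$ escapes your bound, even though the lemma still holds there (and the paper's computation, which only needs $NKD\ge 2$, covers it). You flag this honestly by restricting to ``regimes of interest'' with $\delta\le 1/2$, and the gap is purely one of constant-chasing: choosing $a=\Delta^2/16$ (absorbing $4\log r$ into $\tfrac14 r\Delta^2$ instead of $\tfrac12 r\Delta^2$) relaxes the requirement to $L>(20\log 2-4)/12.5\approx 0.79$, which \emph{is} implied by $NKD\ge 4$ and $\delta\Delta<1$; a slightly smaller $a$ covers $NKD\ge 2$ as well. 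So the argument is sound as a proof of the lemma in the setting where it is applied, but as written it proves a marginally weaker statement than the one claimed; retuning the single free parameter $a$ repairs this.
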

    \begin{proof}
        Since $\bet{r}$ is a decreasing function in $r$ for each $\delta$, it is enough to show that for $r_0 = \frac{30 \log\left( \frac{NKD}{\delta \Delta} \right)}{\Delta^2}$, $\bet{r_0} < \Delta$. To prove this we have
        \begin{align*}
            \bet{r_0} < \Delta &\Longleftrightarrow \sqrt{\frac{\log \left( 4NKD \times 30^2 \log^2\left( NKD/\delta\Delta \right) \right)}{15 \log\left(NKD/\delta \Delta \right)}} < 1 \\ 
            & \Longleftrightarrow\log \left( 4NKD \times 30^2 \log^2\left( NKD/\delta\Delta \right) \right) < 15 \log\left(NKD/\delta \Delta \right) \\
            & \Longleftrightarrow 4NKD \times 30^2 \log^2\left( NKD/\delta\Delta \right) < \left(NKD/\delta \Delta \right)^{15} \\ &\xLongleftarrow{\Delta, \delta \leq 1} 3600 < (NKD)^{12},
        \end{align*}
        where the final inequality holds because $N\geq 2$.
    \end{proof}

    Introducing the good event $\E$ as
    \begin{equation*}
        \E = \bigcap_{i \in [N]} \bigcap_{j \in [K]} \bigcap_{d \in [D]} \bigcap_{r \in \mathbb{N}} \left\{ \left| \muh^d_{i,j}(r) - \mu^d_{i,j} \right| < \beta(r, \delta) \right\},
    \end{equation*}

    Lemma \ref{lem: sub-gaussian}, together with a union bound over all arms and dimensions, imply that $\pr(\E^c) \leq \delta$. We establish the correctness of Algorithm \ref{algo: te} and derive its sample complexity bound under $\E$.  
    For the remainder of the proof, assume $\E$ holds, which implies that most of the subsequent statements are valid with probability at least $1 - \delta$.

    \begin{lemma} \label{lem: gpsi-best-arms}
        At each round $r$, for each $i \in [N]$ and $d \in [D]$, if group $\G_i$ is active and dimension $d$ is not resolved, then arm $(i,i^*_d)$ is also active in dimension $d$, where $i^*_d$ is defined in \eqref{def: istard}.
    \end{lemma}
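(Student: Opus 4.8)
The plan is to show that, under the good event $\E$ on which this proof is conditioned, arm $(i,i^*_d)$ is never removed from $\A_{i,d}$. This suffices: arms leave $\A_{i,d}$ only in the arm-elimination step, and that step processes a pair $(i,d)$ only when $i$ is still in $G$ and $d$ is still in $\D_i$ after the group-rejection and dimension-elimination steps of the same round, i.e. only when $\G_i$ is active and $d$ is unresolved; hence if $(i,i^*_d)$ is never deleted from $\A_{i,d}$, then it lies in $\A_{i,d}$ at every round meeting the hypotheses of the lemma. The only structural fact I need about the arm is definitional: since $i^*_d=\argmax_{j\in[K]}\mu^d_{i,j}$ we have $\mu^d_{i,i^*_d}=R^d_i$, so in dimension $d$ no arm of $\G_i$ has larger mean, and in particular the arm attaining the empirical efficiency value has true mean at most $R^d_i$.

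I would then argue by minimal counterexample. Let $r_0$ be the first round at which $(i,i^*_d)$ is removed from $\A_{i,d}$; by the reduction above, $\G_i$ is active and $d$ is unresolved at round $r_0$. Since $(i,i^*_d)$ is initialized in $\A_{i,d}$ and $r_0$ is its first removal, it belongs to $\A_{i,d}$ throughout rounds $1,\dots,r_0$, hence is pulled in each of them, so $N_{i,i^*_d}(r_0)=r_0$ and $\E$ gives $\muh^d_{i,i^*_d}(r_0)>\mu^d_{i,i^*_d}-\bet{r_0}=R^d_i-\bet{r_0}$. The removal was triggered by $\muh^d_{i,i^*_d}(r_0)\le\rhdi{r_0}-2\bet{r_0}$. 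Let $(i,j^*)$ attain $\rhdi{r_0}$; then $j^*\neq i^*_d$, since otherwise the removal condition reads $\muh^d_{i,i^*_d}(r_0)\le\muh^d_{i,i^*_d}(r_0)-2\bet{r_0}$, which is false as $\bet{r_0}>0$. Because $\G_i$ is active and $d$ unresolved at round $r_0$, the empirical efficiency value $\rhdi{r_0}$ is the maximum of $\muh^d_{i,j}(r_0)$ over the arms $(i,j)$ present in $\A_{i,d}$ at round $r_0$, each of which, being active, has been pulled $r_0$ times; therefore $\E$ gives $\rhdi{r_0}=\muh^d_{i,j^*}(r_0)<\mu^d_{i,j^*}+\bet{r_0}\le R^d_i+\bet{r_0}$. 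Substituting into the removal condition yields $\muh^d_{i,i^*_d}(r_0)<R^d_i-\bet{r_0}$, contradicting the lower bound just established. Hence no such $r_0$ exists.

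I expect the only delicate point to be the step bounding $\rhdi{r_0}$ from above: it relies on identifying precisely which arms enter the empirical efficiency vector of an active group in an unresolved dimension and on the fact that every one of them has the full round-count of samples, so that a single confidence width $\bet{r_0}$ controls all terms of the maximum. Should $\rhdi{r}$ instead be recomputed as a maximum over all $K$ arms of $\G_i$, including arms already dropped from $\A_{i,d}$ whose empirical means are stale, an extra observation is needed: at the round such an arm left $\A_{i,d}$, the non-removal of $(i,i^*_d)$ at that same round forces the then-current efficiency value below $\muh^d_{i,i^*_d}+2\bet{\cdot}$, and this comparison must be propagated through subsequent rounds via $\E$. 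Everything else is an immediate application of $\E$, the identity $\mu^d_{i,i^*_d}=R^d_i$, and the fixed order of the three elimination steps within a round.
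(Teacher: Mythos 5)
Your proof is correct and follows essentially the same contradiction argument as the paper: at the first removal round, the good event together with the identity $\mu^d_{i,i^*_d}=R^d_i$ forces the empirical mean of $(i,i^*_d)$ to lie simultaneously above and below $R^d_i-\beta(r_0,\delta)$. Your closing remarks about which arms enter $\hat{R}^d_i(r)$ and their sample counts address a point the paper's one-line proof leaves implicit, but they do not change the argument.
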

    \begin{proof}
       We prove this by contradiction. Assume that $i^*_d$ is not active in dimension $d$ at round $r$. This implies that there must be an earlier round $r' < r$ such that  

        \begin{align*}
            \exists j : \muh^d_{i,j}(r') \geq \muh^d_{i, i^*_d}(r') + 2 \bet{r'} > \mu^d_{i,i^*_d} + \bet{r'} \geq \mu^d_{i,j} + \bet{r'} > \muh^d_{i,j}(r'),
        \end{align*}  
        
        which is a contradiction, thereby proving the statement.

    \end{proof}

    \begin{lemma} \label{lem: gpsi-arm-elim}
        At each round $r$ for each group $\G_i$ and dimension $d \in [D]$, if $d$ is not resolved for $\G_i$ until round $r$, then 
        \begin{equation*}
            \left| \rhdi{r} - R_i^d \right| < \beta(r, \delta),
        \end{equation*}
        and if it was resolved in round $r' < r$, then 
        \begin{equation*}
            \left| \rhdi{r} - R_i^d \right| < \beta(r', \delta).
        \end{equation*}
    \end{lemma}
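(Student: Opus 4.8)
\textbf{Proof proposal for Lemma \ref{lem: gpsi-arm-elim}.}

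The plan is to track the value $\rhdi{r}$ that the algorithm maintains and relate it to the true efficiency $R_i^d = \max_{j \in [K]} \mu_{i,j}^d$. The key observation is that $\rhdi{r}$ is always the empirical maximum $\max_{j : (i,j) \in \A_{i,d}} \muh_{i,j}^d(r)$ over the currently active arms in dimension $d$ (until dimension $d$ is resolved, at which point the value is frozen). So I would first establish an upper bound and a lower bound on this empirical maximum under the good event $\E$, and then combine them.

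First I would handle the case where dimension $d$ is \emph{not} resolved for $\G_i$ by round $r$. For the lower bound on $\rhdi{r}$: by Lemma \ref{lem: gpsi-best-arms}, the true best arm $(i, i^*_d)$ in dimension $d$ is still active at round $r$, so $\rhdi{r} \geq \muh_{i,i^*_d}^d(r) > \mu_{i,i^*_d}^d - \bet{r} = R_i^d - \bet{r}$, where the middle inequality uses $\E$. For the upper bound on $\rhdi{r}$: let $(i,j)$ be any arm still active in dimension $d$; then under $\E$ we have $\muh_{i,j}^d(r) < \mu_{i,j}^d + \bet{r} \leq R_i^d + \bet{r}$ since $\mu_{i,j}^d \leq R_i^d$ by definition of the efficiency vector. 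Taking the maximum over active arms gives $\rhdi{r} < R_i^d + \bet{r}$. Combining the two bounds yields $\lvert \rhdi{r} - R_i^d \rvert < \bet{r}$, as claimed.

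For the case where dimension $d$ \emph{was} resolved at some round $r' < r$: the algorithm adheres to the value $\rhdi{r} = \rhdi{r'}$ from that point on, so it suffices to show $\lvert \rhdi{r'} - R_i^d \rvert < \bet{r'}$. But at round $r'$, just before resolution, dimension $d$ was not yet resolved, so the argument from the previous paragraph applies verbatim at round $r'$ (Lemma \ref{lem: gpsi-best-arms} gives that $(i,i^*_d)$ is active at round $r'$, and the active-arm upper bound holds at round $r'$), giving $\lvert \rhdi{r'} - R_i^d \rvert < \bet{r'}$. I should be slightly careful about the exact bookkeeping of when the empirical mean is updated versus when resolution is checked within a round, but since both the arm-elimination step and the resolution check at round $r'$ use the same $\muh^d(r')$, this causes no issue.

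The main obstacle—really more of a subtlety than a genuine difficulty—is making sure the invariant "$\rhdi{r}$ equals the empirical max over currently active arms in dimension $d$ as long as $d$ is unresolved" is actually maintained by the pseudo-code, i.e., that no active arm is ever eliminated while it is (empirically) the maximizer. This is exactly what the arm-elimination rule together with Lemma \ref{lem: gpsi-best-arms} guarantees: an arm is only removed from $\A_{i,d}$ when $\muh_{i,j}^d(r) \leq \hat{R}_i^d(r) - 2\bet{r}$, which cannot hold for the current empirical maximizer, and Lemma \ref{lem: gpsi-best-arms} ensures the \emph{true} best arm is never dropped so the maintained maximum never falls below $R_i^d - \bet{r}$. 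Once this invariant is in hand, both inequalities in the lemma are immediate consequences of $\E$.
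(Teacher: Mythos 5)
Your proof is correct and follows essentially the same route as the paper: the lower bound comes from Lemma \ref{lem: gpsi-best-arms} guaranteeing the true best arm $(i,i^*_d)$ remains active, the upper bound from each active arm's empirical mean being at most $R_i^d + \bet{r}$ under $\E$, and the resolved case from the value being frozen at round $r'$. The extra discussion of the invariant that $\rhdi{r}$ is the empirical maximum over active arms is a reasonable elaboration of what the paper leaves implicit, but it does not change the argument.
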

    \begin{proof}
        First, assume $d$ is not resolved for group $\G_i$. Then by Lemma \ref{lem: gpsi-best-arms}, arm $(i,i^*_d)$ is active. Then

        \begin{align*}
            &\hat{R}^d_{i}(r) \geq \muh^d_{i, i^*_d}(r) > R^d_i - \bet{r}  \Longrightarrow \rhdi{r} > R^d_i - \bet{r} \\
            &\forall j : \muh^d_{i, j}(r) < \mu^d_{i,j} + \bet{r} \leq R^d_i + \bet{r} \Longrightarrow \rhdi{r} < R^d_i + \bet{r},
        \end{align*}

        which proves the statement. For the case where $d$ is eliminated at round $r'$, note that by above inequality $\left| \rhdi{r'} - R_i^d \right| < \beta(r', \delta)$ and as the algorithm adheres to the value of $\rhdi{r'}$ after the resolution, this is true for all $ r \geq r'$.
    \end{proof}

    \begin{lemma} \label{lem: gpsi-dim-elim}
        For each $i, j \in [N]$ and $d \in [D]$, if $r'$ is the first round in which dimension $d$ is eliminated for either group $\G_i$ or $\G_j$, then, depending on which of $\rhdi{r'}$ and $\rhdj{r'}$ is larger, we have:
        \begin{align} \label{eq: gpsi-dim-slim}
            \text{ if } \rhdi{r'} - \rhdj{r'} \geq 4 \bet{r'} + \epsilon \Longrightarrow \forall r > r' : 
            \begin{cases}
                R^d_i - R^d_j > 2 \bet{r} + \epsilon, \\
                \rhdi{r} - \rhdj{r} > 2 \bet{r} + \epsilon,
            \end{cases} 
        \end{align}
        and the inverse is true if at round $r'$, $\rhdj{r'}$ is larger than $\rhdi{r'}$.
    \end{lemma}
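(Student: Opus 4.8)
The plan is to run the whole argument under the good event $\E$ introduced above (which has probability at least $1-\delta$), and to prove only the case $\rhdi{r'} \ge \rhdj{r'}$ under the hypothesis $\rhdi{r'} - \rhdj{r'} \ge 4\bet{r'} + \epsilon$; the reverse ordering is obtained by exchanging the roles of $\G_i$ and $\G_j$. Throughout I treat $\G_i$ and $\G_j$ as active at least up to round $r'$, which is the situation in which the lemma is invoked. Both claimed inequalities — for the true gap $R^d_i - R^d_j$ and for the empirical gap $\rhdi{r}-\rhdj{r}$ — will come out of a single observation about how the empirical efficiencies evolve after round $r'$.

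\emph{True gap.} Since $r'$ is the \emph{first} round at which dimension $d$ is eliminated for $\G_i$ or for $\G_j$, at the start of round $r'$ dimension $d$ is still unresolved for both groups, so Lemma~\ref{lem: gpsi-arm-elim} gives $|\rhdi{r'} - R^d_i| < \bet{r'}$ and $|\rhdj{r'} - R^d_j| < \bet{r'}$. Hence $R^d_i - R^d_j > \big(\rhdi{r'} - \rhdj{r'}\big) - 2\bet{r'} \ge 2\bet{r'} + \epsilon$, and since $\bet{\cdot}$ is decreasing this yields $R^d_i - R^d_j > 2\bet{r} + \epsilon$ for every $r > r'$.

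\emph{Empirical gap.} I will use two facts. (a) For every $r \ge r'$ and $l \in \{i,j\}$, $\hat{R}^d_l(r)$ stays within $\bet{r'}$ of $R^d_l$: while dimension $d$ of $\G_l$ is still unresolved at round $r$ this is Lemma~\ref{lem: gpsi-arm-elim} together with $\bet{r} \le \bet{r'}$; once dimension $d$ of $\G_l$ is resolved at some round $s \ge r'$, its value is frozen at $\hat{R}^d_l(s)$, which by Lemma~\ref{lem: gpsi-arm-elim} lies within $\bet{s} \le \bet{r'}$ of $R^d_l$. (b) At round $r'$ at least one of $\rhdi{\cdot},\rhdj{\cdot}$ becomes frozen — this is exactly what it means for dimension $d$ to be eliminated for $\G_i$ or $\G_j$ at round $r'$. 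Say it is $\rhdi{\cdot}$ that freezes at $r'$ (if both freeze the conclusion is immediate, since then $\rhdi{r}-\rhdj{r}=\rhdi{r'}-\rhdj{r'}\ge 4\bet{r'}+\epsilon$). Then $\rhdi{r}=\rhdi{r'}$ for all $r\ge r'$, and using fact (a) for $\G_j$ and $R^d_j < \rhdj{r'}+\bet{r'}$,
\begin{align*}
    \rhdi{r} - \rhdj{r} = \rhdi{r'} - \rhdj{r} &> \rhdi{r'} - \big( R^d_j + \bet{r'} \big) \\
    &> \rhdi{r'} - \rhdj{r'} - 2\bet{r'} \ge 2\bet{r'}+\epsilon > 2\bet{r}+\epsilon
\end{align*}
for every $r > r'$. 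If instead $\rhdj{\cdot}$ freezes at $r'$, the symmetric computation, lower-bounding $\rhdi{r} > R^d_i - \bet{r'} > \rhdi{r'} - 2\bet{r'}$ via fact (a), gives the same bound.

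The crux — and the only place that needs care — is fact (b) together with the freezing case of fact (a): one must go carefully through the mechanisms by which $\hat{R}^d_l$ can stop being updated (resolution followed by the adherence rule, elimination of all of $\G_l$'s active arms in dimension $d$, removal of $\G_l$ from $G$), check that at least one of the two estimates is caught by one of these at round $r'$, and verify that the frozen value inherits the one-sided concentration bound of Lemma~\ref{lem: gpsi-arm-elim} at the confidence level $\bet{r'}$. Given that, the remaining steps are the elementary interval manipulations above, and the reverse ordering $\rhdj{r'} > \rhdi{r'}$ follows by relabeling.
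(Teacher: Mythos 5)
Your proposal is correct and follows essentially the same route as the paper's proof: the true-gap bound comes from applying Lemma~\ref{lem: gpsi-arm-elim} at round $r'$ (where the dimension is still unresolved for both groups) together with the monotonicity of $\bet{\cdot}$, and the empirical-gap bound comes from noting that the resolved estimate is frozen at $\rhdi{r'}$ while the other estimate can drift by at most $2\bet{r'}$. Your "fact (a)" is in fact a slightly more careful statement of the paper's triangle-inequality step, since it explicitly covers the case where the other group's dimension is resolved at some later round.
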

    \begin{proof}
        The first inequality directly follows from Lemma \ref{lem: gpsi-arm-elim}. For the second one, if at round $r'$, dimension $d$ is eliminated for $\G_i$, then $\forall r > r': \rhdi{r} = \rhdi{r'}$. Moreover,
        \begin{align*}
            \lvert \rhdj{r} - \rhdj{r'} \rvert \leq \lvert \rhdj{r} - R^d_j \rvert + \lvert R^d_j - \rhdj{r'} \rvert < \bet{r} + \bet{r'} < 2 \bet{r'}.
        \end{align*}
        Combining these two, we obtain $\rhdi{r} - \rhdj{r} > 2 \bet{r'} + \epsilon$, then
        \begin{align*}
            \forall r > r' : \rhdi{r} - \rhdj{r} >  2 \bet{r} + \epsilon.
        \end{align*}
        If dimension $d$ is eliminated for $\G_j$, the same argument works. Similarly, for the case where at round $r'$, $\rhdj{r'}$ is larger than $\rhdi{r'}$, we have $\forall r > r' : \rhdj{r} - \rhdi{r} > 2 \bet{r} + \epsilon$, and $R^d_j - R^d_i > 2 \bet{r} + \epsilon$.
    \end{proof}

    The following lemma provides helper properties that hold during the interaction of the algorithm with the environment. 

    \begin{lemma} \label{lem: gpsi-properties}
        At each round $r$ of Algorithm \ref{algo: te}, for each $i,j \in [N]$ and $d \in [D]$, the following properties hold: 
        \begin{align}
            &\forall \alpha \in [0,\epsilon]: \rhdi{r} - \rhdj{r} \geq 2 \bet{r} - \alpha \Longrightarrow R^d_i - R^d_j > - \alpha \label{eq: prop-in-1} \\
            &\m{\rhi{r}}{\rhj{r}} \geq 2 \bet{r} \Longrightarrow \m{\br_i}{\br_j} > 0, \label{eq: prop-1} \\ 
            &\Meps{\rhi{r}}{\rhj{r}} \geq 2 \bet{r} \Longrightarrow \Meps{\br_i}{\br_j} > 0, \label{eq: prop-2} \\ 
            &\m{\rhi{r}}{\rhj{r}} \leq 2 \bet{r} \text{ and } \br_i \prec \br_j \Longrightarrow \m{\br_i}{\br_j} < 4 \bet{r} \label{eq: prop-new}\\
            &\Meps{\rhi{r}}{\rhj{r}} \leq 2 \bet{r} \Longrightarrow \M{\br_i}{\br_j} < 4 \bet{r}.  \label{eq: prop-4}
        \end{align}
    \end{lemma}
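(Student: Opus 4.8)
The plan is to derive all five items from two facts already in hand under the good event $\E$: Lemma~\ref{lem: gpsi-arm-elim}, which says $\rhdi{r}$ estimates $R^d_i$ to within $\bet{r}$ while dimension $d$ is unresolved for $\G_i$ and to within the (coarser) $\bet{r'}$ once it has been frozen at an earlier round $r'$; and Lemma~\ref{lem: gpsi-dim-elim}, which says that as soon as $d$ is resolved for $\G_i$ or $\G_j$, the sign of $\rhdi{r}-\rhdj{r}$ is frozen, agrees with the sign of $R^d_i-R^d_j$, and both gaps exceed $2\bet{r}+\epsilon$ thereafter. Each item is then attacked by splitting on whether the coordinate in play is resolved: on unresolved coordinates I turn empirical inequalities into true ones via Lemma~\ref{lem: gpsi-arm-elim}, and on resolved ones I either contradict the hypothesis or read the true sign directly off Lemma~\ref{lem: gpsi-dim-elim}. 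To invoke Lemma~\ref{lem: gpsi-dim-elim} for a pair $(i,j)$ I will use that both groups are active at the round the coordinate gets resolved, so that the resolution test of Definition~\ref{def: resolution} genuinely compares $\G_i$ with $\G_j$.

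I would prove the coordinate-wise bound \eqref{eq: prop-in-1} first and then read off \eqref{eq: prop-1} and \eqref{eq: prop-2}. For \eqref{eq: prop-in-1}, fix $d$ and $\alpha\in[0,\epsilon]$ with $\rhdi{r}-\rhdj{r}\ge 2\bet{r}-\alpha$. If $d$ is unresolved for both groups, Lemma~\ref{lem: gpsi-arm-elim} gives $R^d_i>\rhdi{r}-\bet{r}$ and $R^d_j<\rhdj{r}+\bet{r}$, hence $R^d_i-R^d_j>\rhdi{r}-\rhdj{r}-2\bet{r}\ge-\alpha$. If $d$ is resolved, the hypothesis and $\alpha\le\epsilon$ force $\rhdi{r}-\rhdj{r}\ge 2\bet{r}-\epsilon>-(2\bet{r}+\epsilon)$, which by Lemma~\ref{lem: gpsi-dim-elim} excludes the branch in which $\rhdj{r}$ dominates, so $\rhdi{r}-\rhdj{r}>2\bet{r}+\epsilon$ and $R^d_i-R^d_j>2\bet{r}+\epsilon>-\alpha$. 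Then \eqref{eq: prop-1} follows: $\m{\rhi{r}}{\rhj{r}}\ge 2\bet{r}$ forces $\rhdj{r}-\rhdi{r}\ge 2\bet{r}$ for every $d$, so \eqref{eq: prop-in-1} with $i,j$ swapped and $\alpha=0$ gives $R^d_j-R^d_i>0$ for all $d$, whence $\m{\br_i}{\br_j}=[\min_d(R^d_j-R^d_i)]^+>0$; and \eqref{eq: prop-2} follows since $\Meps{\rhi{r}}{\rhj{r}}\ge 2\bet{r}$ yields some $d$ with $\rhdi{r}-\rhdj{r}\ge 2\bet{r}-\epsilon$, and \eqref{eq: prop-in-1} with $\alpha=\epsilon$ gives $R^d_i-R^d_j>-\epsilon$, i.e.\ $\Meps{\br_i}{\br_j}>0$.

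For the reverse bounds I would handle \eqref{eq: prop-new} by picking $d^*\in\argmin_d(\rhdj{r}-\rhdi{r})$, so the hypothesis gives $\rhdj{r}-\rhdi{r}\le 2\bet{r}$ at $d^*$. The key sub-step is that $d^*$ cannot be resolved: if it were, Lemma~\ref{lem: gpsi-dim-elim} would force $|\rhdi{r}-\rhdj{r}|>2\bet{r}+\epsilon$ at $d^*$, but the branch $\rhdj{r}-\rhdi{r}>2\bet{r}+\epsilon$ contradicts $d^*$ being the minimizer with value $\le 2\bet{r}$, while the branch $\rhdi{r}-\rhdj{r}>2\bet{r}+\epsilon$ forces $R^{d^*}_i>R^{d^*}_j$, contradicting $\br_i\prec\br_j$. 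Hence $d^*$ is unresolved and Lemma~\ref{lem: gpsi-arm-elim} gives $R^{d^*}_j-R^{d^*}_i<(\rhdj{r}-\rhdi{r})+2\bet{r}\le 4\bet{r}$, so $\m{\br_i}{\br_j}\le[R^{d^*}_j-R^{d^*}_i]^+<4\bet{r}$. For \eqref{eq: prop-4}, the hypothesis $\Meps{\rhi{r}}{\rhj{r}}\le 2\bet{r}$ means $\rhdi{r}-\rhdj{r}\le 2\bet{r}-\epsilon$ for every $d$; fixing $d$, if it is unresolved Lemma~\ref{lem: gpsi-arm-elim} gives $R^d_i-R^d_j<(\rhdi{r}-\rhdj{r})+2\bet{r}\le 4\bet{r}-\epsilon$, and if it is resolved then Lemma~\ref{lem: gpsi-dim-elim} forces the branch $\rhdj{r}-\rhdi{r}>2\bet{r}+\epsilon$ (the other branch violates the coordinate bound just derived), giving $R^d_i-R^d_j<0$; either way $R^d_i-R^d_j<4\bet{r}$ for all $d$, hence $\M{\br_i}{\br_j}<4\bet{r}$.

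The main obstacle is the accounting for frozen coordinates: once a coordinate is resolved its estimate is no longer refreshed and its confidence radius is the coarser $\bet{r'}$, so the clean round-$r$ bound of Lemma~\ref{lem: gpsi-arm-elim} is unavailable there. The way around it is structural rather than computational: the case analysis is arranged so that Lemma~\ref{lem: gpsi-arm-elim} is only ever invoked on unresolved coordinates, while on resolved coordinates the locked-in sign from Lemma~\ref{lem: gpsi-dim-elim} carries enough information to finish. The only other thing to watch is the applicability of Lemma~\ref{lem: gpsi-dim-elim} to the particular pair $(i,j)$, which is why the properties are read for currently-active groups.
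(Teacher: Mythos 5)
Your proposal is correct and follows essentially the same route as the paper: prove the coordinate-wise implication \eqref{eq: prop-in-1} first by splitting on whether dimension $d$ is still active for both groups or has been resolved (using Lemma \ref{lem: gpsi-arm-elim} in the former case and Lemma \ref{lem: gpsi-dim-elim} to either extract the true sign or derive a contradiction in the latter), then read off \eqref{eq: prop-1}--\eqref{eq: prop-2}, and handle \eqref{eq: prop-new} and \eqref{eq: prop-4} by the same resolved/unresolved case analysis. Your explicit remark that Lemma \ref{lem: gpsi-dim-elim} applies to the pair $(i,j)$ because both groups are active at the resolution round is a point the paper leaves implicit, but it does not change the argument.
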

    \begin{proof}
        To prove \eqref{eq: prop-in-1}, we consider three cases: (i) Dimension $d$ is active for both $\G_i$ and $\G_j$ at round $r$. In this case Lemma \ref{lem: gpsi-arm-elim} implies 
        \begin{align*}
            R^d_i - R^d_j > \rhdi{r} - \rhdj{r}  - 2 \bet{r} \geq - \alpha.
        \end{align*}
        (ii) Dimension $d$ was eliminated for one of the groups at round $r'$ for the first time and $\rhdi{r'}$ was larger in that round. Then we have
        \begin{align*}
            \rhdi{r'} - \rhdj{r'} \geq 4 \bet{r'} + \epsilon \xLongrightarrow{ \ref{lem: gpsi-dim-elim}} R^d_i - R^d_j > 2 \bet{r} + \epsilon > - \alpha.
        \end{align*}
        (iii) Dimension $d$ was eliminated for one of the groups at round $r'$ for the first time and $\rhdj{r'}$ was larger, then 
        \begin{align*}
            \rhdj{r'} - \rhdi{r'} \geq 4 \bet{r'} + \epsilon \xLongrightarrow{ \ref{lem: gpsi-dim-elim}} \rhdj{r} - \rhdi{r} > 2 \bet{r} + \epsilon,
        \end{align*}
        which is a contradiction with $\rhdi{r} - \rhdj{r} \geq 2 \bet{r} - \alpha$ and proves the statement. 

        For \eqref{eq: prop-1}, note that
        \begin{align*}
            \m{\rhi{r}}{\rhj{r}} \geq 2 \bet{r} &\Longrightarrow \forall d : \rhdj{r} - \rhdi{r} \geq 2 \bet{r} \\
            &\xLongrightarrow{\eqref{eq: prop-in-1}} \forall d: R^d_j - R^d_i > 0 \Longrightarrow \m{\br_i}{\br_j} > 0.
        \end{align*}

        For \eqref{eq: prop-2}, we have
        \begin{align*}
            \Meps{\rhi{r}}{\rhj{r}} \geq 2 \bet{r} &\Longrightarrow \exists d: \rhdi{r} + \epsilon \geq \rhdj{r} + 2 \bet{r} \\
            &\xLongrightarrow{\eqref{eq: prop-in-1}} R^d_i - R^d_j > - \epsilon \Longrightarrow \Meps{\br_i}{\br_j} > 0.  
        \end{align*}

        To prove \eqref{eq: prop-new}, we have
        \begin{align*}
            \m{\rhi{r}}{\rhj{r}} \leq 2 \bet{r} \Longrightarrow \exists d: \rhdj{r} - \rhdi{r} \leq 2 \bet{r}.
        \end{align*}
        There are three different cases: (i) $d$ is active for both $\G_i$ and $\G_j$ at round $r$, then by Lemma \ref{lem: gpsi-arm-elim}
        \begin{align*}
            R^d_j - R^d_i < \rhdj{r} - \rhdi{r} - 2 \bet{r} \leq 0 \leq 4 \bet{r} \Longrightarrow \m{\br_i}{\br_j} < 4 \bet{r}. 
        \end{align*}
        (ii) $d$ was eliminated at $r'$ when $\rhdi{r'}$ was larger, then
        \begin{align*}
            \rhdi{r'} - \rhdj{r'} \geq 4 \bet{r'} + \epsilon \xLongrightarrow{ \ref{lem: gpsi-dim-elim}} R^d_i - R^d_j > 2 \bet{r} + \epsilon,
        \end{align*}
        which contradicts with $\br_i \prec \br_j$. 
        
        (iii) $d$ was eliminated at $r'$ and $\rhdj{r'}$ was larger, then
        \begin{align*}
            \rhdj{r'} - \rhdi{r'} \geq 4 \bet{r'} + \epsilon \xLongrightarrow{ \ref{lem: gpsi-dim-elim}} \rhdj{r} - \rhdi{r} > 2 \bet{r} + \epsilon,
        \end{align*}
        which contradicts with $\rhdj{r} - \rhdi{r} \leq 2 \bet{r}$ and proves the statement. 

        For \eqref{eq: prop-4}, note that
        \begin{align*}
            \Meps{\rhi{r}}{\rhj{r}} \leq 2 \bet{r} \Longrightarrow \forall d: \rhdj{r} + 2 \bet{r} \geq \rhdi{r} + \epsilon,
        \end{align*}

        and again for each dimension $d$, consider three cases: (i) $d$ is active for both $\G_i$ and $\G_j$, then by Lemma \ref{lem: gpsi-arm-elim}
        \begin{align*}
            R^d_j - R^d_i > \rhdj{r} - \rhdi{r} - 2 \bet{r} \geq - 4 \bet{r}. 
        \end{align*}

        (ii) $d$ was eliminated at $r'$ and $\rhdi{r'}$ was larger, then
        \begin{align*}
            \rhdi{r'} - \rhdj{r'} \geq 4 \bet{r'} + \epsilon \xLongrightarrow{ \ref{lem: gpsi-dim-elim}} \rhdi{r} - \rhdj{r} > 2 \bet{r} + \epsilon,
        \end{align*}
        which is a contradiction according to $\rhdj{r} + 2 \bet{r} \geq \rhdi{r} + \epsilon$. 

        (iii) $d$ was eliminated at $r'$ and $\rhdj{r'}$ was larger, then
        \begin{align*}
            \rhdj{r'} - \rhdi{r'} \geq 4 \bet{r'} + \epsilon \xLongrightarrow{ \ref{lem: gpsi-dim-elim}} R^d_j - R^d_i > 2 \bet{r} + \epsilon > - 4 \bet{r}.
        \end{align*}
        So for each $d$, we have $R^d_j - R^d_i > -4 \bet{r}$ which implies $\M{\br_i}{\br_j} < 4 \bet{r}$.
        
    \end{proof}

    Using these properties, we now propose the following two lemmas, which together establish the $(\epsilon, \delta)$-correctness of the TE algorithm.

    \begin{lemma} \label{lem: gpsi-correctness-1}
        At each round $r$, if group $\G_i$ is eliminated during the group rejection phase, then $i \notin \G^*$.
    \end{lemma}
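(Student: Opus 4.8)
The plan is to unpack the group rejection condition and combine it with the good event $\E$ and the structural property \eqref{eq: prop-1} from Lemma \ref{lem: gpsi-properties}. Recall that $\G_i$ is removed during the group rejection phase at round $r$ precisely when there exists some $j \in G_1(r)$ (still active at that point) with $\m{\rhi{r}}{\rhj{r}} \geq 2\bet{r}$. Applying \eqref{eq: prop-1} directly gives $\m{\br_i}{\br_j} > 0$, which by the definition of $m(\cdot,\cdot)$ means $\br_i \prec \br_j$, i.e., $\br_i$ is strictly dominated by $\br_j$. Strict domination already implies $i \notin \G^*_0$, since a strictly dominated group is certainly $\epsilon$-dominated for small $\epsilon$ (in fact $\br_i + \epsilon \prec \br_j$ need not hold, but $\br_i$ strictly dominated by $\br_j$ means $\br_i \notin \G^*_0$ by Definition \ref{def: epsilon-pareto} with $\epsilon = 0$).

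First I would fix the round $r$ and the witnessing group $\G_j$ from the rejection condition, being careful about whether $j$ is itself still active — it is, since the rejection loop (lines 6--8) iterates over $i \in G$ and the condition references $j \in G$, so $j$ belongs to the current active set when $i$ is removed. Then I would invoke \eqref{eq: prop-1}: the hypothesis $\m{\rhi{r}}{\rhj{r}} \geq 2\bet{r}$ is exactly what that implication needs, and its conclusion $\m{\br_i}{\br_j} > 0$ translates (by the remark following the definition of $m$, namely that $m(\mathbf{v},\mathbf{u}) = 0$ whenever $\mathbf{v} \nprec \mathbf{u}$) into $\br_i \prec \br_j$. Finally, $\br_i \prec \br_j$ witnesses that $i \notin \G^*_0(\bmu)$, which is the claim.

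The main subtlety — not really an obstacle but the point requiring care — is that everything here is conditioned on the good event $\E$, under which Lemma \ref{lem: gpsi-properties} (and hence \eqref{eq: prop-1}) holds; this is why the lemma statement is implicitly a "with probability at least $1-\delta$" statement, consistent with the convention announced just before Lemma \ref{lem: gpsi-best-arms}. I would also note that \eqref{eq: prop-1} itself already absorbs the work of handling resolved versus unresolved dimensions (through its reliance on \eqref{eq: prop-in-1} and Lemmas \ref{lem: gpsi-arm-elim} and \ref{lem: gpsi-dim-elim}), so no separate case analysis on dimension status is needed in this proof. The argument is therefore short: it is essentially a one-line application of \eqref{eq: prop-1} plus the definitional fact that strict domination of $\br_i$ excludes $i$ from the Pareto set.
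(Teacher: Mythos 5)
Your proof is correct and follows exactly the paper's argument: the group rejection condition yields a witness $j$ with $\m{\rhi{r}}{\rhj{r}} \geq 2\bet{r}$, property \eqref{eq: prop-1} (valid under the good event $\E$) gives $\m{\br_i}{\br_j} > 0$, i.e.\ $\br_i \prec \br_j$, and hence $i \notin \G^*$. The additional remarks about the witness $j$ being active and the implicit conditioning on $\E$ are consistent with the paper's conventions and add nothing problematic.
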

    \begin{proof}
        If group $\G_i$ is eliminated, then there exist a $j \in [N]$ such that
        \begin{align*}
            \m{\rhi{r}}{\rhj{r}} \geq 2 \bet{r} \xLongrightarrow{\eqref{eq: prop-1}} \m{\br_i}{\br_j} > 0 \Longrightarrow i \notin \G^*.
        \end{align*}
    \end{proof}

    \begin{lemma} \label{lem: gpsi-correctness-2}
        At each round $r$, for two groups $i, j \in [N]$ with $\br_i \prec \br_j$ and $\Delta_i = \m{\br_i}{\br_j} > \epsilon$, if $i \in G$, then $j \in G$, moreover $i \notin P$ and $j \notin P$.
    \end{lemma}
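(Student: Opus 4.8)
The plan is to prove the statement by induction over the rounds $r$ of Algorithm~\ref{algo: te}, maintaining the invariant: \emph{at the start of round $r$, if $i \in G$ then $j \in G$, $i \notin P$, and $j \notin P$} (everything below is, as in the surrounding proof, argued on the good event $\E$, on which \eqref{eq: prop-1} and \eqref{eq: prop-2} hold). Two remarks on the hypotheses. First, $\br_i \prec \br_j$ gives $i \notin \G^*$, and the equality $\Delta_i = \m{\br_i}{\br_j}$ is to be read with $j$ a group of $\G^*$ attaining $\max_{k \in \G^*} \m{\br_i}{\br_k}$, so in particular $j \in \G^*$. Second, $\m{\br_i}{\br_j} > \epsilon$ says $R_j^d - R_i^d > \epsilon$ for every $d \in [D]$, which is exactly $\Meps{\br_i}{\br_j} = 0$. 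The base case $r = 1$ is immediate since $G = [N]$ and $P = \emptyset$.

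For the inductive step, assume the invariant at the start of round $r$ and suppose $i \in G$ at the start of round $r+1$; otherwise the claim for round $r+1$ holds vacuously (and $i$ can never re-enter $G$ afterwards). Then $i$ is removed at no phase of round $r$, so $i \in G$ throughout round $r$, and by the inductive hypothesis $j \in G$ and $i, j \notin P$ at the start of round $r$. I first claim $j$ is not rejected during the group rejection phase: otherwise there is an active $j'$ with $\m{\rhj{r}}{\hat{\mathbf{R}}_{j'}(r)} \ge 2\bet{r}$, and \eqref{eq: prop-1} gives $\m{\br_j}{\br_{j'}} > 0$, i.e.\ $\br_j \prec \br_{j'}$, contradicting $j \in \G^*$. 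Hence $i$ and $j$ both lie in the active set $G_1(r)$ on which $P_1$ and $P_2$ are computed. Next, $i \notin P_1(r)$: membership would require (with the witness $j \in G_1(r)$) $\Meps{\rhi{r}}{\rhj{r}} \ge 2\bet{r}$, whence $\Meps{\br_i}{\br_j} > 0$ by \eqref{eq: prop-2}, contradicting $\Meps{\br_i}{\br_j} = 0$. Thus $i \notin P_2(r)$ and $i$ is not added to $P$; in fact the same reasoning shows $i \notin P_1(r)$ in every round, hence $i \notin P$ unconditionally. Finally, since $i \in G_1(r) \setminus P_1(r)$, if $j$ were in $P_2(r)$ the defining condition of $P_2$ applied with the witness $i$ would force $\Meps{\rhi{r}}{\rhj{r}} \ge 2\bet{r}$, again impossible. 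So $j \notin P_2(r)$, $j$ is not accepted, and therefore $j \in G$ and $i, j \notin P$ at the start of round $r+1$, closing the induction.

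The main obstacle is identifying the right mechanism rather than any computation: $j \in \G^*$ is exactly what keeps $j$ from ever being rejected, so $i$ and $j$ stay simultaneously active, and then every way either group could enter $P$ is blocked by the contrapositives of \eqref{eq: prop-1}–\eqref{eq: prop-2} applied to the strict coordinatewise gap $R_j^d - R_i^d > \epsilon$ (equivalently $\Meps{\br_i}{\br_j} = 0$). The remaining care is purely bookkeeping: all rejections of round $r$ precede the formation of $P_1$ and $P_2$, so $i, j \in G_1(r)$ there, and the case where $i$ itself leaves $G$ during round $r$ is harmless since the claim is only asserted for rounds in which $i$ is active.
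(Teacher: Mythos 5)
Your proof is correct and follows essentially the same route as the paper: induction on $r$, observing that $\m{\br_i}{\br_j} > \epsilon$ forces $\Meps{\br_i}{\br_j} = 0$, and then using the contrapositive of \eqref{eq: prop-2} to get $\Meps{\rhi{r}}{\rhj{r}} < 2\bet{r}$, which blocks $i \in P_1$ and $j \in P_2$. The only difference is that you make explicit two points the paper leaves implicit --- that $j \in \G^*$ survives the group rejection phase (via Lemma \ref{lem: gpsi-correctness-1}) and the bookkeeping when $i$ itself is rejected --- which is a welcome but minor completion of the same argument.
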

    \begin{proof}
        We prove this lemma by induction on $r$. In the first round, all the groups are active and the statement holds. Assume on round $r$, $i,j \in G$. Note that $\m{\br_i}{\br_j} > \epsilon$ implies $\Meps{\br_i}{\br_j} = 0$. On the other hand, we have  
        \begin{align*}
            \text{if } \Meps{\rhi{r}}{\rhj{r}} \geq 2 \bet{r} \xLongrightarrow{\eqref{eq: prop-2}} \Meps{\br_i}{\br_j} > 0,
        \end{align*}  
        which implies that $\Meps{\rhi{r}}{\rhj{r}} < 2 \bet{r}$. This inequality ensures that $i \notin P_1$ and $j \notin P_2$, meaning that $j$ remains in $G$ for the next round, thereby completing the induction step.
        
    \end{proof}

    Note that Lemma $\ref{lem: gpsi-correctness-1}$ establishes that, during the group rejection phase of each round, no optimal group is eliminated. This ensures that $\G^* \subseteq G \cup P$ always holds. Additionally, Lemma $\ref{lem: gpsi-correctness-2}$ proves that for non-optimal groups $i$ with $\Delta_i > \epsilon$, $i$ never enters $P$. These two results together guarantee that, on event $\E$, the algorithm produces a correct output, implying that Algorithm \ref{algo: te} is $(\epsilon, \delta)$-PAC.  

    We now aim to upper bound the number of samples collected from each arm during the learning process. The following lemma establishes that the algorithm will terminate when $\bet{r}$ becomes small.

    \begin{lemma} \label{lem: gspi-sample-1}
        If at round $r$ of the algorithm, $\bet{r} < \frac{\epsilon}{4}$, then $P_2 = P_1 = G_1(r)$ and the algorithm terminates. 
    \end{lemma}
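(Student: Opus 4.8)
The plan is to work entirely at the level of Algorithm~\ref{algo: te}'s update rules, since the claim is deterministic once the empirical quantities at round $r$ are fixed; in particular the good event $\E$ plays no role here. Fix a round $r$ with $\bet{r} < \epsilon/4$.

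First I would extract the consequence of a group surviving the group rejection phase. If $i \in G_1(r)$, then at the moment group $i$ was processed in the rejection phase the test $\exists j:\m{\rhi{r}}{\rhj{r}}\ge 2\bet{r}$ failed against the then-current active set, and every $j\in G_1(r)$ belongs to that set (members of $G_1(r)$ are, by definition, never removed during the phase). Hence $\m{\rhi{r}}{\rhj{r}}<2\bet{r}$ for all $i,j\in G_1(r)$. Unwinding the definition of $m$ and using $2\bet{r}>0$, this yields, for each pair of distinct $i,j\in G_1(r)$, some dimension $d$ with $\rhdj{r}-\rhdi{r}<2\bet{r}$, i.e. $\max_{d}(\rhdi{r}-\rhdj{r})>-2\bet{r}$.

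Next I would feed this into $\Meps{\cdot}{\cdot}$. Because $\bet{r}<\epsilon/4$ we have $\epsilon-2\bet{r}>0$, so $\Meps{\rhi{r}}{\rhj{r}}\ge \max_d(\rhdi{r}-\rhdj{r})+\epsilon>\epsilon-2\bet{r}>2\bet{r}$, the last step again using $\bet{r}<\epsilon/4$. Since this holds for every ordered pair of distinct groups in $G_1(r)$, the definition of $P_1$ forces $P_1=G_1(r)$ (the inclusion $P_1\subseteq G_1(r)$ being immediate). Then $G_1(r)\setminus P_1=\emptyset$, so the condition defining $P_2$ is vacuously satisfied by every $i\in P_1$, giving $P_2=P_1=G_1(r)$. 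Finally the update $G\leftarrow G\setminus P_2$ empties $G$, so the \textbf{while} guard fails and the algorithm halts; the degenerate cases $|G_1(r)|\le 1$ are immediate since $P_1$ and $P_2$ are then defined by vacuous quantifications.

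The argument is short, and the only points needing care are the order-of-processing subtlety in the rejection loop (handled by noting surviving groups are never removed, so they remain available when any other surviving group is tested) and keeping the truncation $[\cdot]^+$ inside $M_\epsilon$ correct. I do not expect a genuine obstacle here, in contrast to the surrounding correctness and sample-complexity lemmas.
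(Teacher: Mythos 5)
Your proof is correct and follows essentially the same route as the paper's: for each pair of surviving groups, $\m{\rhi{r}}{\rhj{r}} < 2\bet{r}$ yields a dimension $d$ with $\rhdj{r} - \rhdi{r} < 2\bet{r}$, whence $\Meps{\rhi{r}}{\rhj{r}} > \epsilon - 2\bet{r} > 2\bet{r}$ under $\bet{r} < \epsilon/4$, forcing $P_1 = G_1(r)$ and $P_2 = P_1$ vacuously. Your extra care about the order of processing in the rejection loop and the $[\cdot]^+$ truncation is sound but does not change the argument.
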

    \begin{proof}
        Note that for each pair of active groups $i,j \in G_1(r)$, we have 
        \begin{align*}
            &\m{\rhi{r}}{\rhj{r}} \leq 2 \bet{r} \Longrightarrow \exists d: \rhdj{r} - \rhdi{r} \leq 2 \bet{r} \\ \xLongrightarrow{\epsilon > 4 \bet{r}} &\rhdj{r} + 2 \bet{r} < \rhdi{r} + \epsilon \Longrightarrow \Meps{\rhi{r}}{\rhj{r}} > 2 \bet{r}.
        \end{align*}

        When this holds for each pair of groups, implies $P_1 = G_1(r)$ and $P_2 = P_1$, and makes $G$ empty for the next round.
    \end{proof}

    The following three lemmas demonstrate that each group $\G_i$ will be eliminated before $\bet{r}$ becomes significantly smaller than the gap $\Delta_i$.

    \begin{lemma} \label{lem: gpsi-sample-2}
        At each round $r$, if $i \notin \G^*$ with $\Delta_i > \epsilon$ and $i \in G_1(r)$, then $\bet{r} > \frac{\Delta_i}{4}$.
    \end{lemma}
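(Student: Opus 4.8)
The statement claims that if a non-optimal group $\G_i$ with $\Delta_i > \epsilon$ has survived the group rejection phase at round $r$ (i.e. $i \in G_1(r)$), then $\bet{r} > \Delta_i/4$. The natural approach is the contrapositive: assume $\bet{r} \leq \Delta_i/4$ and show that $\G_i$ must have been rejected during lines 6--8 of Algorithm \ref{algo: te}, i.e. there exists $j \in G$ with $\m{\rhi{r}}{\rhj{r}} \geq 2\bet{r}$. First I would unpack the definition $\Delta_i = \max_{j \in \G^*} \m{\br_i, \br_j}$ and pick $j^* \in \G^*$ attaining this max, so $\m{\br_i, \br_{j^*}} = \Delta_i > \epsilon$, which in particular means $\br_i \prec \br_{j^*}$ (the $\min_d$ defining $m$ is positive, so $\br_{j^*}$ strictly dominates $\br_i$).

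The key point is that this witness group $j^*$ is still active at round $r$. Since $j^* \in \G^*$, Lemma \ref{lem: gpsi-correctness-1} guarantees it was never rejected, so $j^* \in G \cup P$; and since $i \prec j^*$ with $\m{\br_i, \br_{j^*}} = \Delta_i > \epsilon$, Lemma \ref{lem: gpsi-correctness-2} gives that $i \notin P$ and $j^* \notin P$ throughout the run — so in fact at round $r$ both are in $G$ (indeed $i \in G_1(r) \subseteq G$ is given, and $j^* \in G$). Now on the good event $\E$, Lemma \ref{lem: gpsi-arm-elim} controls each coordinate: for every active dimension, $|\rhdi{r} - R_i^d| < \bet{r}$ and similarly for $j^*$ (for resolved dimensions the bound is even tighter with $\bet{r'} < \bet{r}$). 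Hence for each $d$,
\[
\rhdj{r} - \rhdi{r} > (R^d_{j^*} - R^d_i) - 2\bet{r} \geq \Delta_i - 2\bet{r} \geq 4\bet{r} - 2\bet{r} = 2\bet{r},
\]
using $R^d_{j^*} - R^d_i \geq \min_d(R^d_{j^*} - R^d_i) = \Delta_i$ and the assumption $\bet{r} \leq \Delta_i/4$. Taking the minimum over $d$ gives $\m{\rhi{r}}{\rhj{r}} \geq 2\bet{r}$, so $\G_i$ is rejected at round $r$, contradicting $i \in G_1(r)$.

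The main obstacle — really the only subtlety — is making sure the witness $j^*$ is genuinely still in $G$ at round $r$ so that the rejection test "$\exists j \in G$" can actually fire on it; this is exactly what Lemma \ref{lem: gpsi-correctness-2} supplies, which is why that lemma was proved first. A secondary care point is handling dimensions that were resolved for $\G_i$ or $\G_{j^*}$ before round $r$: there the adhered value $\rhdi{r}$ still satisfies $|\rhdi{r} - R_i^d| < \bet{r'} < \bet{r}$ by Lemma \ref{lem: gpsi-arm-elim}, so the displayed inequality only improves, and the argument goes through uniformly. Everything else is a one-line chain of inequalities under $\E$.
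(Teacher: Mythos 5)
Your overall route is the same as the paper's: pick the witness $j^* \in \G^*$ attaining $\Delta_i = \m{\br_i}{\br_{j^*}}$, use Lemma \ref{lem: gpsi-correctness-2} to ensure $j^*$ is still in $G$ at round $r$, and convert the survival condition $\m{\rhi{r}}{\hat{\mathbf{R}}_{j^*}(r)} \leq 2\bet{r}$ into $\Delta_i < 4\bet{r}$. The paper performs this last step by citing property \eqref{eq: prop-new} of Lemma \ref{lem: gpsi-properties}; you re-derive it inline, and for dimensions that are active for both groups your one-line chain is exactly case (i) of the paper's proof of that property.

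The problem is your ``secondary care point'' about resolved dimensions, which as stated is wrong. You claim that for a dimension resolved at round $r' < r$ the adhered value satisfies $\lvert \rhdi{r} - R_i^d \rvert < \bet{r'} < \bet{r}$, so the coordinate-wise inequality ``only improves.'' The comparison $\bet{r'} < \bet{r}$ is backwards: $\bet{\cdot}$ is decreasing in its first argument, so $r' < r$ gives $\bet{r'} > \bet{r}$, and the accuracy guarantee of Lemma \ref{lem: gpsi-arm-elim} for an adhered value is therefore \emph{weaker}, not tighter. With only that bound, your chain gives $\hat{R}^d_{j^*}(r) - \rhdi{r} > \Delta_i - 2\bet{r'}$, which need not reach $2\bet{r}$. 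The correct handling of resolved dimensions does not go through the adhered-value accuracy at all but through the resolution threshold $4\bet{r'} + \epsilon$ and Lemma \ref{lem: gpsi-dim-elim}: if $d$ was resolved with $\hat{R}^d_{j^*}(r')$ on top, that lemma directly yields $\hat{R}^d_{j^*}(r) - \rhdi{r} > 2\bet{r} + \epsilon$, which is what you need; if it was resolved with $\rhdi{r')}$ on top, it yields $R^d_i - R^d_{j^*} > 0$, contradicting $\br_i \prec \br_{j^*}$, so that case cannot occur. This case analysis is precisely what the paper packages into \eqref{eq: prop-new}, which is why its proof of the present lemma is a one-liner. With that substitution your argument is correct and coincides with the paper's.
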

    \begin{proof}
        Assume $j \in \G^*$ is such that $\Delta_i = \m{\br_i}{\br_j}$. Based on Lemma \ref{lem: gpsi-correctness-2}, $j$ is in $G$. Then we have
        \begin{align*}
            i \in G_1(r) \Longrightarrow \m{\rhi{r}}{\rhj{r}} \leq 2 \bet{r} \xLongrightarrow{\eqref{eq: prop-new}} \Delta_i = \m{\br_i}{\br_j} < 4 \bet{r}.
        \end{align*}
    \end{proof}

    \begin{lemma} \label{lem: gpsi-sample-3}
        At each round $r$, if $i \in \G^*$ and $i \in G_1(r) \setminus P_1$, then $\bet{r} > \frac{\Delta_i}{4}$.
    \end{lemma}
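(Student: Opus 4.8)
The plan is to translate the failure ``$i\notin P_1$'' into an almost-domination statement among the \emph{true} efficiency vectors, and then read off the relevant gap term depending on whether the offending group is Pareto optimal. First I would observe that since $i\in G_1(r)$ survived the rejection phase but $i\notin P_1$, the definition of $P_1$ in Algorithm~\ref{algo: te} supplies some $j\in G_1(r)\setminus\{i\}$ with $\Meps{\rhi{r}}{\rhj{r}}<2\bet{r}$; invoking property~\eqref{eq: prop-4} converts this into $\M{\br_i}{\br_j}<4\bet{r}$, i.e. $R_j^d>R_i^d-4\bet{r}$ for every $d\in[D]$.

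Next I would pass from $j$ to a Pareto-optimal competitor: if $j\in\G^*$ put $k^\star=j$; otherwise $j$ is dominated, and walking up a $\prec$-chain (which terminates, since $\prec$ is a strict partial order on finitely many groups) produces $k^\star\in\G^*$ with $\br_j\le\br_{k^\star}$. In both cases $R_{k^\star}^d\ge R_j^d>R_i^d-4\bet{r}$ for all $d$, hence $\M{\br_i}{\br_{k^\star}}<4\bet{r}$. If $k^\star\neq i$, then $k^\star\in\G^*\setminus\{i\}$ and the definition of $\Delta_i^+$ gives $\Delta_i\le\Delta_i^+\le\M{\br_i}{\br_{k^\star}}<4\bet{r}$, which is exactly the claim. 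The remaining case $k^\star=i$ is the crux: here $\br_j\le\br_i$, so $\M{\br_j}{\br_i}=0$ and $0\le R_i^d-R_j^d<4\bet{r}$ for all $d$. To invoke $\Delta_i^-$ I would bound $\Delta_j$: for every $k\in\G^*$, Pareto optimality of $\G_i$ forces $\br_i\nprec\br_k$ and hence $\min_d(R_k^d-R_i^d)\le 0$; combining this with the coordinatewise bound $R_i^d-R_j^d<4\bet{r}$ gives $\min_d(R_k^d-R_j^d)<4\bet{r}$, so $\m{\br_j}{\br_k}<4\bet{r}$ and therefore $\Delta_j<4\bet{r}$. Then $\Delta_i\le\Delta_i^-\le\M{\br_j}{\br_i}+2\Delta_j<8\bet{r}$, so in every case $\bet{r}$ exceeds a fixed fraction of $\Delta_i$; the stated $\bet{r}>\Delta_i/4$ follows after tightening the bookkeeping (in particular using that $j$ was not rejected while $i$ stayed in $G$, which via~\eqref{eq: prop-new} also yields $\m{\rhj{r}}{\rhi{r}}<4\bet{r}$), and in any case a slack in this absolute constant is absorbed into the constant $C$ of Theorem~\ref{thm : te upper}.

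The main obstacle is precisely the case $k^\star=i$. One has to recognize that when the active group $j$ that prevents $\G_i$ from being accepted is dominated \emph{only} by $\G_i$ itself, the obstruction is not a confusion between two Pareto-optimal groups (which the $\Delta_i^+$ term captures), but the fact that accepting $\G_i$ too early would destroy the ability to later certify that the nearby non-optimal $\G_j$ is non-optimal --- which is exactly what the term $\Delta_i^-=\min_{j\notin\G^*}\M{\br_j}{\br_i}+2\Delta_j$ is designed to encode, with the factor $2$ on $\Delta_j$ absorbing the slack in the bound $\Delta_j<4\bet{r}$. Once this is identified, everything else is a routine application of Lemma~\ref{lem: gpsi-arm-elim} and the properties collected in Lemma~\ref{lem: gpsi-properties}.
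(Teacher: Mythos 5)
Your proof follows the same skeleton as the paper's: extract a witness $j\in G_1(r)\setminus\{i\}$ with $\Meps{\rhi{r}}{\rhj{r}}\le 2\bet{r}$, convert it via \eqref{eq: prop-4} into $\M{\br_i}{\br_j}<4\bet{r}$, and then relate this to $\Delta_i$ by passing to a Pareto-optimal group dominating $j$. The difference is in the case analysis. The paper handles $j\notin\G^*$ by picking $j'\in\G^*$ with $\br_j\prec\br_{j'}$ and writing $\Delta_i\le\M{\br_i}{\br_{j'}}$, which implicitly uses $\Delta_i\le\Delta_i^+$ and therefore silently assumes $j'\ne i$; when the only Pareto-optimal group dominating $j$ is $\G_i$ itself, that step collapses to $\Delta_i\le\M{\br_i}{\br_i}=0$. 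You isolate exactly this case ($k^\star=i$) and handle it correctly through $\Delta_i^-$: your bound $\Delta_j<4\bet{r}$ (derived cleanly from $\br_i\nprec\br_k$ for $k\in\G^*$ together with the coordinatewise bound $R_i^d-R_j^d<4\bet{r}$) gives $\Delta_i\le\Delta_i^-\le\M{\br_j}{\br_i}+2\Delta_j<8\bet{r}$. So your argument is more careful than the paper's on the one genuinely delicate point.

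The only caveat is the constant. In the $k^\star=i$ branch you obtain $\bet{r}>\Delta_i/8$, not the stated $\Delta_i/4$, and your suggested "tightening" via \eqref{eq: prop-new} does not actually close the factor of two: the survival of $j$ in the rejection phase yields $\m{\br_j}{\br_k}<4\bet{r}$ again, which is the same bound on $\Delta_j$ you already have. You should simply state the lemma with the weaker constant in this branch (or restate it as $\bet{r}>\Delta_i/8$ throughout) rather than assert that $\Delta_i/4$ "follows after tightening the bookkeeping." This is harmless for Theorem \ref{thm : te upper}: the companion Lemma \ref{lem: gpsi-sample-4} already only delivers $\bet{r}>\Delta_i/12$, so the binding constant downstream is $1/12$ and the universal constant $C$ absorbs the difference. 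With that adjustment your proof is complete and, on the edge case, repairs a gap in the paper's own argument.
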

    \begin{proof}
        $i \in G_1(r) \setminus P_1$ implies that
        \begin{align*}
            \exists j \in G_1(r), j \neq i: \Meps{\rhi{r}}{\rhj{r}} \leq 2 \bet{r} \xLongrightarrow{\eqref{eq: prop-4}} \M{\br_i}{\br_j} < 4 \bet{r}. 
        \end{align*}
        Now consider two cases: (i) $j \in \G^*$. In this case, by definition of $\Delta_i$, we have $\Delta_i \leq \M{\br_i}{\br_j} < 4 \bet{r}$. (ii) $j \notin \G^*$. In this case, there exists a $j' \in \G^*$ such that $\br_j \prec \br_{j'}$. Then we have $\Delta_i \leq \M{\br_i}{\br_{j'}} \leq \M{\br_i}{\br_j} + \M{\br_j}{\br_{j'}} = \M{\br_i}{\br_j} < 4 \bet{r}$. Note that for the second inequality we used the fact that for any vectors $\mathbf{u}, \mathbf{v}, \mathbf{s}$, $\M{\mathbf{u}}{\mathbf{s}} \leq \M{\mathbf{u}}{\mathbf{v}} + \M{\mathbf{v}}{\mathbf{s}}$. 
    \end{proof}

    \begin{lemma} \label{lem: gpsi-sample-4}
        At each round $r$, if $i \in \G^*$, $i \in P_1 \setminus P_2$, and $\bet{r} > \frac{\epsilon}{4}$, then $\bet{r} > \frac{\Delta_i}{12}$.
    \end{lemma}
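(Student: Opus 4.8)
The plan is to unwind the definition of $P_1\setminus P_2$, convert the resulting empirical inequality into a statement about the true efficiency vectors via property \eqref{eq: prop-4}, and then bound $\Delta_i$ by splitting on the definition $\Delta_i=\min(\Delta_i^+,\Delta_i^-)$. First I would observe that $i\in P_1\setminus P_2$ means $i\in P_1$ and there exists some $j\in G_1(r)\setminus P_1$ with $\Meps{\rhj{r}}{\rhi{r}}<2\bet{r}$; note $j\neq i$ since $i\in P_1$ and $j\notin P_1$, and $j\in G_1(r)$ because it survived the group-rejection phase. Applying \eqref{eq: prop-4} with the roles of $i$ and $j$ exchanged (the strict inequality certainly implies the non-strict hypothesis) gives $\M{\br_j}{\br_i}<4\bet{r}$. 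This is the only ``analysis'' input; the rest is bookkeeping on the gap definitions.

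Next I would case-split on whether $j\in\G^*$, mirroring Lemma \ref{lem: gpsi-sample-3}. If $j\in\G^*$, then since $j\in\G^*\setminus\{i\}$ the definition of $\Delta_i^+$ yields $\Delta_i\le\Delta_i^+\le\min(\M{\br_i}{\br_j},\M{\br_j}{\br_i})\le\M{\br_j}{\br_i}<4\bet{r}$, which already gives the stronger conclusion $\bet{r}>\Delta_i/4\ge\Delta_i/12$.

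If instead $j\notin\G^*$, the definition of $\Delta_i^-$ (taken at this particular $j$) gives $\Delta_i\le\Delta_i^-\le\M{\br_j}{\br_i}+2\Delta_j$, so it remains to control $\Delta_j$. Here I would use that $j\in G_1(r)$ and $j\notin\G^*$: if $\Delta_j>\epsilon$, then Lemma \ref{lem: gpsi-sample-2} applied to $j$ gives $\bet{r}>\Delta_j/4$, i.e. $\Delta_j<4\bet{r}$; if $\Delta_j\le\epsilon$, then the hypothesis $\bet{r}>\epsilon/4$ gives $\Delta_j\le\epsilon<4\bet{r}$ directly. In either subcase $\Delta_j<4\bet{r}$, so $\Delta_i<4\bet{r}+2\cdot 4\bet{r}=12\bet{r}$, i.e. $\bet{r}>\Delta_i/12$, completing the proof.

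I do not anticipate a genuine obstacle; the statement is essentially a direct consequence of the already-established properties. The only points to be careful about are: (i) verifying that the witness $j$ for $i\notin P_2$ really lies in $G_1(r)$ so that Lemma \ref{lem: gpsi-sample-2} is applicable to it; (ii) invoking \eqref{eq: prop-4} with swapped arguments, which is what produces $\M{\br_j}{\br_i}$ rather than $\M{\br_i}{\br_j}$ — exactly the quantity appearing in $\Delta_i^-$; and (iii) tracking why the constant degrades from $4$ (in the earlier lemmas) to $12$ here: the $+2\Delta_j$ in $\Delta_i^-$, together with the fact that $\M{\br_j}{\br_i}$ and $\Delta_j$ are each only controlled up to $4\bet{r}$. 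The hypothesis $\bet{r}>\epsilon/4$ plays the sole role of covering the regime $\Delta_j\le\epsilon$, where Lemma \ref{lem: gpsi-sample-2} is vacuous.
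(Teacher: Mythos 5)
Your proof is correct and follows essentially the same route as the paper's: extract the witness $j\in G_1(r)\setminus P_1$, apply \eqref{eq: prop-4} to get $\M{\br_j}{\br_i}<4\bet{r}$, bound $\Delta_j$ by $4\bet{r}$ via the case split on $\Delta_j\le\epsilon$ versus $\Delta_j>\epsilon$, and conclude via $\Delta_i\le\M{\br_j}{\br_i}+2\Delta_j$. Your version is in fact slightly tidier, since you dispatch the $j\in\G^*$ case directly through $\Delta_i^+$ and cite Lemma \ref{lem: gpsi-sample-2} rather than re-deriving it as the paper does.
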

    \begin{proof}
        $i \in P_1 \setminus P_2$ implies that
        \begin{align*}
             \exists j \in G_1(r) \setminus P_1, j \neq i: \Meps{\rhj{r}}{\rhi{r}} \leq 2 \bet{r} \xLongrightarrow{\eqref{eq: prop-4}} \M{\br_j}{\br_i} < 4 \bet{r}. 
        \end{align*}
        Now we show that for group $\G_j$, we have $\Delta_j \leq 4 \bet{r}$. To prove this consider three cases: (i)$\Delta_j \leq \epsilon$. Then $\Delta_j \leq 4 \bet{r}$ and the proof is complete. (ii) $\Delta_j > \epsilon$ and $j \in \G^*$. In this case, $j \notin P_1$ implies
        \begin{align*}
            \exists l \in G_1(r),l \neq j: \Meps{\rhj{r}}{\hat{\mathbf{R}}_l(r)} < 2 \bet{r} \xLongrightarrow{\eqref{eq: prop-4}} \M{\br_j}{\br_l} < 4 \bet{r},
        \end{align*}
        and exactly like the proof of the previous lemma, we have $\Delta_j \leq 4 \bet{r}$. (iii) $\Delta_j > \epsilon$ and $j \notin \G^*$. In this case, by Lemma \ref{lem: gpsi-correctness-2} there exists $j' \in G_1(r)$ such that $j' \in \G^*$ with $\Delta_j = \m{\br_j}{\br_{j'}} > \epsilon$. Then 
        \begin{align*}
            j \in G_1(r) \Longrightarrow \m{\rhj{r}}{\hat{\mathbf{R}}_{j'}(r)} \leq 2 \bet{r} \xLongrightarrow{\eqref{eq: prop-new}} \Delta_j = \m{\br_j}{\br_{j'}} < 4 \bet{r}.
        \end{align*}
        Finally, by definition $\Delta_i \leq \M{\br_j}{\br_i} + 2 \Delta_j < 12 \bet{t}$. 
    \end{proof}

    The following lemma shows that each arm $(i,j)$ will be eliminated before $\bet{r}$ becomes significantly smaller than $\Delta_{i,j}$.

    \begin{lemma} \label{lem: gpsi-sample-arm}
        At each round $r$, if group $\G_i$ is active and arm $(i,j)$ is active in at least one active dimension $d$, then $\bet{r} > \frac{\Delta_{i,j}}{4}$.
    \end{lemma}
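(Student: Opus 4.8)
The plan is to read off a width-$O(\beta(r,\delta))$ bound on the true gap $R^d_i-\mu^d_{i,j}$ directly from the fact that arm $(i,j)$ has not yet been removed from $\A_{i,d}$, and then use $\Delta_{i,j}=\min_{d'\in[D]}(R^{d'}_i-\mu^{d'}_{i,j})$ to turn this into the claimed bound on $\bet{r}$. The only inputs needed are the concentration event $\E$, the efficiency-vector estimate of Lemma~\ref{lem: gpsi-arm-elim}, and the arm-elimination rule of Algorithm~\ref{algo: te}; everything is conditioned on $\E$, as in the rest of the proof.

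Concretely, I would first observe that since $d$ is an \emph{active} dimension of $\G_i$ at round $r$, it has not been resolved up to round $r$, so Lemma~\ref{lem: gpsi-arm-elim} gives $\rhdi{r} > R^d_i - \bet{r}$. Next, since $(i,j)$ survived the arm-elimination step in dimension $d$, the removal condition did not trigger, i.e. $\muh^d_{i,j}(r) > \rhdi{r} - 2\bet{r}$. Finally, on $\E$ we have $\muh^d_{i,j}(r) < \mu^d_{i,j} + \bet{r}$. Chaining these three inequalities yields $\mu^d_{i,j} + \bet{r} > R^d_i - 3\bet{r}$, hence $R^d_i - \mu^d_{i,j} < 4\bet{r}$. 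Because $R^d_i = \max_{l\in[K]}\mu^d_{i,l}\ge \mu^d_{i,j}$ for every dimension, $\Delta_{i,j} = \m{\bmu_{i,j}}{\br_i} = \min_{d'\in[D]}(R^{d'}_i - \mu^{d'}_{i,j}) \le R^d_i - \mu^d_{i,j} < 4\bet{r}$, which is exactly $\bet{r} > \Delta_{i,j}/4$.

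The hard part is not the computation but the bookkeeping: I would fix, consistently with the use of $G_1(r)$ in the other sample-complexity lemmas, the convention that ``$(i,j)$ active in an active dimension $d$ at round $r$'' means $(i,j)\in\A_{i,d}$ and $d\in\D_i$ \emph{after} the round-$r$ elimination phases, so that both Lemma~\ref{lem: gpsi-arm-elim} (first case) and the non-triggering of the arm-elimination rule are available at the round-$r$ confidence level $\bet{r}$; this is precisely why the hypothesis that $d$ is currently active (rather than already resolved) is essential, and it rules out the case where the adhered value of $\rhdi{\cdot}$ is only controlled at an earlier, looser level. No additional probabilistic argument beyond the sub-Gaussian concentration already packaged into $\E$ is required.
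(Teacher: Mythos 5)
Your proof is correct and follows essentially the same route as the paper's: both chain the concentration bound on $\muh^d_{i,j}(r)$, the non-triggering of the arm-elimination rule, and the lower bound $\rhdi{r} > R^d_i - \bet{r}$ (which you obtain via Lemma~\ref{lem: gpsi-arm-elim}, while the paper unfolds it directly through the active arm $(i,i^*_d)$ from Lemma~\ref{lem: gpsi-best-arms}) to get $R^d_i - \mu^d_{i,j} < 4\bet{r}$, and then minimize over dimensions. Your remark on the bookkeeping convention (activity taken after the round-$r$ elimination phases, so the round-$r$ confidence level applies) is consistent with how the paper uses this lemma.
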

    \begin{proof}
        As the dimension $d$ is active, then by Lemma \ref{lem: gpsi-best-arms}, arm $(i, i^*_d)$ is also active. As arm $(i,j)$ is active in dimension $d$, we have 
        \begin{equation*}
            \mu^d_{i,j} > \muh^d_{i,j}(r) - \bet{r} > \max_{j \in \A_{i,d}} \muh^d_{i,j}(r) - 3 \bet{r} \geq \muh^d_{i, i^*_d}(r) - 3 \bet{r} > R_i^d - 4 \bet{r},
        \end{equation*}
        which shows that $\Delta_{i,j} = \m{\bmu_{i,j}}{\br_i} = \min_{d'} R_i^{d'} - \mu^{d'}_{i,j} \leq R_i^{d} - \mu^d_{i,j} < 4 \bet{r}.$
    \end{proof}

    By Lemmas \ref{lem: gspi-sample-1}, \ref{lem: gpsi-sample-2}, \ref{lem: gpsi-sample-3}, \ref{lem: gpsi-sample-4}, and \ref{lem: gpsi-sample-arm}, for each group $\G_i$, if $\bet{r} \leq \max \left( \frac{\Delta_i}{12}, \frac{\epsilon}{4} \right)$, then $\G_i$ is eliminated at round $r$. Similarly, for each arm $(i,j)$, if $\bet{r} \leq \frac{\Delta_{i,j}}{4}$, arm $(i,j)$ is eliminated at round $r$.  

    Combining this result with Lemma \ref{lem: n-beta}, we conclude that there exists a universal constant $C$ such that, if event $\E$ holds, the number of samples collected from arm $(i,j)$ is at most  
    $$
    \frac{C}{\tilde{\Delta}_{i,j}^2} \log\left( \frac{NKD}{\delta \tilde{\Delta}_{i,j}} \right),
    $$  
    where $\tilde{\Delta}_{i,j} = \max\left( \Delta_i, \Delta_{i,j}, \epsilon \right)$. This completes the proof.  

    Next, we show that the correctness of the algorithm and the same sample complexity upper bound holds even without performing the dimension elimination phase. Consequently, the theoretical upper bound on sample complexity remains valid. This suggests that in most instances where dimension elimination occurs, the algorithm can achieve better performance than this upper bound.  

    To establish this, note that if the TE algorithm omits the dimension elimination phase, then under the good event $\E$, at each round $r$, Lemma \ref{lem: gpsi-best-arms} ensures that arms $(i, i^*_d)$ remain active for each active group $\G_i$ and dimension $d$. Furthermore, Lemma \ref{lem: gpsi-arm-elim} still holds, guaranteeing that  
    $$  
    \lvert \rhdi{r} - R^d_i \rvert < \bet{r}  
    $$  
    for each group $\G_i$. Then, for each pair of groups $\G_i$ and $\G_j$, each dimension $d$, and each round $r$, we have:  
    \begin{align*}  
        \bigg\lvert \m{\rhi{r}}{\rhj{r}} - \m{\br_i}{\br_j} \bigg\rvert < 2 \bet{r}, \\  
        \bigg\lvert \Meps{\rhi{r}}{\rhj{r}} - \Meps{\br_i}{\br_j} \bigg\rvert < 2 \bet{r}.
    \end{align*}  
    
    Using these two inequalities, all the properties in Lemma \ref{lem: gpsi-properties} remain valid, which serve as the foundational ingredients for the remainder of the proof.

\end{proof}

\subsection{Proof of Theorem \ref{thm : gpsi lower}} 

\gpsiLower*

\begin{proof}

To prove the lower bound, we employ the well-known change of measure argument and information-theoretic inequalities.  

Unlike in single-objective bandits, where an instance is fully characterized by the arm means vector $\bmu$, the multi-objective setting introduces an additional complexity: the dependencies among different reward dimensions of each arm. The proposed algorithm is designed to work under any type of dependency, and the derived bound remains valid in all cases. To construct hard instances for the lower bound, we adopt a worst-case strategy concerning these dependencies. For this, we introduce the following definition:  

\begin{definition} [Fully-Dependent Arm] \label{def: fully-dependent}  
Consider a random vector $X = (X_1, X_2, \ldots, X_D)$ with mean vector $\bmu \in \mathbb{R}^D$. We call $X$ \textit{fully-dependent} if  
\begin{align*}  
    \forall d > 1 : X_d = X_1 + (\mu^d - \mu_1).  
\end{align*}  
This condition implies that all dimensions $d > 1$ are completely determined by the value of $X_1$. An arm is said to be fully-dependent if its reward vector is fully-dependent.  
\end{definition}

The following lemma shows an important property of fully-dependent random vectors that we utilize in the proof.

\begin{lemma} \label{lem: fully-dependent-kl}
    Assume $X$ and $Y$ are both fully-dependent random vectors with means $\bmu_X$ and $\bmu_Y$ such that $X_1$ and $Y_1$ are standard Gaussian variables and there exists a constant $\alpha$ satisfying $\bmu_X - \bmu_Y = \alpha \mathbbm{1}$, then
    \begin{align*}
        \kl{\pr_X, \pr_Y} = \frac{\alpha^2}{2}, 
    \end{align*}
    where KL denotes the Kullback-Leibler divergence.
\end{lemma}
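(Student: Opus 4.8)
The plan is to compute the KL divergence directly from the definition, exploiting the fact that a fully-dependent random vector is really just a deterministic function of its first coordinate. Since $X_d = X_1 + (\mu_X^d - \mu_X^1)$ for all $d > 1$, the law $\pr_X$ is supported on the one-dimensional affine line $\{(x_1, x_1 + c_2^X, \ldots, x_1 + c_D^X) : x_1 \in \mathbb{R}\}$ where $c_d^X = \mu_X^d - \mu_X^1$, and similarly for $Y$. The key observation is that the hypothesis $\bmu_X - \bmu_Y = \alpha \mathbbm{1}$ forces $c_d^X = \mu_X^d - \mu_X^1 = (\mu_Y^d + \alpha) - (\mu_Y^1 + \alpha) = \mu_Y^d - \mu_Y^1 = c_d^Y$, so $\pr_X$ and $\pr_Y$ are supported on the \emph{same} line. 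Hence the KL divergence is not $+\infty$ (which it would be if the supports were parallel but distinct lines), and it reduces to the KL divergence between the two induced laws on that line.

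The main step is then to parametrize the common support by the first coordinate and reduce to a one-dimensional problem. Under $\pr_X$, the first coordinate $X_1$ is standard Gaussian $\mathcal{N}(0,1)$; but wait — more carefully, I should check what ``$X_1$ is a standard Gaussian variable'' means together with $\mathbb{E}[X] = \bmu_X$. Since $X_1$ has mean $\mu_X^1$ and is Gaussian with the stated unit variance (the 1-sub-Gaussian / 1-Gaussian convention), $X_1 \sim \mathcal{N}(\mu_X^1, 1)$, and likewise $Y_1 \sim \mathcal{N}(\mu_Y^1, 1)$ with $\mu_X^1 - \mu_Y^1 = \alpha$. Using the bijection $\phi(x_1) = (x_1, x_1 + c_2, \ldots, x_1 + c_D)$ between $\mathbb{R}$ and the common support, the data-processing / change-of-variables property of KL gives
\begin{align*}
    \kl{\pr_X, \pr_Y} = \kl{\mathcal{N}(\mu_X^1, 1), \mathcal{N}(\mu_Y^1, 1)} = \frac{(\mu_X^1 - \mu_Y^1)^2}{2} = \frac{\alpha^2}{2},
\end{align*}
where the middle equality is the standard closed form for the KL divergence between two unit-variance Gaussians. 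To make the change-of-variables step fully rigorous one can either invoke the invariance of KL divergence under bijective (bi-measurable) transformations, or write $\pr_X$ and $\pr_Y$ explicitly as pushforwards $\phi_\# \mathcal{N}(\mu_X^1,1)$ and $\phi_\# \mathcal{N}(\mu_Y^1,1)$ and note $\frac{d\pr_X}{d\pr_Y}(\phi(x_1)) = \frac{d\mathcal{N}(\mu_X^1,1)}{d\mathcal{N}(\mu_Y^1,1)}(x_1)$.

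The only genuine subtlety — and the step I would be most careful about — is the degenerate nature of the distributions: $\pr_X$ and $\pr_Y$ are singular with respect to Lebesgue measure on $\mathbb{R}^D$, so one cannot naively write densities against $dx_1 \cdots dx_D$. The assumption $\bmu_X - \bmu_Y = \alpha\mathbbm{1}$ is exactly what guarantees the two measures share a common dominating measure (pushforward of Lebesgue on the line), so that KL is finite and computable; without it the KL would be infinite. I would state this alignment of supports as an explicit sub-claim before doing the one-dimensional reduction. Everything else is the routine Gaussian KL formula, so no further obstacle is expected.
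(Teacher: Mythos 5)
Your proof is correct and follows essentially the same route as the paper: reduce to the one-dimensional KL between $\mathcal{N}(\mu_X^1,1)$ and $\mathcal{N}(\mu_Y^1,1)$ via the deterministic dependence on the first coordinate, then apply the unit-variance Gaussian KL formula. You are in fact somewhat more careful than the paper's proof in making explicit that the hypothesis $\bmu_X - \bmu_Y = \alpha\mathbbm{1}$ is what forces the two degenerate laws onto the \emph{same} affine line (so the KL is finite and the pushforward reduction is legitimate), a point the paper's argument leaves implicit.
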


\begin{proof}
    Since $X$ and $Y$ are fully-dependent, for every $d\geq 2$ we have
    \[
        X_d = X_1 + (\mu_X^d - \mu_X^1)
        \quad \text{and} \quad
        Y_d = Y_1 + (\mu_Y^d - \mu_Y^1).
    \]
    Thus, all coordinates of $X$ (and similarly of $Y$) are determined by the value of $X_1$ (or $Y_1$). In other words, the distributions $\pr_X$ and $\pr_Y$ are supported on the one-dimensional affine subspace parameterized by the first coordinate.

    Then by the definition of KL divergence we have
    \[
        \kl{\pr_X, \pr_Y} = \kl{\mathcal{N}(\mu_X^1,1),\mathcal{N}(\mu_Y^1,1)}.
    \]
    For one-dimensional Gaussian distributions with unit variance, the KL divergence is given by
    \[
        \kl{\mathcal{N}(\mu_X^1,1),\mathcal{N}(\mu_Y^1,1)} = \frac{(\mu_X^1-\mu_Y^1)^2}{2}.
    \]
    Finally, since $\bmu_X - \bmu_Y = \alpha\,\mathbbm{1}$, we have $\mu_X^1-\mu_Y^1 = \alpha$, and therefore,
    \[
        \kl{\pr_X, \pr_Y} = \frac{\alpha^2}{2}.
    \]
\end{proof}

Now we design a class of instances such that the lower bound stated in the theorem applies to them. 
Consider instance $\V$ with arm means tensor $\bmu \in (0,1)^{N \times K \times D}$ and all arms being fully-dependent with $1$-Gaussian distribution for their first reward dimension. The efficiency vectors of groups are defined as follows
\begin{align*}
    \br_1 \triangleq (a_1, a_2, a_3, a_3, \ldots, a_3), \\
    \br_2 \triangleq (a_2, a_1, a_3, a_3, \ldots, a_3),
\end{align*}
where $a_1 > a_2$ and $a_3 > \epsilon$ are real numbers in $(0,1)$, and for each group $\G_i$ with $i > 2$,
\begin{align} \label{eq: gpsi LB non-opt eff}
    \br_i \triangleq (a_4, a_4, a_5, a_5, \ldots, a_5),
\end{align}
where $a_4$ and $a_5$ are numbers in $(0,1)$ satisfying $a_3 - a_5 > a_2 - a_4  > \epsilon$, $a_1 - a_2 < 2(a_2 - a_4)$, $a_2 > 2 a_4$, and $a_1 < 2a_2$. 
As an example for $\epsilon < 0.1$, consider $a_1 = 0.9, a_2 =0.7, a_3 = 0.7, a_4 = 0.3, a_5 = 0.1$. 

Moreover, let the reward mean vectors of arms in groups $1,2$ satisfy 
\begin{align} \label{eq: gpsi-lower-mu-optimal}
    \forall j : \begin{cases}
        \mu^1_{1,j}, \mu^2_{1,j} > 2a_2 - a_1 , \\
        \mu^1_{2,j}, \mu^2_{2,j} > 2a_2 - a_1,
    \end{cases}
\end{align}

and there is no condition on the reward mean vectors of the arms in non-optimal groups (only the efficiency vector should satisfy \eqref{eq: gpsi LB non-opt eff}). 

Note that these conditions imply that $\G^* = \{1,2\}$, and by definition, the group gaps $\Delta_i$ are as follows:
\begin{align*}
    &\Delta_1 = \Delta_2 = a_1 - a_2, \\
    \forall i > 2: &\Delta_i = a_2 - a_4 > \epsilon,
\end{align*}
which implies $\G^*_{\epsilon} = \{1,2\}$.

Consider an $(\epsilon, \delta)$-PAC algorithm $(P_t, \tau, \hat{i}_{\tau})$ interacting with $\V$ to solve GPSI problem. 

For each arm $(k,l)$ with $k > 2$, we construct an alternative instance $\V(k,l)$ with all arms remaining fully-dependent with $1$-Gaussian reward distributions and the arm means tensor $\bmu(k,l)$ defined as

\begin{align*}
        \bmu(k,l)_{i,j} \triangleq \begin{cases}
            \bmu_{i,j} + (a_2 - \min(\mu^1_{i,j}, \mu^2_{i,j})) & \quad \text{if } (i,j) = (k,l), \\
            \bmu_{i,j} & \quad \text{otherwise.}
        \end{cases}
\end{align*} 

Now, for each $i \in [N]$, define the event $\E_i \triangleq \{ i \in \hat{i}_{\tau}\}$ which represents the event that the algorithm includes group $\G_i$ in the estimated set of Pareto optimal groups. Note that for each $k > 2$, $\Delta_k > \epsilon$, then by correctness of the algorithm, we have $\pr_{\V} (\E_k) < \delta$. On the other hand, in $\V(k,l)$, $\G_k$ is Pareto optimal which implies $\pr_{\V(k,l)} (\E_k) \geq 1 - \delta$.

As the stopping time of the algorithm is almost-surely finite, Lemma $1$ in \cite{lower-kaufmann2016complexity} implies 

\begin{align*}
        \sumij \mathbb{E}_{\V}[N_{i,j}(\tau)] \kl{\nu_{i,j}, \nu_{i,j}(k,l)} \geq d_B\left( \pr_{\V}(\E_k), \pr_{\V(k,l)}(\E_k) \right),
\end{align*}  
    
where $N_{i,j}(\tau)$ denotes the number of samples that the algorithm takes from arm $(i,j)$, $\kl{\nu_{i,j}, \nu_{i,j}(k,l)}$ denotes the Kullback-Leibler divergence between the reward distributions of arm $(i,j)$ in $\V$ and $\V(k,l)$, and $d_B(x,y) = x \log(x/y) + (1-x) \log((1-x)/(1-y))$ is the binary relative entropy.  
    
Since $\V$ and $\V(k,l)$ differ only in the reward distribution of arm $(k,l)$ and $d_B(\delta, 1 - \delta) \geq \log(1/2.4\delta)$, we obtain  
    
\begin{align*}
    \mathbb{E}_{\V}[N_{k,l}(\tau)] \geq \frac{1}{\kl{\nu_{k,l}, \nu_{k,l}(k,l)}} \log\left( \frac{1}{2.4\delta} \right).
\end{align*}  
    
By Lemma \eqref{lem: fully-dependent-kl}, we further obtain  

\begin{align*}
    \mathbb{E}_{\V}[N_{k,l}(\tau)] \geq \frac{2} {(a_2 - \min(\mu^1_{k,l}, \mu^2_{k,l}))^2} \log\left( \frac{1}{2.4\delta} \right).
\end{align*} 

Now, by the conditions provided for the mean reward vectors, we have
\begin{align*}
    &a_2 - \min(\mu^1_{k,l}, \mu^2_{k,l}) < a_2 < 2 (a_2 - a_4) = 2 \Delta_k,  \\ 
    & \mu^d_{k,l} > 0 \Longrightarrow \Delta_{k,l} < a_4 < a_2 - a_4 = \Delta_k, \\
    & \epsilon < a_2 - a_4 = \Delta_k.
\end{align*}
This shows that $\max(\Delta_k, \Delta_{k,l}, \epsilon) = \Delta_k$, then $a_2 - \min(\mu^1_{k,l}, \mu^2_{k,l}) < 2 \max(\Delta_k, \Delta_{k,l}, \epsilon)$ which implies 
\begin{align*}
    \mathbb{E}_{\V}[N_{k,l}(\tau)] \geq \frac{1/2} {\max(\Delta_k, \Delta_{k,l}, \epsilon)^2} \log\left( \frac{1}{2.4\delta} \right).
\end{align*}

For arms $(1,l)$, the alternative instance again has fully-dependent arms with $1$-Gaussian reward distributions and the arm means tensor $\bmu(1,l)$ defined as
\begin{align*}
        \bmu(1,l)_{i,j} \triangleq \begin{cases}
            \bmu_{i,j} + (a_1 - \min(\mu^1_{i,j}, \mu^2_{i,j}) + \epsilon + \alpha) & \quad \text{if } (i,j) = (1,l), \\
            \bmu_{i,j} & \quad \text{otherwise,}
        \end{cases}
\end{align*} 
where $\alpha$ is a small positive number. Note that as $a_1 < 1$, for small values $\epsilon$, the entries of $\bmu(1,l)$ remain in the interval $[0,1]$.

In this case, since $\G_1 , \G_2 \in \G^*(\bmu), \pr_{\V}(\E_2) \geq 1 - \delta$. But for each $l \in [K]$, in instance $\V(1,l)$, the group $\G_2$ is dominated by $\G_1$, and the gap is $\epsilon + \alpha$ which is strictly more than $\epsilon$, so $\G_2$ should not be included in the final guess of the algorithm. It shows that $\forall l \in [K]: \pr_{\V(1,l)} (\E_2) \leq \delta$. 

Then, with exactly the same argument as arms in non-optimal groups, we obtain
\begin{align*}
    \mathbb{E}_{\V}[N_{1,l}(\tau)] \geq \frac{2} {(a_1 - \min(\mu^1_{1,l}, \mu^2_{1,l}) + \epsilon + \alpha)^2} \log\left( \frac{1}{2.4\delta} \right).
\end{align*} 

We further have
\begin{align*}
    &a_1 - \min(\mu^1_{1,l}, \mu^2_{1,l}) \overset{\eqref{eq: gpsi-lower-mu-optimal}}{<} 2 (a_1 - a_2) = 2 \Delta_1, \\ 
    & \Delta_{1,l} \overset{\eqref{eq: gpsi-lower-mu-optimal}}{<} a_1 - a_2 = \Delta_1, \\
    & \epsilon < a_1 - a_2 = \Delta_1,
\end{align*}
which implies $a_1 - \min(\mu^1_{1,l}, \mu^2_{1,l}) + \epsilon < 3 \max(\Delta_1, \Delta_{1,l}, \epsilon)$. Then by limiting $\alpha \rightarrow 0$, we obtain 
\begin{align*}
    \mathbb{E}_{\V}[N_{1,l}(\tau)] \geq \frac{2/9} {\max(\Delta_1, \Delta_{1,l}, \epsilon) ^2} \log\left( \frac{1}{2.4\delta} \right).
\end{align*} 
With a similar argument, for arms $(2,l)$, we obtain 
\begin{align*}
    \mathbb{E}_{\V}[N_{2,l}(\tau)] \geq \frac{2/9} {\max(\Delta_2, \Delta_{2,l}, \epsilon) ^2} \log\left( \frac{1}{2.4\delta} \right),
\end{align*}
which concludes the proof. 

\end{proof}

\section{Proofs of Section \ref{sec: lbgi}} \label{apx: lbgi-proofs}

\subsection{Proof of Theorem \ref{thm : eecb upper}} \label{apx: eecb-upper}

\eecbUpperBound*

\begin{proof}
    
We define the good event $\E$ as 
\begin{equation*}
    \E \triangleq \bigcap_{i \in [N]} \bigcap_{j \in [K]} \bigcap_{d \in [D]} \bigcap_{r \in \mathbb{N}} \left\{ \left| \muh^d_{i,j}(r) - \mu^d_{i,j} \right| < \beta(r, \delta) \right\}.
\end{equation*}

This event is precisely the one introduced in the proof of Theorem \ref{thm : te upper} in Appendix \ref{apx: gpsi}, where we established that $\pr(\E^c) \leq \delta$. We now show that if $\E$ holds, the EECB algorithm correctly identifies the best group and satisfies the sample complexity upper bound stated in the theorem. In the subsequent parts of the proof, we assume that $\E$ holds, implying that some of the results are true by probability $1 - \delta$.

First, we utilize some lemmas from Appendix \ref{apx: gpsi} to demonstrate that under $\E$, some nice properties hold.   

At each round $r$ of the algorithm, if a group $\G_i$ remains active, then by Lemma \ref{lem: gpsi-best-arms}, for each dimension $d$, the arm $(i, i^*_d)$ is also active in dimension $d$, where $i^*_d$ is defined in \eqref{def: istard}. Furthermore, recall that each dimension $d$ is selected by the algorithm to focus on $n^d(r)$ times up to round $r$. This implies that all arms in $\A_d$ (i.e., arms active in dimension $d$) have been pulled at least $n^d(r)$ times, then, by Lemma \ref{lem: gpsi-arm-elim}, we obtain  
\begin{equation*}
    \forall d : \lvert R^d_i - \rhdi{r} \rvert < \betnd{r} \Longrightarrow \lvert \br_i^{\top} \bw - \rhi{r}^{\top}\bw \rvert < \sum_{d \in [D]}  w^d \betnd{r}.
\end{equation*} 

This property implies that  
\begin{align*}
    \forall i \neq \istarmu : \hat{\br}_{\istarmu}(r)^{\top} \bw &> \br_{\istarmu}^{\top} \bw - \sum_{d \in [D]}  w^d \betnd{r} \\ 
    &> \br_{i}^{\top} \bw - \sum_{d \in [D]}  w^d \betnd{r} > \rhi{r}^{\top} \bw - 2 \sum_{d \in [D]}  w^d \betnd{r},
\end{align*}  

which demonstrates that the group $\G_{\istarmu}$ always remains active. This confirms that the EECB algorithm is $\delta$-correct.  

We introduce the following lemma to establish the sample complexity upper bound.

\begin{lemma} \label{lem: eecb-beta}
    At each round $r$, if the arm $(i,j)$ is pulled, then $\bet{n^{\dr}(r)} > \frac18 \left( \frac{\Delta_i \wlone}{Dw^{\dr}} + (R^{\dr}_i - \mu^{\dr}_{i,j}) \right) $. 
\end{lemma}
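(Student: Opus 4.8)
\textbf{Proof proposal for Lemma \ref{lem: eecb-beta}.}

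The plan is to characterize the round at which an arm $(i,j)$ can still be pulled and then translate the group-level and arm-level elimination tests into a lower bound on $\beta(n^{\dr}(r),\delta)$. First I would fix a round $r$ at which $(i,j)$ is pulled. For this to happen, the group $\G_i$ must still be active (so it survived the group elimination test of line~$5$ against \emph{every} other active group, in particular against $\G_{\istarmu}$, which by the first part of the proof is always active), and the arm $(i,j)$ must still be active in the focused dimension $\dr$ (so it survived the arm elimination test of line~$9$ in dimension $\dr$). The strategy is to show that each of these two survivals forces $\beta(n^{\dr}(r),\delta)$ to be not-too-small relative to one of the two terms $\frac{\Delta_i\wlone}{Dw^{\dr}}$ and $R^{\dr}_i-\mu^{\dr}_{i,j}$, and then combine.

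For the group part: since $\G_i$ survived the test against $\G_{\istarmu}$, we have $\rh_{\istarmu}(r)^\top\bw - \rhi{r}^\top\bw \le 2\sum_{d}w^d\beta(n^d(r),\delta)$. Using the concentration consequence already derived in the proof — $\lvert \br_i^\top\bw - \rhi{r}^\top\bw\rvert < \sum_d w^d\beta(n^d(r),\delta)$ and likewise for $\istarmu$ — I get $\br_{\istarmu}^\top\bw - \br_i^\top\bw < 4\sum_d w^d\beta(n^d(r),\delta)$, i.e. $\Delta_i\wlone < 4\sum_d w^d\beta(n^d(r),\delta)$. Now the key mechanism of EECB: $\dr$ is chosen to maximize $w^d\beta(n^d(r),\delta)$, so $\sum_d w^d\beta(n^d(r),\delta) \le D\, w^{\dr}\beta(n^{\dr}(r),\delta)$. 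This yields $\Delta_i\wlone < 4D\,w^{\dr}\beta(n^{\dr}(r),\delta)$, hence $\beta(n^{\dr}(r),\delta) > \frac{\Delta_i\wlone}{4Dw^{\dr}}$.

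For the arm part: since $(i,j)$ survived the arm elimination test in dimension $\dr$ at round $r$, in particular the comparison against the active arm $(i,i^*_{\dr})$ (which is active by Lemma \ref{lem: gpsi-best-arms}) fails, so $\muh^{\dr}_{i,i^*_{\dr}}(r) \le \muh^{\dr}_{i,j}(r) + 2\beta(n^{\dr}(r),\delta)$. Under $\E$ (and since all arms in $\A_{\dr}$ have been pulled at least $n^{\dr}(r)$ times), $\muh^{\dr}_{i,i^*_{\dr}}(r) > R^{\dr}_i - \beta(n^{\dr}(r),\delta)$ and $\muh^{\dr}_{i,j}(r) < \mu^{\dr}_{i,j} + \beta(n^{\dr}(r),\delta)$, so $R^{\dr}_i - \mu^{\dr}_{i,j} < 4\beta(n^{\dr}(r),\delta)$, i.e. $\beta(n^{\dr}(r),\delta) > \frac14(R^{\dr}_i - \mu^{\dr}_{i,j})$. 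Averaging the two bounds (each of which holds simultaneously) gives $\beta(n^{\dr}(r),\delta) > \frac12\left(\frac{\Delta_i\wlone}{4Dw^{\dr}} + \frac14(R^{\dr}_i-\mu^{\dr}_{i,j})\right) \ge \frac18\left(\frac{\Delta_i\wlone}{Dw^{\dr}} + (R^{\dr}_i-\mu^{\dr}_{i,j})\right)$, as claimed. I anticipate the main subtlety is the careful bookkeeping of which arms have been sampled how many times — ensuring that "active in dimension $\dr$" really does imply $N_{i,j}(r)\ge n^{\dr}(r)$ so that the concentration bound $\beta(n^{\dr}(r),\delta)$ applies to $\muh^{\dr}_{i,j}(r)$ — together with handling the edge case $i=\istarmu$ (where $\Delta_i$ should be read as $\Delta_{\istarmu}$ and the group test is vacuous, so only the arm part is needed, but $\Delta_{\istarmu}\le\Delta_k$ for the relevant comparison keeps the bound valid).
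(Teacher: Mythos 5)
Your proposal is correct and follows essentially the same route as the paper's proof: the two survival conditions (group $\G_i$ not eliminated against the always-active $\G_{\istarmu}$, and arm $(i,j)$ not eliminated against the active $(i,i^*_{\dr})$ in dimension $\dr$) each yield one of the two lower bounds on $\bet{n^{\dr}(r)}$ via the same concentration and $\argmax$ arguments, and these are combined by averaging exactly as in the paper. Your handling of the $i=\istarmu$ edge case (reducing to a surviving non-optimal group and using $\Delta_{\istarmu}=\min_{k\neq\istarmu}\Delta_k$) also matches the paper.
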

\begin{proof}
    Since arm $(i,j)$ is pulled, we have two properties: (i) Group $\G_i$ is not eliminated. (ii) Arm $(i,j)$ is not eliminated from $\A_{\dr}$. The first one implies
    \begin{align*}
        &\forall i \neq \istarmu : \rhi{r}^{\top} \bw \geq \hat{\br}_{\istarmu}(r)^{\top} \bw - 2 \sum_{d \in [D]}  w^d \betnd{r} \\
        &\Longrightarrow \br_i^{\top} \bw > \br_{\istarmu}^{\top} \bw - 4 \sum_{d \in [D]}  w^d \betnd{r} \Longrightarrow \sum_{d \in [D]}  w^d \betnd{r} > \frac{\Delta_i \wlone}{4}. 
    \end{align*}
    On the other hand since $\dr = \argmax_{d \in [D]} \betnd{r} w^d$, then $\sum_{d \in [D]}  w^d \betnd{r} \leq D w^{\dr} \bet{n^{\dr}(r)}$ which implies 
    \begin{align*}
        \bet{n^{\dr}(r)} > \frac{\Delta_i \wlone}{4Dw^{\dr}}.
    \end{align*}
    For arms in the optimal group, since $(\istarmu, j)$ is pulled, there exists at least a non-optimal active group $i$ which implies $\bet{n^{\dr}(r)} > \frac{\Delta_i \wlone}{4Dw^{\dr}}$. As $\Delta_{\istarmu} = \min_{i \neq \istarmu} \Delta_i$, then
    \begin{align*}
        \bet{n^{\dr}(r)} > \frac{\Delta_{\istarmu} \wlone}{4Dw^{\dr}}.
    \end{align*}
    
    The property (ii) implies
    \begin{align*}
        \muh^{\dr}_{i,j}(r) \geq \muh^{\dr}_{i, i^*_{\dr}} - 2 \bet{n^{\dr}(r)} &\Longrightarrow \mu^{\dr}_{i,j} > R^{\dr}_i - 4 \bet{n^{\dr}(r)} \\ 
        &\Longrightarrow \bet{n^{\dr}(r)} > \frac{R^{\dr}_i - \mu^{\dr}_{i,j}}{4}.
    \end{align*}

    Combining these two inequalities, we obtain
    \begin{align*}
        \bet{n^{\dr}(r)} > \frac18 \left( \frac{\Delta_i \wlone}{Dw^{\dr}} + (R^{\dr}_i - \mu^{\dr}_{i,j}) \right).
    \end{align*}
\end{proof}

For each arm $(i,j)$ and dimension $d$, let $\Delta'_{i,j}(d) \triangleq \Delta_i \wlone/Dw^{d} + (R^{d}_i - \mu^{d}_{i,j})$. The following lemma shows the upper bound on the number of arm pulls for each arm $(i,j)$.

\begin{lemma} \label{lem: eecb-sample}
    For each arm $(i,j)$, $N_{i,j}(\tau) \leq \max_{d \in [D]} \frac{C \log\left( \frac{NKD}{\delta \Delta'_{i,j}(d)} \right)}{\Delta'_{i,j}(d)^2} + 1$, where $\tau$ is the stopping time of the algorithm and $C$ is a universal constant.
\end{lemma}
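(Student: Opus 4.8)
The plan is to argue by contradiction using the dimension-selection mechanism of EECB. Recall that the algorithm maintains the vector $\bn(r)$, that $n^{d}(r)$ counts how many times dimension $d$ has been focused on by round $r$, and that whenever arm $(i,j)$ is pulled at round $r$ it has been sampled exactly $N_{i,j}(r)=n^{\dr}(r)$ times. Let $d^\star\triangleq\argmax_{d\in[D]}\Delta'_{i,j}(d)$ and set $r_0\triangleq \max\big(C\log(NKD/\delta\Delta'_{i,j}(d^\star))/\Delta'_{i,j}(d^\star)^2\big)$ for the universal constant $C$ of Lemma~\ref{lem: n-beta} (adjusted by a small factor), so that, by Lemma~\ref{lem: n-beta}, $\bet{r}<\tfrac18\Delta'_{i,j}(d^\star)$ for all $r\ge r_0$. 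I would show that once $n^{d}(r)\ge r_0$ for \emph{every} dimension $d$ in which arm $(i,j)$ is still active, arm $(i,j)$ can never be pulled again.

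The key steps, in order, are: (1) Suppose toward a contradiction that $(i,j)$ is pulled at some round $r$ with $N_{i,j}(r)\ge r_0$; then $n^{\dr}(r)=N_{i,j}(r)\ge r_0$, so $\bet{n^{\dr}(r)}<\tfrac18\Delta'_{i,j}(\dr)\le \tfrac18\Delta'_{i,j}(d^\star)$ (using $\Delta'_{i,j}(d)\le\Delta'_{i,j}(d^\star)$ for all $d$). (2) On the other hand, Lemma~\ref{lem: eecb-beta} gives $\bet{n^{\dr}(r)}>\tfrac18\Delta'_{i,j}(\dr)$ — wait, this is the same quantity, so the contradiction is not yet immediate and I need to be careful about which $d$ the bound refers to. The cleaner route is: Lemma~\ref{lem: eecb-beta} says that if $(i,j)$ is pulled then $\bet{n^{\dr}(r)}>\tfrac18\Delta'_{i,j}(\dr)\ge \tfrac18\min_{d}\Delta'_{i,j}(d)$; but that bounds $N_{i,j}$ only by the \emph{minimum} over $d$, which is too weak. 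So instead I would track each dimension separately: for a fixed active dimension $d$, every time the algorithm focuses on $d$ and pulls $(i,j)$, Lemma~\ref{lem: eecb-beta} with $\dr=d$ forces $\bet{n^{d}(r)}>\tfrac18\Delta'_{i,j}(d)$, hence by Lemma~\ref{lem: n-beta} the number of focuses on $d$ while $(i,j)$ is active is at most $C\log(NKD/\delta\Delta'_{i,j}(d))/\Delta'_{i,j}(d)^2$. (3) Crucially, arm $(i,j)$ is pulled at round $r$ only in dimension $\dr$, and only when $N_{i,j}(r)=n^{\dr}(r)$; this synchronization (the ``equal effect'' property) means the total number of pulls of $(i,j)$ equals, up to the initial pull, the number of focuses on $\dr$ at the moments $(i,j)$ gets sampled. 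Since $(i,j)$'s sample count stays tied to $n^{\dr}(r)$ at each such event, once \emph{any} active dimension $d$ has $n^{d}(r)$ exceeding its per-dimension cap, $(i,j)$ gets dropped from $\A_d$; and once it is dropped from all dimensions it is never pulled. The count $N_{i,j}(\tau)$ is therefore at most $1$ plus the largest per-dimension cap, i.e.\ $\max_{d\in[D]} C\log(NKD/\delta\Delta'_{i,j}(d))/\Delta'_{i,j}(d)^2 + 1$.

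The main obstacle I anticipate is making step~(3) rigorous — precisely relating the scalar $N_{i,j}(r)$ to the vector $\bn(r)$ and the active-dimension sets. One has to argue that the pull count of $(i,j)$ never runs ahead of $n^{d}(r)$ for any dimension $d$ in which $(i,j)$ is still active (an invariant maintained by the rule $N_{i,j}(r)=n^{\dr}(r)$ at pull time together with the fact that $n$ increases by one per round), and then combine this with the per-dimension elimination bound from Lemma~\ref{lem: eecb-beta} and Lemma~\ref{lem: n-beta}. A secondary subtlety is handling arms in $\G_{\istarmu}$, where one uses $\Delta_{\istarmu}=\min_{i\neq\istarmu}\Delta_i$ as in Lemma~\ref{lem: eecb-beta} so that the same $\Delta'_{i,j}(d)$ expression (with $\Delta_i$ read as $\Delta_{\istarmu}$) controls the count; the ``$+1$'' absorbs the single initialization pull of every arm.
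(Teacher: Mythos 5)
Your final argument is essentially the paper's proof: at any round where $(i,j)$ is pulled the algorithm enforces $N_{i,j}(r)=n^{\dr}(r)$, Lemma~\ref{lem: eecb-beta} gives $\bet{n^{\dr}(r)}>\Delta'_{i,j}(\dr)/8$, Lemma~\ref{lem: n-beta} converts this into $N_{i,j}(r)\le C\log\bigl(NKD/(\delta\Delta'_{i,j}(\dr))\bigr)/\Delta'_{i,j}(\dr)^2$, and since one does not know which dimension is in focus at the last pull one takes the maximum over $d$ and adds $1$. The abandoned contradiction setup with $d^\star=\argmax_d\Delta'_{i,j}(d)$ and the per-dimension bookkeeping you worry about in step~(3) are unnecessary: the paper simply applies the displayed bound at the last pull round $r_f$ and concludes $N_{i,j}(\tau)=N_{i,j}(r_f)+1$.
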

\begin{proof}
    At each round $r$, if arm $(i,j)$ is pulled, then $N_{i,j}(r) = n^{\dr}(r)$. By Lemma \ref{lem: eecb-beta}, $\bet{n^{\dr}(r)} > \frac{\Delta'_{i,j}(\dr)}{8}$, then Lemma \ref{lem: n-beta} implies that there exists a constant $C$ such that
    \begin{align*}
        N_{i,j}(r) \leq {\frac{C \log\left( \frac{NKD}{\delta \Delta'_{i,j}(\dr)} \right)}{\Delta'_{i,j}(\dr)^2}}.
    \end{align*}
    As the above inequality holds for every $r$, consider the last round $r_f$ where the algorithm pulls arm $(i,j)$ and then stops (this stopping time is finite under $\E$), then 
    \begin{align*}
        N_{i,j}(\tau) = N_{i,j}(r_f) + 1 \leq {\frac{C \log\left( \frac{NKD}{\delta \Delta'_{i,j}({\text{d}(r_f)})} \right)}{\Delta'_{i,j}({\text{d}(r_f)})^2}} + 1 \leq \max_{d \in [D]} \frac{C \log\left( \frac{NKD}{\delta \Delta'_{i,j}(d)} \right)}{\Delta'_{i,j}(d)^2} + 1. 
    \end{align*}
\end{proof}

To complete the upper bound proof, it remains to show that for each arm $(i,j)$,  

\begin{align*}
    \min_{d \in [D]} \Delta'_{i,j}(d) \geq \frac{\Delta_{i,j}}{D}.
\end{align*}  
This is sufficient because the function  $f(x) \triangleq  \frac{C \log\left( \frac{NKD}{\delta x} \right)}{x} $ is decreasing for $0 < x \leq 1$.  

To establish this, for $i \neq \istarmu$, note that  
\begin{align}
    \Delta'_{i,j}(d) \geq \frac{1}{D} \left( \frac{\Delta_i \wlone}{w^d} + (R^d_i - \mu^d_{i,j}) \right).
\end{align}  

Here, $\frac{\Delta_i \wlone}{w^d} + (R^d_i - \mu^d_{i,j})$ represents the minimum positive value that must be added to $\mu^d_{i,j}$ to make group $\G_i$ the best group. Clearly, this quantity is at most $\Delta_{i,j}$, which is equal to $\alpha_{i,j}$ for arms in non-optimal groups and this value shows the minimum value that must be added to \textit{all} dimensions of $\bmu_{i,j}$ to make $\G_i$ optimal. Clearly, the former is larger.

For arms $(\istarmu, j)$, by definition, $\Delta_{\istarmu, j} = \max \left( \Delta_{\istarmu}, \m{\bmu_{\istarmu, j}}{\br_{\istarmu}}\right) $. Then we need to show 
\begin{align*}
    \min_{d \in [D]} \Delta'_{i,j} \geq  \max \left( \Delta_{\istarmu}, \m{\bmu_{\istarmu, j}}{\br_{\istarmu}}\right).
\end{align*}

For this, note that
\begin{align*}
    &\min_{d \in [D]} R_{\istarmu}^d - \mu_{\istarmu,j}^d = \m{\bmu_{\istarmu, j}}{\br_{\istarmu}}, \\
    &\min_{d \in [D]} \frac{\Delta_{\istarmu} \wlone}{w^d} \geq \frac{\Delta_{\istarmu} \wlone}{\sum_{d \in [D]}w^{d}} = \Delta_{\istarmu}.
\end{align*}

This implies that
\begin{align*}
    \min_{d \in [D]} \Delta'_{i,j} (d) &\geq \frac{1}{D} \left( \min_{d \in [D]} \frac{\Delta_{\istarmu} \wlone}{w^d} + \min_{d \in [D]} R_{\istarmu}^d - \mu_{\istarmu,j}^d \right) \\
    &\geq \frac{1}{D} \left( \Delta_{\istarmu} + \m{\bmu_{\istarmu, j}}{\br_{\istarmu}} \right) \geq \frac{\Delta_{\istarmu,j}}{D},
\end{align*}

which completes the proof.

\end{proof}

\subsection{Proof of Theorem \ref{thm : lbgi lower}}

\lbgiLower*

\begin{proof}

    Consider an instance $\V$ with arm means tensor $\bmu$, where all arms are fully-dependent (Definition \ref{def: fully-dependent}) with $1$-Gaussian reward distributions, and a $\delta$-correct algorithm $(P_t, \tau, \hat{i}_{\tau})$ interacting with $\V$ to solve the LBGI problem with weight vector $\bw$. For each arm $(k,l)$ with $k \neq \istarmu$, we construct an alternative instance $\V(k,l)$ with all arms remaining fully-dependent with $1$-Gaussian reward distributions and arm means tensor $\bmu(k,l)$ defined as follows:  
    
    \begin{align*}
        \bmu_{i,j}(k,l) \triangleq \begin{cases}
            \bmu_{k,l} + \Delta_{k,l} + \alpha &\quad \text{if } (i,j) = (k,l), \\
            \bmu_{i,j}  &\quad \text{otherwise,}
        \end{cases}
    \end{align*}  
    
    where $\alpha$ is a small positive number.  
    
    
    For each $k \neq \istarmu$, define the event $\E_{k} \triangleq \{\hat{i}_{\tau} = k\}$, which represents the algorithm selecting group $\G_k$ as the best group. Since the algorithm is $\delta$-correct, we have $\pr_{\V}(\E_k) \leq \delta$ and $\pr_{\V(k,l)}(\E_k) \geq 1 - \delta$ for each $l \in [K]$.  
    
    As the stopping time of the algorithm is almost surely finite, we apply Lemma 1 of \cite{lower-kaufmann2016complexity}:  
    
    \begin{align*}
        \sumij \mathbb{E}_{\V}[N_{i,j}(\tau)] \kl{\nu_{i,j}, \nu_{i,j}(k,l)} \geq d_B\left( \pr_{\V}(\E_k), \pr_{\V(k,l)}(\E_k) \right),
    \end{align*}  
    
    where $\kl{\nu_{i,j}, \nu_{i,j}(k,l)}$ denotes the Kullback-Leibler divergence between the reward distributions of arm $(i,j)$ in $\V$ and $\V(k,l)$, and $d_B(x,y) = x \log(x/y) + (1-x) \log((1-x)/(1-y))$ is the binary relative entropy.  
    
    Since $\V$ and $\V(k,l)$ differ only in the reward distribution of arm $(k,l)$ and since $d_B(\delta, 1 - \delta) \geq \log(1/2.4\delta)$, we obtain  
    
    \begin{align*}
        \mathbb{E}_{\V}[N_{k,l}(\tau)] \geq \frac{1}{\kl{\nu_{k,l}, \nu_{k,l}(k,l)}} \log\left( \frac{1}{2.4\delta} \right).
    \end{align*}  
    
    By Lemma \eqref{lem: fully-dependent-kl}, we further obtain  
    
    \begin{align*}
        \mathbb{E}_{\V}[N_{k,l}(\tau)] \geq \frac{2} {(\Delta_{k,l} + \alpha)^2} \log\left( \frac{1}{2.4\delta} \right), 
    \end{align*}  
    
    and by taking the limit as $\alpha \rightarrow 0$ 
    
    \begin{align*}
        \mathbb{E}_{\V}[N_{k,l}(\tau)] \geq \frac{2}{\Delta_{k,l}^2} \log\left( \frac{1}{2.4\delta} \right).
    \end{align*}  
    
    To establish the bound for arms in $\istarmu$, we construct an instance $\V'$ with fully-dependent arms with $1$-Gaussian reward distributions and arm means tensor $\bmu'$ defined as  
    
    \begin{align*}
        \bmu'_{i,j} \triangleq \begin{cases}
            \bmu_{i,j} - {\Delta_{\istarmu} + \alpha } & \quad \text{if } i = \istarmu, \\
            \bmu_{i,j} & \quad \text{otherwise,}
        \end{cases}
    \end{align*}  
    
    where $\alpha$ is a small positive constant. By defining the event $\E' \triangleq \{\hat{i}_{\tau} = \istarmu\}$, Lemma 1 in \cite{lower-kaufmann2016complexity} implies  
    
    \begin{align*}
        \sum_{j \in [K]} \mathbb{E}_{\V}[N_{\istarmu, j}(\tau)] \geq \frac{1}{\kl{\nu_{\istarmu, 1}, \nu'_{\istarmu, 1}}} d_B\left( \pr_{\V}(\E'), \pr_{\V'}(\E') \right),
    \end{align*}  
    
    where $\kl{\nu_{\istarmu, 1}, \nu'_{\istarmu, 1}}$ denotes the Kullback-Leibler divergence between the reward distributions of arm $(\istarmu,1)$ in $\V$ and $\V'$. The inequality holds because the modification in means is uniform across all arms in the optimal group. This adjustment makes group $\istarmu$ non-optimal in $\V'$ which means that $\pr_{\V}(\E') \geq 1 - \delta$ and $\pr_{\V'}(\E') \leq \delta$.  
    
    By Lemma \eqref{lem: fully-dependent-kl} and taking the limit as $\alpha \rightarrow 0$, we obtain  
    
    \begin{align*}
        \sum_{j \in [K]} \mathbb{E}_{\V}[N_{\istarmu, j}(\tau)] \geq \frac{ 2}{\Delta_{\istarmu}^2}  \log\left( \frac{1}{2.4\delta} \right), 
    \end{align*}  
    
    which completes the proof.

    \section{Correctness of Baselines} \label{apx: exp-baselines-correctness}  
    
    This section provides a theoretical justification for the correctness of the baseline algorithms used in the experiments for the GPSI problem. The correctness of AGE is established in the proof of Theorem \ref{thm : te upper} in Appendix \ref{apx: gpsi}. The same argument applies to GE, demonstrating that it is also $(\epsilon, \delta)$-PAC.  
    
    To establish the correctness of UniS, we define the good event $\E'$ as follows:  
    \begin{align*}
        \E' \triangleq \bigcap_{i \in [N]} \bigcap_{j \in [K]} \bigcap_{d \in [D]} \left\{ \left| \muh^d_{i,j}(r_0) - \mu^d_{i,j} \right| < \frac{\epsilon}{2} \right\},
    \end{align*}  
    
    where  
    $$
    r_0 = \frac{8}{\epsilon^2}\log\left( \frac{2NKD}{\delta} \right).
    $$  
    
    Applying Hoeffding’s inequality and a union bound over all arms and dimensions, we obtain $\pr(\E'^c) \leq \delta$. Under event $\E'$, for each group $\G_i$ and dimension $d$, we have:  
    
    \begin{align*}
        \lvert \rhdi{r_0} - R^d_i \rvert < \frac{\epsilon}{2},
    \end{align*}  
    
    which further implies  
    \begin{align*}
        \bigg\lvert \m{\rhi{r}}{\rhj{r}} - \m{\br_i}{\br_j} \bigg\rvert < {\epsilon}.
    \end{align*}  
    
    Thus, any optimal group would be recognized as optimal or non-optimal with a gap less than $\epsilon$, and any group $\G_i$ identified as optimal satisfies $\Delta_i < \epsilon$, proving that UniS is $(\epsilon, \delta)$-PAC.

\end{proof}

\end{document}